\author{%
Etienne Boursier\\
Centre Borelli, ENS Paris-Saclay, France\\
\texttt{etienne.boursier1@gmail.com}
	\AND
  Tristan Garrec\\
  Centre Borelli, ENS Paris-Saclay, France \\
  EDF Lab, Palaiseau, France\\
  \texttt{tristan.garrec@ut-capitole.fr} 
   \And
   Vianney Perchet \\
  CREST, ENSAE Paris, France \\ Criteo AI Lab, Paris, France \\
   \texttt{vianney@ensae.fr}
   \AND
   Marco Scarsini \\
  Department of Economics and Finance, LUISS, Rome \\
   \texttt{marco.scarsini@luiss.it}
}
\appto\normalsize{\belowdisplayskip=\belowdisplayshortskip}
\appto\small{\belowdisplayskip=\belowdisplayshortskip}
\numberwithin{equation}{section}  
\Crefname{app}{Appendix}{Appendices}
\crefname{algorithm}{Algorithm}{Algorithms}
\newcommand{\ie}{i.e.,\ }
\newcommand{\eg}{e.g.,\ }
\newcommand{\eps}{\varepsilon}
\newcommand{\R}{\mathbb{R}}
\newcommand{\N}{\mathbb{N}}
\DeclareMathOperator*{\argmax}{arg\,max}
\DeclareMathOperator{\bigoh}{\mathcal{O}}
\newcommand{\smalloh}{o}
\DeclareMathOperator{\card}{card}
\DeclareMathOperator{\ex}{\mathbb{E}}
\DeclareMathOperator{\expect}{\mathbb{E}}
\DeclareMathOperator{\one}{\mathds{1}}
\DeclareMathOperator{\prob}{\mathbb{P}}
\DeclareMathOperator{\supp}{supp}
\DeclarePairedDelimiter{\braces}{\{}{\}}
\DeclarePairedDelimiter{\bracks}{[}{]}
\DeclarePairedDelimiter{\parens}{(}{)}
\DeclarePairedDelimiterX{\braket}[2]{\langle}{\rangle}{#1,#2}
\DeclarePairedDelimiterX{\inner}[2]{\langle}{\rangle}{#1,#2}
\DeclarePairedDelimiterX{\setdef}[2]{\{}{\}}{#1:#2}
\DeclarePairedDelimiterXPP{\probof}[1]{\prob}{(}{)}{}{%

#1}
\DeclarePairedDelimiterXPP{\exof}[1]{\ex}{[}{]}{}{%

#1}
\newcommand{\debug}[1]{#1}
\theoremstyle{plain}
\newtheorem{theorem}{Theorem}
\newtheorem*{corollary*}{Corollary}
\newtheorem{lemma}[theorem]{Lemma}
\newtheorem{proposition}[theorem]{Proposition}
\theoremstyle{definition}
\newtheorem{definition}[theorem]{Definition}
\newtheorem*{definition*}{Definition}
\newtheorem*{hypothesis*}{Hypothesis}
\theoremstyle{remark}
\newtheorem{remark}[theorem]{Remark}
\newtheorem*{remark*}{Remark}
\newtheorem*{notation*}{Notational remark}
\numberwithin{theorem}{section}
\newcommand{\cdf}{\debug F}
\DeclareMathOperator{\E}{\mathbb{E}}
\DeclareMathOperator{\proba}{\mathbb{P}}
\newcommand{\rv}{\debug X}
\newcommand{\payv}{\debug v}
\newcommand{\payw}{\debug w}
\newcommand{\duration}{\debug x}
\newcommand{\timet}{\debug t}
\newcommand{\timeT}{\debug T}
\newcommand{\rew}{\debug r} 
\newcommand{\reward}{\debug U} 
\newcommand{\prof}{\debug c}
\newcommand{\profopt}{\debug {c^\ast}}
\newcommand{\profrew}{\debug \gamma}
\newcommand{\ubrv}{\debug C}
\newcommand{\lbrew}{\debug E}
\newcommand{\ubrew}{\debug D}
\newcommand{\pr}{\debug \lambda}
\newcommand{\exprv}{\debug S}
\newcommand{\totdur}{\debug \theta}
\newcommand{\reg}{\debug R}
\newcommand{\nume}{\debug \nu}
\newcommand{\den}{\debug \mu}
\newcommand{\quant}{\debug Q}
\newcommand{{\noisedrew}}{\debug Y}
\newcommand{\noise}{\debug \eps}
\newcommand{\bin}{\debug B}
\newcommand{\numbin}{\debug M}
\newcommand{\rewest}{\debug {\hat{\rew}}}
\newcommand{\opzero}{\debug \Phi}
\newcommand{\opzeroemp}{\debug {\hat{\opzero}}}
\newcommand{\opzeroest}{\debug {\bar{\opzero}}}
\newcommand{\profemp}{\debug {\hat{\prof}}}
\newcommand{\binsize}{\debug {h}}
\newcommand{\holdexp}{\debug \beta}
\newcommand{\holdcst}{\debug L}
\newcommand{\confrew}{\debug \eta}
\newcommand{\confprof}{\debug \xi}
\newcommand{\refuse}{\debug A}
\newcommand{\profrewest}{\debug {\hat{\profrew}}}
\newcommand{\countbin}{\debug N}
\newcommand{\thresh}{\debug s}
\newcommand{\threshopt}{\debug {\thresh^\ast}}
\newcommand{\profset}{\debug p}
\newcommand{\threshset}{\mathcal{\debug S}}
\newcommand{\confprofset}{\debug \zeta}
\newcommand{\numstage}{\debug S}
\title{Making the most of your day: online learning for optimal allocation of time}
\begin{document}
\maketitle

\begin{abstract}
We study online learning for optimal allocation when the resource to be allocated is time. 
An agent receives task proposals sequentially according to a Poisson process and can either accept or reject a proposed task. 
If she accepts the proposal, she is busy for the duration of the task and obtains a reward that depends on the task duration. 
If she rejects it, she remains on hold until a new task proposal arrives.
We study the regret incurred by the agent, first when she knows her reward function but does not know the distribution of the task duration, and then when she does not know her reward function, either. 
%
This natural setting bears similarities with contextual (one-armed) bandits, but with the crucial difference that the normalized reward associated to a context depends on the whole distribution of contexts.
\end{abstract}

\section{Introduction}\label{se:introduction}

\paragraph{Motivation.} 
A driver filling her shift with rides, a landlord renting an estate short-term, an independent deliveryman, a single server that can make  computations online, a communication system receiving a large number of calls, etc. all face the same trade-off. 
There is a unique resource that can be allocated to some tasks/clients for some duration. 
The main constraint is that, once it is allocated, the resource becomes unavailable for the whole duration. 
As a consequence, if a ``better'' request arrived during this time, it could not be accepted and would be lost.  
Allocating the resource for some duration has some cost but generates some rewards -- possibly both unknown and random. 
For instance, an estate must be cleaned up after each rental, thus generating some fixed costs; on the other hand, guests might break something, which explains why these costs are both random and unknown beforehand. Similarly, the relevance of a call is unknown beforehand. 
Concerning duration, the shorter the request the better (if the net reward is the same). 
Indeed, the resource could be allocated twice in the same amount of time. 

The ideal request would therefore be of short duration and large reward; this maximizes the revenue per time. 
A possible policy could be to wait for this kind of request, declining the other ones (too long and/or less profitable). 
On the other hand, such a request could be very rare. So it might be more rewarding in the long run to accept any request, at the risk of ``missing'' the ideal one.

Some clear trade-offs arise. The first one is between a greedy policy that accepts only the highest profitable requests -- at the risk of staying idle quite often -- and a safe policy that accepts every request -- but unfortunately also the non-profitable ones. 
The second trade-off concerns the learning phase; indeed, at first and because of the randomness, the actual net reward of a request is unknown and must be learned on the fly. 
The safe policy will gather a lot of information (possibly at a high cost) while the greedy one might lose some possible valuable information for the long run (in trying to optimize the short term revenue).

We adopt the perspective of an agent seeking to optimize her earned income for some large duration. 
The agent receives task proposals sequentially, following a Poisson process. When a task is proposed, the agent observes its expected duration and can then either accept or reject it.  
If she accepts it,  she cannot receive any new proposals for the whole duration of the task.  
At the end of the task she observes her reward, which is a function of the duration of the task.
If, on the contrary, she rejects the task, she remains on hold until she receives a new task proposal.

The agent's policies are evaluated in terms of their expected regret, which is the difference  between the cumulative rewards  obtained until $\timeT$ under the optimal policy and under the implemented agent policy (as usual the total length could also be random or unknown \citep{anytime}). 
In this setting, the ``optimal'' policy is within the class of policies that accept -- or not -- tasks whose length belongs to some given acceptance set (say, larger than some threshold, or in some  specific Borel subset, depending on the regularity of the reward function).

\paragraph{Organization and main contributions.}

In \cref{se:model}, we formally introduce  the model and the problem faced by an oracle who knows the distribution of task durations as well as the reward function (quite importantly, we emphasize again that the oracle policy must be independent of the realized rewards). 
Using  continuous-time dynamic programming principles, we construct a quasi-optimal policy in terms of accepted and rejected tasks: this translates into a single optimal threshold for the ratio of the reward to the duration, called the profitability function. 
Any task with a profitability above this threshold is accepted, and the other ones are declined.
As a benchmark, we first assume in \cref{subsec:unknown-distrib} that  the agent knows the reward function $\rew(\cdot)$, but ignores the distribution of task durations. The introduced techniques can be generalized, in the following sections, to further incorporate estimations of $\rew(\cdot)$. 
In that case, our base algorithm has a regret  scaling as $\bigoh(\sqrt{\timeT})$; obviously, this cannot be improved without additional assumptions, ensuring minimax optimality. 
In \cref{subsec:bandit}, the reward function is not known to the agent anymore and the reward realizations are assumed to be noisy. 
To get  non-trivial estimation rates, regularity -- \ie  $(\holdcst,\holdexp)$-H\"older -- of the reward function is assumed. 
Modifying the basic algorithm to incorporate non-parametric estimation of $\rew$ yields  a regret scaling as $\bigoh(\timeT^{1-\eta}\sqrt{\ln\timeT})$ where $\eta =\holdexp/(2\holdexp+1)$. As this is the standard error rate in classification \citep{tsybakov2006statistique}, minimax optimality (up to  $\log$-term) is achieved again.  
Finally, our different algorithms are empirically evaluated on simple toy examples in \cref{sec:expe}. Due to space constraints, all the proofs are deferred to the Appendix.

\paragraph{Related work.} 

As data are gathered sequentially, our problem bears similarities with online learning and multi-armed bandit \citep{bubeck2012regret}. 
The main difference with multi-armed bandit or resource allocation problems \citep{fontaine} is that the agent's only resource is her time, which has to be spent wisely and, most importantly, saving it actually has some unknown value. 
The problem of time allocation actually goes way back.
In economic theory, Becker's seminal paper \citep{becker1965atheory} evaluates the full costs of non-working activities as the sum of their market prices and the forgone value of the time used to enjoy the activities. 
Various empirical works have followed \citep[see, \eg][]{thomas1991theallocation,chabris2009theallocation}.

The problem of online scheduling has been studied in several possible variations \citep[see, \eg][for a survey]{pruhs2004online}. 
Various subsequent papers consider scheduling models that combines online and stochastic aspects \citep[see, \eg][]{MegUetVre:MOR2006,ChoLiuQueSim:OR2006,Vredeveld:2012tw,MarRutVre:AOA2012,SkuSviUet:MOR2016}. Yet these models are different as they aim at minimizing the makespan of all received jobs, while actions here consist in accepting/declining tasks.
%
%
In online admission control, a controller receives requests and decides on the fly to accept/reject them. However, the problem of admission control is more intricate since a request is served through some path, chosen by the controller, in a graph. Even solving it offline thus remains a great challenge \citep{wu2001admission, leguay2016admission}. 

An online model of reward maximization over time budget has been proposed by \citet{cayci2019learning,cayci2020budget}, where the agent does not observe the duration of a task before accepting it. 
As a consequence, the optimal policy always chooses the same action, which is maximizing the profitability. 
Since the duration of a task is observed before taking the decision in our problem, the decision of the optimal policy depends on this duration, used as a covariate. 
\citet{cayci2019learning} thus aim at determining the arm with the largest profitability, while our main goal is here to estimate the profitability threshold from which we start accepting the tasks. The considered problems and proposed solutions thus largely differ in these two settings.

Our problem is strongly related to contextual multi-armed bandits, where each arm produces a noisy reward that depends on an observable context. Indeed,  the agent faces a one arm contextual bandit problem \citep{Sar:AS1991}, with the crucial difference that the normalized reward associated to a context depends on the whole distribution of contexts (and not just the current context).
The literature on contextual bandits, also known as bandits with covariates, actually goes back to \citet{Woo:JASA1979}
and has seen extensive contributions in different settings \citep[see, \eg][]{YanZhu:AS2002,WanKulPoo:IEEETAC2005,rigollet2010nonparametric,
GolZee:AAP2009,PerRig:AS2013}.

This problem is also related to bandits with knapsacks \citep[see][Chapter 10]{slivkins2019} introduced by~\citet{badanidiyuru2013} and even more specifically to contextual bandits with knapsacks \citep{badanidiyuru2014, agrawal2016}, where pulling an arm consumes a limited resource. Time is the resource of the agent here, while  both time and resource are well separated quantities in bandits with knapsacks. Especially, bandits with knapsacks assume the existence of a null arm, which does not consume any resource. This ensures the feasibility of the linear program giving the optimal fixed strategy. Here, the null arm (declining the task) still consumes the waiting time before receiving the next task proposal. Bandits with knapsacks strategies are thus not adapted to our problem, which remains solvable thanks to a particular problem structure.
The problem of online knapsacks is also worth mentioning \citep{noga2005online,chakrabarty2008online,han2009online,bockenhauer2014online}, where the resources consumed by each action are observed beforehand. 
This is less related to our work, as online knapsacks consider a competitive analysis, whereas we here aim at minimizing the regret with stochastic contexts/rewards.

\section{Model and benchmark}\label{se:model}

\subsection{The problem: allocation of time}\label{subsec:driver-problem}

All the different notations used in the paper are summarized at the beginning of the Appendix to help the reader.
We consider an agent who sequentially receives task proposals and decides whether to accept or decline them on the fly. 
The durations of the proposed tasks are assumed to be i.i.d. with an unknown law, and  $\rv_i$ denotes the duration of the $i$-th task.  
If this task is accepted, the agent earns some  reward $Y_i$ 
 with expectation $\rew(\rv_i)$. 
The profitability is then defined as the function $x \mapsto \rew(x)/x$.
We emphasize here that $Y_i$ is not observed before accepting (or actually completing) the $i$-th task and that  the expected reward function $\rew(\cdot)$ is  unknown to the agent at first. 
If task $i$ is accepted, the agent cannot accept any new proposals for the whole duration $\rv_i$.
We assume in the following that the function $\rew$ is bounded in $[\lbrew,\ubrew]$ with $\lbrew \leq 0 \leq \ubrew$, and that the durations $\rv_i$ are upper bounded by $\ubrv$.

After completing the $i$-th task, or after declining it, the agent is on hold, waiting for a new proposal. 
We assume that idling times  -- denoted by $\exprv_i$ -- are also i.i.d.,  following some exponential law of parameter $\pr$. 
This parameter is supposed to be known beforehand as it has a small impact on the learning cost (it can be quickly estimated, independently of the agent's policy). 
An  equivalent formulation of the task arrival process is that proposals follow a Poisson process (with intensity $\pr$) and the agent does not observe task proposals while occupied. 
This is a mere consequence of the memoryless property of Poisson processes.

The agent's objective is to maximize the expected sum of rewards obtained by choosing an appropriate acceptance policy. 
Given the decisions $(a_i)_{i\geq 1}$ in $\{0,1\}$ (decline/accept), the total reward accumulated by the agent after the first $n$ proposals is equal to 
$\sum_{i=1}^n Y_i a_i$  and the required amount of time for this is $\mathcal{T}_n \coloneqq \sum_{i=1}^n \exprv_i + \rv_i a_i$. 
This amount of time is random and strongly depends on the policy. As a consequence, we consider that the agent optimizes the cumulative reward up to time $\timeT$, so that the number of received tasks proposals is random and equal to 
$\totdur \coloneqq \min\{n \in \N \mid \mathcal{T}_n> \timeT\}.$
Mathematically, a policy of the agent is a function $\pi$ -- taking as input a proposed task $X$, the time $t$ and the history of observations $\mathcal{H}_t \coloneqq (\rv_{i}, Y_i a_i)_{i: \mathcal{T}_i < t}$ -- returning the decision $a \in \left\{ 0, 1 \right\}$ (decline/accept). In the following, the optimal policy refers to the strategy $\pi$ maximizing the expected reward $\reward_\pi$ at time $T$, given by
\begin{equation} 
\label{eq:defnreward}
\reward_\pi(T) = \expect\left[ \sum_{n=1}^\theta r(X_n) \pi(X_n, t_n, \mathcal{H}_{t_n})\right],
\end{equation}%
where $t_n \coloneqq \exprv_n + \sum_{i=1}^{n-1} \exprv_i + \rv_i a_i$ is the time at which the $n$-th task is proposed.

We allow a slight boundary effect, \ie the ending time can slightly exceed $T$, because the last task may not end exactly at time $\timeT$. 
A first alternative would be to compute the cumulative revenue over completed tasks only, and another  alternative would be to attribute the relative amount of time before~$T$ spent on the last task. 
Either way, the boundary effect is of the order of a constant and has a negligible impact as $\timeT$ increases.

\subsection{The benchmark: description of the optimal policy}\label{subse:benchmark}

The benchmark to which the real agent is compared is an oracle who knows beforehand both the distribution of tasks and the reward function. It is easier to describe the reward of the optimal policy than the policy itself. 
Indeed, at the very last time $\timeT$, the agent cannot accumulate any more reward, hence the remaining value is 0. We then compute the reward of the policy backward, using continuous dynamic programming principles.   
Formally, let  $\payv(\timet)$ denote the ``value'' function, \ie the expected reward that the optimal policy will accumulate on the remaining time interval $[\timet,\timeT]$ if the agent is on hold at this very specific time $\timet \in [0,\timeT]$. 
As mentioned before, we assume the boundary condition $\payv(\timeT)=0$. 
The next proposition gives  the dynamic programming equation satisfied by $\payv(\timet)$ (with the notation  $(z)_+ = \max\{z, 0\}$ for any $z \in \R$), as well as a simple function approximating this value.

\begin{proposition}\label{prop:dynamic-programming}
The value function $\payv$ satisfies the dynamic programming equation
\begin{equation}
\label{eq:dynprog}	
\begin{cases}
        \payv'(\timet) = -\pr \E\left[\left(\rew(\rv) + \payv(\timet + \rv) - \payv(\timet)\right)_+\right] \, &\text{ for all } \timet < \timeT,\\
       \payv(t) = 0 \, &\text{ for all } \timet \geq \timeT.
\end{cases}
\end{equation}

If $\rv\le\ubrv$, then its  solution is bounded by the affine function $\payw : \timet \mapsto \profopt(\timeT-\timet)$ for any $t \in [0, T]$ as follows
\begin{equation}
\label{eq:w}
w(t-\ubrv) \geq v(t) \geq w(t),
\end{equation}
where $\profopt$ is the unique root of the function 
\begin{equation}
\label{eq:opzero}
\opzero : 
\begin{array}{l}
\R_+ \to \R\\
\prof \mapsto \pr \E\left[\left(\rew(\rv) - \prof \rv \right)_+\right] - \prof.
\end{array}
\end{equation}
\end{proposition}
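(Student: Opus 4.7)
I would prove the two parts separately.

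\emph{Bellman equation.} Conditioning on the next arrival time $\exprv_{1}\sim\exp(\pr)$ after $t$ and on the corresponding task duration $\rv_{1}$ (with the convention $\payv\equiv 0$ on $[\timeT,\infty)$), Bellman's principle yields the integral equation
\begin{equation*}
\payv(t)\;=\;F(\payv)(t)\;\coloneqq\;\int_{0}^{\timeT-t}\pr e^{-\pr s}\,\E\bigl[\max\bigl(\payv(t+s),\,\rew(\rv)+\payv(t+s+\rv)\bigr)\bigr]\,ds.
\end{equation*}
Substituting $u=t+s$, differentiating in $t$, and using $\max(a,b)-a=(b-a)_{+}$ produces the announced ODE; the terminal condition $\payv(\timeT)=0$ is immediate.

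\emph{Bounds via sub/supersolutions.} The operator $F$ is obviously monotone, so any subsolution $f\leq F(f)$ satisfies $f\leq\payv$ (iterate: $f\leq F(f)\leq F^{2}(f)\leq\cdots\uparrow\payv$), and symmetrically for supersolutions. I would check that $\payw(t)=\profopt(\timeT-t)$ is a subsolution and that $g(t)=\profopt(\timeT-t+\ubrv)$ is a supersolution. Splitting the integrand according to whether $t+s+\rv\leq\timeT$ or not: inside the horizon one has $(\rew(\rv)+f(t+s+\rv)-f(t+s))_{+}=(\rew(\rv)-\profopt\rv)_{+}$ for both test functions; at the boundary, the truncation $f(t+s+\rv)=0$ \emph{enlarges} the positive part for $\payw$ (since $\profopt(\timeT-t-s)<\profopt\rv$ when $\rv>\timeT-t-s$), which helps the subsolution direction, while the extra $\ubrv$ in $g$ ensures $\profopt(\timeT-t-s+\ubrv)\geq\profopt\rv$ (using $\rv\leq\ubrv$), so the truncation \emph{shrinks} the positive part, which helps the supersolution direction. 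A short explicit computation of $\int_{0}^{\timeT-t}\pr e^{-\pr s}(\timeT-t-s)\,ds$ combined with the defining identity $\profopt=\pr\,\E[(\rew(\rv)-\profopt\rv)_{+}]$ then gives $F(\payw)\geq\payw$ and $F(g)\leq g$, hence the sandwich $\payw(t)\leq\payv(t)\leq g(t)=\payw(t-\ubrv)$.

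\emph{Main obstacle.} The only delicate point is the bookkeeping at the terminal time $\timeT$: one must check that the truncation $\payv\equiv 0$ on $[\timeT,\infty)$ works in the right direction in each of the two comparisons, and that the slack $\ubrv$ in the upper envelope is exactly large enough to absorb the worst-case overshoot $\rv\leq\ubrv$. The hypothesis $\rv\leq\ubrv$ is used in precisely this place, and without it the supersolution candidate would fail near~$\timeT$.
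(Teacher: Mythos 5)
Your proposal is correct, and for the comparison part it takes a genuinely different route from the paper. For the ODE, your first-arrival (renewal) integral equation and the paper's small-interval expansion of $\payv(\timet)$ via the event ``exactly one proposal in $[\timet,\timet+h]$'' are equivalent in substance; both rest on the same informal invocation of the dynamic programming principle, and differentiating your fixed-point identity does give \eqref{eq:dynprog}. For the bounds, the paper argues probabilistically: it observes that $\payw$ solves the same ODE without boundary condition, interprets $\payw$ as the value of an ``alternative program'' in which finishing at $t>\timeT$ costs $\profopt(t-\timeT)$, and gets $\payv\geq\payw$ by comparing any strategy's value across the two programs and $\payw(\cdot-\ubrv)\geq\payv$ by a time-translation/optimality argument. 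You instead run an analytic comparison principle on the Bellman operator $F$: your boundary bookkeeping is right (truncation at $\timeT$ enlarges the positive part for the candidate $\payw$, and the slack $\ubrv$ together with $\rv\leq\ubrv$ makes it shrink for $g(t)=\profopt(\timeT-t+\ubrv)$), and the explicit integral $\int_0^{\timeT-t}\pr e^{-\pr s}\bigl(\profopt(\timeT-t-s)+\profopt/\pr\bigr)ds=\profopt(\timeT-t)$ closes both inequalities. One small point you should make explicit: the step ``$f\leq F(f)\Rightarrow F^n(f)\uparrow\payv$'' is not purely a consequence of monotonicity; you also need $F^n(f)\to\payv$, which follows because $F$ is a sup-norm contraction with constant $1-e^{-\pr\timeT}<1$ (so $\payv$ is its unique fixed point). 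This is routine but plays the same role as the Picard--Lindel\"of uniqueness step in the paper. What your route buys is a self-contained, purely analytic sandwich with the role of $\ubrv$ isolated at the terminal boundary; what the paper's route buys is an operational interpretation of $\payw$ (value of a penalized program) that it reuses immediately in the proof of Theorem~2.2.
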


The constant $\profopt$ represents the optimal reward per time unit, hereafter referred to as the \textit{optimal profitability threshold}. The function $\payw$ is the value function of the optimal policy when neglecting boundary effects.
The proof of \cref{prop:dynamic-programming} largely relies on the memoryless property of the idling times and determining a benchmark policy without this memoryless assumption becomes much more intricate.
Based on this, it is now possible to approach the optimal policy by only accepting task proposals with profitability at least $\profopt$. This results in a stationary policy, \ie its decisions depend neither on the time nor on past observations but only on the duration of the received task.

\begin{theorem}\label{thm:baseline}
The policy $\pi^*$ which accepts a task  with duration $\duration$ if and only if $\rew(\duration)\geq \profopt \duration$ is $\profopt \ubrv$-suboptimal, \ie $\reward_{\pi^*} \geq \max_\pi \reward_{\pi} - \profopt \ubrv$.
\end{theorem}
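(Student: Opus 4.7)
The plan is to sandwich $\reward_{\pi^*}$ and $\max_\pi \reward_\pi$ between two quantities that differ by exactly $\profopt \ubrv$. Since $\max_\pi \reward_\pi = \payv(0)$ by the very definition of the value function, the upper bound $\payv(0) \leq \payw(-\ubrv) = \profopt(\timeT + \ubrv)$ follows immediately from the affine bound \eqref{eq:w} of \cref{prop:dynamic-programming} evaluated at $t = 0$. It therefore suffices to prove the matching lower bound $\reward_{\pi^*} \geq \profopt \timeT$.

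For this lower bound, the key observation is that under $\pi^*$ the acceptance decision $a_i \coloneqq \mathds{1}\{\rew(\rv_i) \geq \profopt \rv_i\}$ is a deterministic function of $\rv_i$ alone, and past observations and time play no role. The triples $(\rv_i, \exprv_i, a_i)_{i \geq 1}$ are thus i.i.d., and each per-proposal net outcome $\rew(\rv_i) a_i - \profopt(\exprv_i + \rv_i a_i)$ has mean zero. Indeed, using $(\rew(\rv_1) - \profopt \rv_1)_+ = a_1(\rew(\rv_1) - \profopt \rv_1)$ and the defining identity $\opzero(\profopt) = 0$,
\begin{equation*}
\pr\, \E\bigl[a_1(\rew(\rv_1) - \profopt \rv_1)\bigr] = \profopt
\quad \Longleftrightarrow \quad
\E[\rew(\rv_1) a_1] = \profopt\bigl(\E[\rv_1 a_1] + 1/\pr\bigr) = \profopt \E[\exprv_1 + \rv_1 a_1].
\end{equation*}

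Wald's identity then applies cleanly: $\totdur$ is a stopping time for the i.i.d.\ sequence, bounded above by one plus the Poisson count of arrivals on $[0, \timeT]$, so $\E[\totdur] \leq \pr \timeT + 1 < \infty$, and per-step rewards are bounded by the hypotheses on $\rew$ and $\rv$. Hence
\begin{equation*}
\reward_{\pi^*} = \E\Bigl[\,\sum_{i=1}^\totdur \rew(\rv_i) a_i\,\Bigr] = \E[\totdur]\, \E[\rew(\rv_1) a_1] = \profopt\, \E[\mathcal{T}_\totdur] \geq \profopt \timeT,
\end{equation*}
the last inequality being by definition of $\totdur$ ($\mathcal{T}_\totdur > \timeT$). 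Combined with the upper bound, this yields $\reward_{\pi^*} \geq \profopt \timeT \geq \max_\pi \reward_\pi - \profopt \ubrv$.

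I do not expect a serious obstacle here: the core content is recognizing that $\opzero(\profopt) = 0$ is exactly the condition making the per-task increments $\rew(\rv_i) a_i - \profopt(\exprv_i + \rv_i a_i)$ a mean-zero i.i.d.\ sequence, after which the result is a one-line Wald-type computation. The only care needed is bookkeeping around the stopping time $\totdur$ and checking Wald's integrability hypotheses, both of which are routine given the boundedness assumptions.
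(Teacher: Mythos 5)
Your proof is correct, and it differs from the paper's in how the lower bound is obtained. Both arguments use the same upper bound, namely $\max_\pi \reward_\pi = \payv(0) \leq \payw(-\ubrv) = \profopt(\timeT+\ubrv)$ from \cref{prop:dynamic-programming}. For the matching bound $\reward_{\pi^*}\geq \profopt\timeT$, the paper stays inside the dynamic-programming framework of that proposition: it considers the ``alternative program'' in which finishing after $\timeT$ costs $\profopt$ per unit of overshoot, observes that the optimal policy there accepts a task $\rv$ at time $\timet$ iff $\rew(\rv)+\payw(\timet+\rv)\geq\payw(\timet)$, i.e.\ iff $\rew(\rv)\geq\profopt\rv$ (so it \emph{is} $\pi^*$), and concludes that the reward of $\pi^*$ in the original program is at least $\payw(0)=\profopt\timeT$. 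You instead bypass the interpretation of $\payw$ as a value function and give a direct renewal/Wald argument: under $\pi^*$ the triples $(\rv_i,\exprv_i,a_i)$ are i.i.d., the identity $\opzero(\profopt)=0$ is exactly the statement that $\E[\rew(\rv_1)a_1]=\profopt\,\E[\exprv_1+\rv_1 a_1]$, and two applications of Wald's identity (justified since $\totdur$ is a stopping time with $\E[\totdur]\leq\pr\timeT+1$ and the increments are bounded) give $\reward_{\pi^*}=\profopt\,\E[\mathcal{T}_\totdur]\geq\profopt\timeT$. Your route is more self-contained and elementary for this half: it needs nothing from the proof of \cref{prop:dynamic-programming} beyond the stated inequality \cref{eq:w}, and it makes explicit why $\profopt$ is the per-time reward of the stationary threshold policy. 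The paper's route is shorter given that the alternative-program machinery has already been set up, and it additionally explains \emph{why} $\pi^*$ is the natural candidate (it is optimal for the penalized program). Your stopping-time and integrability bookkeeping is adequate as written.
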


In the considered model, it is thus possible to compute a quasi-optimal online policy. 

\subsection{Online learning policies and regret}\label{se:learning}

Below we investigate  several contexts where the agent lacks information  about the environment, such  as the task durations distribution and/or the (expected) reward function. As mentioned before, the objective of the agent is to maximize the cumulative reward gathered until~$\timeT$ (up to the same boundary effect as the optimal policy) or equivalently to minimize the policy regret, defined for the policy $\pi$ as
\begin{equation}
\label{eq:regret}
\reg(\timeT) = \profopt\timeT - \reward_\pi(T).
\end{equation}
Note that $\reg(\timeT)$ is the difference between the expected rewards of strategies $\pi^*$ and $\pi$, up to some constant term due to the boundary effect.
\section{Warm up: known reward, unknown distribution}\label{subsec:unknown-distrib}

First, we assume that the reward function $\rew$ is known to the agent, or equivalently, is observed along with incoming task proposals. 
However, the agent does not know the distribution $\cdf$ of task durations  and we emphasize here that the agent does not observe incoming task proposals while occupied with a task (though the converse assumption should only affect the results by a factor~$\ubrv^{-1}$, the inverse of the maximal length of a task). We now define
\begin{equation}
\label{eq:opzero_n}
\opzero_n : \prof\mapsto \pr\frac{1}{n}\sum_{i=1}^n \left(\rew(\rv_i)-\prof\rv_i\right)_+ - \prof,
\end{equation}
which is the empirical counterpart of $\opzero$.
Moreover, let $\prof_n$ be the unique root of $\opzero_n$.
One has $\E[\opzero_n(\prof)] = \opzero(\prof)$ for all $\prof\geq 0$ and all $n\geq 1$.

\begin{proposition}
\label{prop:prof_estimate}
For all $\delta\in(0,1]$ and $n\geq 1$, $\prob\Big(\prof_n-\profopt> \pr (\ubrew-\lbrew) \sqrt{\frac{\ln(1/\delta)}{2n}} \Big) \leq \delta$ .
\end{proposition}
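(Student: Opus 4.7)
The plan is to reduce the one-sided deviation $\{\prof_n-\profopt>\eps\}$ to a pointwise concentration of the empirical functional $\opzero_n(\cdot)-\opzero(\cdot)$, using the fact that both $\opzero_n$ and $\opzero$ are strictly decreasing with a quantitative slope bound.

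First, for any $\prof_1\le\prof_2$ and any realization of the $\rv_i$'s, the map $c\mapsto(\rew(\rv)-c\rv)_+$ is non-increasing in $c$ (since $\rv\ge 0$). Combined with the linear term $-\prof$, this yields the deterministic inequalities $\opzero_n(\prof_2)-\opzero_n(\prof_1)\le -(\prof_2-\prof_1)$ and $\opzero(\prof_2)-\opzero(\prof_1)\le -(\prof_2-\prof_1)$. Hence both functions are strictly decreasing with slope at most $-1$, so $\prof_n$ and $\profopt$ are their unique roots. For $\eps>0$, monotonicity of $\opzero_n$ gives $\{\prof_n>\profopt+\eps\}\subseteq\{\opzero_n(\profopt+\eps)>0\}$, while the slope bound on $\opzero$ gives $\opzero(\profopt+\eps)\le\opzero(\profopt)-\eps=-\eps$. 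Subtracting, we get the inclusion
$$\{\prof_n-\profopt>\eps\}\subseteq\bigl\{\opzero_n(\profopt+\eps)-\opzero(\profopt+\eps)>\eps\bigr\}.$$

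The second step is a one-line Hoeffding bound. At any $\prof\ge 0$,
$$\opzero_n(\prof)-\opzero(\prof)=\frac{\pr}{n}\sum_{i=1}^n\Bigl(\bigl(\rew(\rv_i)-\prof\rv_i\bigr)_+-\E\bigl[(\rew(\rv)-\prof\rv)_+\bigr]\Bigr)$$
is a centered i.i.d. sum. Since $\prof\ge 0$ and $\rv_i\ge 0$, each summand $(\rew(\rv_i)-\prof\rv_i)_+$ lies in $[0,\ubrew]\subseteq[0,\ubrew-\lbrew]$ (using $\lbrew\le 0$). Hoeffding's inequality, applied at $\prof=\profopt+\eps$, therefore gives
$$\prob\bigl(\opzero_n(\profopt+\eps)-\opzero(\profopt+\eps)>\eps\bigr)\le\exp\!\left(-\frac{2n\eps^2}{\pr^2(\ubrew-\lbrew)^2}\right).$$
Setting $\eps=\pr(\ubrew-\lbrew)\sqrt{\ln(1/\delta)/(2n)}$ makes the right-hand side equal to $\delta$, which is the claim.

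The only delicate point is the slope step: without the lower bound $|\opzero'|\ge 1$, a concentration statement at a single point would not translate into a concentration statement about the root $\prof_n$. Once this slope bound is observed (and it follows for free from the explicit $-\prof$ term in $\opzero$), everything else reduces to a textbook application of Hoeffding to bounded i.i.d. variables.
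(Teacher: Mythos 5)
Your proof is correct and follows essentially the same route as the paper's: use monotonicity of $\opzero_n$ and the root property to reduce $\{\prof_n-\profopt>\eps\}$ to the pointwise deviation $\opzero_n(\profopt+\eps)-\opzero(\profopt+\eps)>\eps$ (via $\opzero(\profopt+\eps)\le-\eps$), then apply Hoeffding to the bounded i.i.d. summands. The only difference is cosmetic: you make explicit the slope-at-most-$-1$ justification for $\opzero(\profopt+\eps)\le-\eps$, which the paper states without comment.
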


\begin{remark}
\cref{prop:prof_estimate} states that the error in the estimation of $\profopt$ scales with $\pr (\ubrew-\lbrew)$. Notice that if the reward function is multiplied by some factor (and $\pr$ is fixed), then $\profopt$ is multiplied by the same factor; as a consequence, a linear dependency in $\ubrew-\lbrew$ is expected. \\
Similarly, since $\profopt=\pr \E\left[\left(\rew(\rv) - \profopt \rv \right)_+\right]$, a small variation of $\pr$ induces a variation of $\profopt$ of the same order. As a consequence, a linear dependency in $\pr$ (for small values of $\pr$) is expected. And as both effects are multiplied, the dependency in $\pr (\ubrew-\lbrew)$ is correct. \\
On the other hand, as $\pr$ goes to infinity, $\profopt$ converges to $\max_x r(x)/x$, so our deviation result seems irrelevant (for a small number of samples). 
This is due to the fact that for large $\pr$, $\profopt$ is not really the expectation of a random variable, but its essential supremum. In the proof, this appears when   $\opzero(\profopt+\eps)$ is bounded by $- \eps$. It is not difficult to see that a tighter upper-bound is 
\begin{equation}
\label{eq:q-epsilon}
-\eps(1+\pr q_\eps), \quad\text{with}\quad
q_\eps \coloneqq
\E \bracks*{X\one(\rew(\rv)\geq (\profopt + \eps)\rv)}.
\end{equation}
Hence the approximation error  scales with $\pr (\ubrew-\lbrew)/(1+\pr q_\eps)$ as $\pr$ goes to infinity. 
However, this does not give explicit confidence intervals and the regime $\lambda \to \infty$ is  not really interesting. As a consequence, the formulation of \cref{prop:prof_estimate} is sufficient for our purposes.
\end{remark}

The used policy  is straightforward: upon receiving the $n$-th proposal, the agent computes $\prof_n$ and accepts the task $\rv_n$ if and only if $\rew(\rv_n)\geq \prof_n\rv_n$. The pseudo-code of this policy is given in \cref{alg:unknown-distribution} below. 
The regret of this policy can be rewritten as
\begin{gather}
\label{eq:regret-} 
\reg(\timeT) 
= \profopt\timeT - \E\left[\sum_{n=1}^{\totdur} \profrew_n\E\left[\rv_n\one(\rew(\rv_n)\geq \prof_n\rv_n) + \exprv_n\right]\right], \\
\label{eq:gamma-n}
\text{where }\quad\profrew_n \coloneqq \frac{\pr \E[\rew(\rv_n)\one(\rew(\rv_n)\geq\prof_n\rv_n)]}{1+\pr\E[\rv_n\one(\rew(\rv_n)\geq \prof_n\rv_n)]} = \frac{ \E[\rew(\rv_n)\one(\rew(\rv_n)\geq\prof_n\rv_n)]}{\E[S_n + \rv_n\one(\rew(\rv_n)\geq \prof_n\rv_n)]}
\end{gather}
corresponds to the expected per-time reward obtained for the $n$-th task under the policy induced by $(\prof_n)_{n\geq 1}$. In the last expression, the numerator is indeed the expected reward per task while the denominator is the expected time spent per task (including the waiting time before receiving the proposal).
 \cref{prop:proprew}, postponed to the Appendix, gives a control of $\gamma_n$ entailing the following.


\begin{theorem}\label{th:regret-fullinfo}
In the known reward, unknown distribution setting, the regret of \cref{alg:unknown-distribution} satisfies
\begin{align}
\reg(\timeT) \leq  \pr(\ubrew-\lbrew)\ubrv\sqrt{\frac{\pi}{2}}\sqrt{\pr\timeT+1}.
\end{align}
\end{theorem}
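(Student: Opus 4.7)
The plan is to combine the regret decomposition~\eqref{eq:regret-} with the concentration bound of \cref{prop:prof_estimate} and the per-round control of $\profrew_n$ delivered by \cref{prop:proprew}.

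I would first rewrite the regret as $\reg(\timeT)=\E\bigl[\sum_{n=1}^\totdur \tau_n(\profopt-\profrew_n)\bigr]+\bigoh(1)$, where $\tau_n \coloneqq \E[\exprv_n+\rv_n\one(\rew(\rv_n)\geq \prof_n \rv_n)\mid \prof_n]$ is the conditional expected duration of round $n$. The $\bigoh(1)$ boundary term comes from the single-round overshoot $\mathcal{T}_\totdur-\timeT$, of the same order as the one already absorbed in \cref{thm:baseline}. The key pointwise estimate, which I expect \cref{prop:proprew} to provide, is $\tau_n(\profopt-\profrew_n)\leq \ubrv\,|\prof_n-\profopt|$. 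To derive it, write $\profrew(\prof)=N(\prof)/D(\prof)$ with $N(\prof)=\pr\,\E[\rew(\rv)\one(\rew(\rv)\geq \prof\rv)]$ and $D(\prof)=\pr\tau(\prof)$. The fixed-point identity $\profopt D(\profopt)=N(\profopt)$ then yields
\[
\profopt D(\prof)-N(\prof)=\pr\,\E\bigl[(\rew(\rv)-\profopt\rv)(\one(\rew(\rv)\geq \profopt\rv)-\one(\rew(\rv)\geq \prof\rv))\bigr],
\]
and on the event where the two indicators differ, $|\rew(\rv)-\profopt\rv|\leq|\prof-\profopt|\rv\leq|\prof-\profopt|\ubrv$. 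Hence $\profopt D(\prof)-N(\prof)\leq \pr\ubrv|\prof-\profopt|$; dividing by $D(\prof)=\pr\tau(\prof)$ gives the claimed estimate, the weight $\tau_n$ on the left exactly cancelling the denominator.

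Next, I would control $\E[|\prof_n-\profopt|]$. \cref{prop:prof_estimate} is stated one-sidedly, but the underlying Hoeffding argument applied to $\opzero_n(\profopt)-\opzero(\profopt)$ is symmetric, yielding the same tail for $\profopt-\prof_n$. Integrating the tail bound and using $\int_0^1\sqrt{-\ln u}\,du=\sqrt{\pi}/2$, I obtain $\E[|\prof_n-\profopt|]\leq \pr(\ubrew-\lbrew)\sqrt{\pi/(8n)}$.

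Finally, using $\sum_{n=1}^N n^{-1/2}\leq 2\sqrt{N}$, Jensen's inequality, and the Poisson bound $\E[\totdur]\leq \pr\timeT+1$ (since $\totdur$ does not exceed the number of arrivals in $[0,\timeT+\ubrv]$ of the Poisson process of intensity $\pr$), I conclude $\reg(\timeT)\leq \ubrv\cdot\pr(\ubrew-\lbrew)\sqrt{\pi/8}\cdot 2\sqrt{\pr\timeT+1}=\pr(\ubrew-\lbrew)\ubrv\sqrt{\pi/2}\sqrt{\pr\timeT+1}$. The delicate point is the cancellation in \cref{prop:proprew}: a naive bound $\profopt-\profrew_n\leq \pr\ubrv|\prof_n-\profopt|$ obtained from $D(\prof)\geq 1$ alone would introduce a spurious factor $(1+\pr\ubrv)$, so it is essential to carry $\tau_n$ inside the per-round error before invoking the deviation of $\prof_n$.
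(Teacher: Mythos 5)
Your overall route is the same as the paper's: the decomposition \cref{eq:regret-} into per-round gaps $\profopt-\profrew_n$ weighted by the expected round duration, a Hoeffding-type deviation bound for $\prof_n$ (\cref{prop:prof_estimate}), the Gaussian-tail integral, $\sum_{n\le N}n^{-1/2}\le 2\sqrt{N}$, and $\E[\sqrt{\totdur}]\le\sqrt{\pr\timeT+1}$ via Jensen. Your pointwise inequality $\tau(\prof)\,(\profopt-\profrew(\prof))\le \ubrv\,\vert\prof-\profopt\vert$, derived from the fixed-point identity, is correct and is a clean two-sided version of \cref{prop:proprew}, including the cancellation of the denominator against the round duration, which is indeed the right way to avoid a spurious $(1+\pr\ubrv)$ factor.

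The gap is quantitative. Your claim $\E[\vert\prof_n-\profopt\vert]\le \pr(\ubrew-\lbrew)\sqrt{\pi/(8n)}$ does not follow from the symmetrized Hoeffding tails: each one-sided tail integrates to $\pr(\ubrew-\lbrew)\sqrt{\pi/(8n)}$, so this bounds $\E[(\prof_n-\profopt)_+]$ and $\E[(\profopt-\prof_n)_+]$ separately, but their sum -- which your two-sided pointwise estimate requires -- is only bounded by $\pr(\ubrew-\lbrew)\sqrt{\pi/(2n)}$. Carried through, your argument yields the theorem with constant $2\sqrt{\pi/2}$ rather than $\sqrt{\pi/2}$. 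This is precisely where the paper's proof differs: \cref{prop:proprew} charges only the over-estimation part $\E[(\prof_n-\profopt)_+]$ (tasks wrongly declined), so only a single tail is integrated and the factor $1/2$ survives; your symmetric bound also charges the under-estimation event (sub-profitable tasks wrongly accepted), which does incur real loss, so dropping it is the delicate step of the paper's argument, and keeping it, as you do, costs a factor $2$. Two smaller points: the boundary term in your decomposition should be argued non-positive as in the paper ($\mathcal{T}_{\totdur}>\timeT$ by definition of $\totdur$, hence $\profopt(\timeT-\E[\mathcal{T}_{\totdur}])\le 0$); writing it as $+\bigoh(1)$ and then silently dropping it leaves an additive constant absent from the stated bound. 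And $\E[\totdur]\le\pr\timeT+1$ comes from comparing with the idle times alone, $\totdur-1\le\sup\{n\,\vert\,\sum_{i\le n}\exprv_i\le\timeT\}$, a Poisson$(\pr\timeT)$ variable, not from counting arrivals on $[0,\timeT+\ubrv]$, which would only give mean $\pr(\timeT+\ubrv)$.
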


\begin{remark}
Once again, one might question the dependency on the different parameters. As mentioned before, $\pr (\ubrew-\lbrew)$ represents the scale at which $\profopt$ grows and $\ubrv$ is the scale of $\rv$, so that $\pr(\ubrew-\lbrew)\ubrv$ is the total global scaling of rewards. On the other hand, the parameter $\pr$ also appears in the square-root term. This is because $\pr \timeT$ is approximately the order of magnitudes of the observed tasks.
%
\end{remark}

\begin{algorithm}[h]
\DontPrintSemicolon
\SetKwInOut{Input}{input}
\Input{ $\rew$, $\pr$}

	$n=0$\;
	\While{$\sum_{i=1}^n \exprv_i + \rv_i\one(\rew(\rv_i)\geq \prof_i\rv_i)<\timeT$}{
		$n=n+1$\;
		Wait $\exprv_n$ and observe $\rv_n$\;
		Compute $\prof_n$ as the unique root of $\opzero_n$ defined in \cref{eq:opzero_n}\;
		\lIf{$\rew(\rv_n)\geq \prof_n \rv_n$}{
				accept task and receive reward $\rew(\rv_n)$}
				\lElse{reject task}		
	}
\caption{\label{alg:unknown-distribution}Known reward algorithm}
\end{algorithm}

\begin{remark}\label{rem:computation1}
The estimated profitability threshold $\prof_n$ can be efficiently approximated using binary search, since the function $\opzero_n$ is decreasing. The approximation error is ignored in the regret, as approximating $\prof_n$ up to, \eg $n^{-2}$ only leads to an additional constant term in the regret. \\
The computation and memory complexities of \cref{alg:unknown-distribution} are then respectively of order $n\log(n)$ and $n$ when receiving the $n$-th proposal, because the complete history of tasks is used to compute $c_n$, the root of $\Phi_n$. 
This can be improved by noting that only the tasks with profitability in 
\begin{equation*}
\bracks*{c_n - \pr (\ubrew-\lbrew) \sqrt{\frac{\ln(1/\delta)}{2n}}, \ c_n + \pr (\ubrew-\lbrew) \sqrt{\frac{\ln(1/\delta)}{2n}}}
\end{equation*}
need to be exactly stored, thanks to \cref{prop:prof_estimate}. 
Indeed, with high probability, tasks with smaller profitability do not contribute to $\Phi(\profopt)$, while tasks with larger profitability fully contribute to $\Phi(\profopt)$, meaning that only their sum has to be stored. 
As this interval is of length $1/\sqrt{n}$, the complexities of the algorithm become sublinear in $n$ under regularity assumptions on $X$ and the profitability function. 
The computational complexity can even be further improved to $\log(n)$ using binary search trees, as explained in \cref{sec:fastcomp}.
\end{remark}

\section{Bandit Feedback}\label{subsec:bandit}

We now assume that rewards are noisy and not known upfront. 
When the agent accepts a task~$\rv_i$, the reward  earned and observed over the period of time $\rv_{i}$ is $\noisedrew_i=\rew(\rv_i)+\noise_i$, where $(\noise_i)_{i\geq1}$ is a sequence of $\sigma^2$-subgaussian independent random variables with $\E[\noise_i]=0$. 
If the task  $\rv_{i}$ is rejected, then the associated reward is not observed; on  the other hand, the agent is not occupied and thus may be proposed another task, possibly within the time window of length $\rv_{i}$.

To get non-trivial estimation rates,  in the whole section we assume the reward function $\rew$ to be $(\holdexp,\holdcst)$-H\"older, whose definition is recalled below.
\begin{definition}
\label{def:holder}
Let $\holdexp\in(0,1]$ and $\holdcst>0$. The function $\rew$ is  $(\holdexp,\holdcst)$-H\"older if
\begin{align*}
\vert\rew(\duration)-\rew(\duration')\vert \leq \holdcst\vert\duration-\duration'\vert^\holdexp, \qquad \forall \duration,\duration'\in [0,\ubrv]\ .
\end{align*}
\end{definition}

The reward function $\rew(\cdot)$ is estimated by $\rewest_n(\cdot)$, constructed as a variant of a regressogram. 
The state space   $[0,\ubrv]$ is partitioned regularly into $\numbin$ bins $\bin_1,\dots,\bin_\numbin$ of size $\binsize = \ubrv/\numbin$. 
For every bin~$\bin$, let $x^\bin \coloneqq \min \left\{ x \in B \right\}$.
Similarly to \cref{alg:unknown-distribution}, $\profopt$ is estimated by $\profemp_n$. 
To define the learning algorithm, we also construct an upper-estimate of $\rew(\cdot)$, denoted by $\rewest_n^+(\cdot)$, and a lower-estimate of $\profopt$, denoted by  $\profemp_n^-$. 

The learning algorithm is actually quite simple.  
A task $\rv_n$ is accepted if and only if its best-case reward $\rewest_{n-1}^+(\rv_n)$ is bigger than the worst-case per-time value of its bin $\profemp_{n-1}^-x^{\bin(\rv_n)}$, where $\rv_n$ is in the bin $\bin(\rv_n)$. 
Notice that, if $\rewest_n^+(\cdot)$ is  bigger than $\rew(\cdot)$ and  $\profemp_n^-$ is always smaller than $\profopt$, then any task accepted by the optimal algorithm (\ie such that $\rew(\rv) \geq \profopt\rv$) is accepted by the learning algorithm. Hence regret is incurred solely by accepting sub-optimal tasks. 
The question is therefore how fast tasks can be detected as sub-optimal, and whether declining them affects, or not, the estimation of $\profopt$. The pseudo-code is given in \cref{alg:bandit} at the end of this section.

We now describe the construction of $\rewest_n$ and $\profemp_n$ (as well as their upper and lower-estimates). 
Let   $(\rv_1,\noisedrew_1,\dots,\rv_n,\noisedrew_n)$  be the vector of   observations  of pairs (task, reward). 
We define for all bins $\bin$, 
\begin{equation}
\label{eq:r-n-B}
\begin{gathered}
\rewest_n(\duration)=\rewest_n^{\bin} = \frac{1}{\countbin_\bin}\sum_{i=1}^n \noisedrew_i\one(\rv_i\in \bin \text{ and } a_i=1), \quad \forall \duration \in \bin \\
\text{with } \countbin_\bin ={\textstyle\sum_{i=1}^n} \one(\rv_i\in \bin \text{ and } a_i=1).
\end{gathered}
\end{equation}
We emphasize here that declined tasks are not used in the estimation of $\rew(\cdot)$ (as we shall see later, they are not used  in the estimation of $\profopt$, either). 
The upper estimate of $\rew(\cdot)$ is now defined as
\begin{equation}\label{eq:eta-n-1}
\rewest^+_n(\duration) = \rewest_n(\duration) + \underbrace{\sqrt{\sigma^2 + \frac{L^2}{4}\left(\frac{\ubrv}{\numbin}\right)^{2\beta}}\sqrt{\frac{\ln(\numbin/\delta)}{2\countbin_\bin}}+ L\left(\frac{\ubrv}{\numbin}\right)^\beta}_{\coloneqq \confrew_{n-1}(x)}, \quad \forall x \in \bin.
\end{equation}

The following \cref{lem:rewest} states that $\rewest^+_n(\cdot)$ is indeed an upper-estimate of $\rew(\cdot)$.
\begin{lemma}\label{lem:rewest}
For every $n \in \N$, we have
$\prob\left(\forall x \in [0,\ubrv],\ \rewest^+_n(\duration) \geq \rew(\duration)\right)\geq 1- \delta.$
\end{lemma}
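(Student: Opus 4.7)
The plan is to fix a bin $\bin$ and uniformly control the deviation of $\rewest_n$ from $\rew$ on $\bin$, then take a union bound over the $\numbin$ bins. Since $\rewest^+_n$ is constant on each bin, it suffices to show, with probability at least $1-\delta/\numbin$, that $\rewest^+_n(x) \geq \rew(x)$ for every $x \in \bin$.

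Fix such a bin $\bin$ and an $x \in \bin$. Writing $\noisedrew_i = \rew(\rv_i) + \noise_i$ and substituting into the definition of $\rewest_n$, one obtains
\begin{equation*}
\rew(x) - \rewest_n(x) \;=\; \frac{1}{\countbin_\bin}\sum_{i:\, \rv_i \in \bin,\, a_i=1} \bigl[\rew(x) - \rew(\rv_i) - \noise_i\bigr].
\end{equation*}
The Hölder assumption at the bin scale $\binsize = \ubrv/\numbin$ gives $|\rew(x)-\rew(\rv_i)| \leq \holdcst\binsize^\holdexp$ for every $\rv_i \in \bin$. This is precisely the source of the deterministic offset $\holdcst(\ubrv/\numbin)^\holdexp$ in $\confrew_{n-1}$.

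The remaining centered fluctuation is handled by a sub-Gaussian concentration argument. Conditionally on $\rv_i \in \bin$ the quantity $\rew(x)-\rew(\rv_i)$ lies in an interval of width at most $\holdcst\binsize^\holdexp$ (since $\rew$ varies by at most $\holdcst\binsize^\holdexp$ over $\bin$), and is therefore $\tfrac{\holdcst^2\binsize^{2\holdexp}}{4}$-sub-Gaussian about its conditional mean by Hoeffding's lemma. Since $\noise_i$ is $\sigma^2$-sub-Gaussian and independent of the past filtration and of $\rv_i$ (the acceptance $a_i$ depends only on the history and on $\rv_i$, never on $\noise_i$), the recentred summands are independent sub-Gaussians with combined parameter $\sigma^2 + \tfrac{\holdcst^2 \binsize^{2\holdexp}}{4}$. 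A standard Azuma/Hoeffding deviation bound yields the square-root term in $\confrew_{n-1}$ at confidence level $\delta/\numbin$.

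Combining the Hölder slack with the concentration estimate gives $\rew(x) \leq \rewest^+_n(x)$ on the good event, and a union bound over the $\numbin$ bins concludes. The main technical subtlety is that the count $\countbin_\bin$ and the identities of the accepted samples are random and data-dependent; one handles this by viewing the partial sum as a martingale along the arrival filtration restricted to proposals in $\bin$, and stopping at the $\countbin_\bin$-th acceptance, so that the sub-Gaussian deviation bound applies at each fixed value of $\countbin_\bin$ (with the $\ln(\numbin/\delta)$ factor absorbing both the union bound and, via a peeling argument if needed, the time-uniformity in $\countbin_\bin$).
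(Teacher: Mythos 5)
Your argument is correct and is essentially the paper's own proof (which is stated in one line as Hoeffding's inequality plus Hölder continuity): the decomposition into the deterministic Hölder slack $\holdcst(\ubrv/\numbin)^\holdexp$ and a sub-Gaussian fluctuation with proxy $\sigma^2+\tfrac{\holdcst^2}{4}(\ubrv/\numbin)^{2\holdexp}$, followed by a union bound over the $\numbin$ bins, is exactly what produces $\confrew_{n-1}$. Your extra discussion of the data-dependent count $\countbin_\bin$ addresses a point the paper glosses over; just note that, strictly, making the bound uniform in $\countbin_\bin$ costs an additional union over its possible values (a $\ln(n\numbin/\delta)$-type factor) rather than being absorbed for free into $\ln(\numbin/\delta)$, a looseness shared with the paper's statement.
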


It remains to construct $\profemp_n$. 
We  define iteratively for every bin $B$
\begin{equation}
\label{eq:r-tilde}
\tilde{\rew}_n^{\bin} =
\begin{cases} 0 & \text{ if a task in } \bin \text{ has ever been rejected},  \\ 
\rewest_n^{\bin} & \text{ otherwise.}
\end{cases}
\end{equation}
So $\tilde{\rew}_n$ is equal to the reward estimate $\rewest_n$, except on the eliminated bins that are known to be suboptimal, for which it is instead $0$.
We then introduce  the empirical counterpart of $\opzero$,
\begin{equation}\label{eq:opzeroemp_n}
\opzeroemp_n : \prof\mapsto \pr\sum_{j=1}^{\numbin} \frac{\countbin_{\bin_j}}{n} \left(\tilde{\rew}_n^{\bin_j}-\prof x^{\bin_j}\right)_+ - \prof.
\end{equation}
Let $\profemp_n$ be the unique root of $\opzeroemp_n$; the lower estimate of $\profopt$ is then 
\begin{equation}
\label{eq:xi-n-1} 
\profemp^-_n = \profemp_n - \underbrace{\parens*{2\pr\sqrt{\sigma^2 + \frac{(\ubrew-\lbrew)^2}{4}}\sqrt{\frac{\ln(1/\delta)}{n}} + \kappa\pr\max\left(\sigma, \frac{\ubrew-\lbrew}{2}\right)\sqrt{\frac{\log(n)+1}{hn}} +\sqrt{8}\pr\frac{\holdcst\binsize^{\holdexp}}{2^\holdexp}+ \pr^2 \ubrew h}}_{\coloneqq \confprof_{n-1}},
\end{equation}
where $\kappa\leq 150$, is the square root of the universal constant introduced in \citep[][Theorem 11.3]{GyoKohKrzWal:Springer2002}. 
Although the following lemma looks trivial, it is actually a keystone of our main result. 
It states that the optimal profitability threshold may be computed solely based on the accepted tasks  -- no matter what the rewards associated to declined tasks are. 
This is of crucial relevance, since it implies that a learning algorithm may actually decline tasks that are clearly non-profitable, without degrading the quality and the speed of profitability threshold estimation.

\begin{lemma}
\label{lem:decline}
Let $\profopt$ be the optimal profitability threshold associated to $\rew$, \ie $\profopt$ is the unique root of $\prof\mapsto \opzero(\prof)=\pr\E\left[\left(\rew(\rv) - \prof \rv \right)_+\right] - \prof$ and let $\mathcal{E} \subset \{x \in [0,\ubrv] \,\vert\, \rew(x) \leq \profopt x \}$ be any subset of sub-profitable tasks.\\
Then the profitability threshold $\tilde{\prof}$ associated to the modified reward function $\rew_{\mathcal{E}}(x)\coloneqq\rew(x)\one(x \not\in \mathcal{E})$ is equal to $\profopt$, or, stated otherwise,
\begin{equation*}
\profopt = \pr\E\left[\left(\rew_\mathcal{E}(\rv)- \profopt \rv \right)_+\right], \quad \forall \mathcal{E} \subset  \{x \in [0,\ubrv] \,\vert\, \rew(x) \leq \profopt x \}.
\end{equation*}
\end{lemma}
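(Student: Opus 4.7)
The plan is to verify directly that for every $x \in [0, \ubrv]$ the integrands $(\rew(x) - \profopt x)_+$ and $(\rew_\mathcal{E}(x) - \profopt x)_+$ coincide, so the two expectations are equal and the identity $\profopt = \pr \E[(\rew(\rv) - \profopt \rv)_+]$ carries over verbatim to $\rew_\mathcal{E}$. The argument splits into the two cases $x \in \mathcal{E}$ and $x \notin \mathcal{E}$, and in both cases the positive part erases any discrepancy.

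First I would fix $x \notin \mathcal{E}$. By the very definition of $\rew_\mathcal{E}$, we have $\rew_\mathcal{E}(x) = \rew(x)$, so the two positive parts are literally identical. Next I would fix $x \in \mathcal{E}$. On the one hand, the hypothesis on $\mathcal{E}$ gives $\rew(x) \leq \profopt x$, so $(\rew(x) - \profopt x)_+ = 0$. On the other hand, $\rew_\mathcal{E}(x) = 0$ by definition, and since $\profopt \geq 0$ (as $\profopt$ is a root of $\opzero \colon \R_+ \to \R$) and $x \geq 0$, we get $(\rew_\mathcal{E}(x) - \profopt x)_+ = (-\profopt x)_+ = 0$ as well. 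Hence
\begin{equation*}
(\rew(x) - \profopt x)_+ = (\rew_\mathcal{E}(x) - \profopt x)_+ \qquad \text{for all } x \in [0, \ubrv].
\end{equation*}

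Taking expectations on both sides and multiplying by $\pr$ yields $\pr \E[(\rew(\rv) - \profopt \rv)_+] = \pr \E[(\rew_\mathcal{E}(\rv) - \profopt \rv)_+]$. Since the left-hand side equals $\profopt$ by the definition of $\profopt$ as the root of $\opzero$, the conclusion $\profopt = \pr \E[(\rew_\mathcal{E}(\rv) - \profopt \rv)_+]$ follows.

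There is no real obstacle here: the lemma is essentially a bookkeeping observation that zeroing out values of $\rew$ at points where $\rew(x) \leq \profopt x$ does not change the positive part $(\rew(x) - \profopt x)_+$, because both the original and the modified value already produce a non-positive quantity at such points. The only mild subtlety is making sure that $\profopt \geq 0$ so that the positive part also annihilates the $-\profopt x$ term on $\mathcal{E}$; this is immediate from the domain of $\opzero$ specified in \cref{eq:opzero}.
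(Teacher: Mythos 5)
Your proof is correct and is essentially the paper's own argument: the paper likewise observes that since $\profopt \geq 0$, one has $\left(\rew_\mathcal{E}(\rv) - \profopt \rv\right)_+ = \left(\rew(\rv) - \profopt \rv\right)_+$ pointwise, and the identity defining $\profopt$ carries over. Your case split on $x \in \mathcal{E}$ versus $x \notin \mathcal{E}$ just spells out this pointwise equality in more detail.
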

This implies that $\profemp_n^-$ is indeed an under-estimate. \begin{proposition}\label{prop:giveaname0}
For all $N \in \N$, we have  $\prob(\forall n \leq N, \profemp_n^- \leq \profopt   ) \geq 1- 2N\delta$. 
\end{proposition}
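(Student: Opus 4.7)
The plan is to prove $\profemp_n^- \leq \profopt$ by induction on $n$, on an event of probability at least $1 - 2n\delta$; a union bound over $n \leq N$ then yields the statement. At each step $m \leq N$, I would introduce two events of probability at least $1 - \delta$: the upper-bound event $A_m$ of \cref{lem:rewest}, ensuring $\rewest_m^+ \geq \rew$ pointwise, and a concentration event $B_m$ controlling the distance between $\opzeroemp_m$ and $\opzero$ at $\prof = \profopt + \confprof_{m-1}$; the $2\delta$ per step produces the $2N\delta$ in the conclusion.

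Under the induction hypothesis $\profemp_{m-1}^- \leq \profopt$ and the event $A_{m-1}$, any task $\rv_m$ that is rejected by the algorithm satisfies
\[
\rew(\rv_m) \leq \rewest_{m-1}^+(\rv_m) < \profemp_{m-1}^- x^{\bin(\rv_m)} \leq \profopt x^{\bin(\rv_m)}.
\]
By $(\holdexp,\holdcst)$-H\"older regularity, every $x$ in the same bin then obeys $\rew(x) \leq \profopt x + \holdcst \binsize^\holdexp$, so each eliminated bin consists of \emph{almost} sub-profitable tasks with a controlled H\"older slack. \cref{lem:decline} guarantees that zeroing out $\rew$ on the \emph{exactly} sub-profitable set does not move the population root $\profopt$; this is the structural reason why the truncation $\tilde{\rew}_n^{\bin} = 0$ on eliminated bins does not bias the estimator of $\profopt$ at first order.

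Since $\opzeroemp_n$ is strictly decreasing, the target bound $\profemp_n \leq \profopt + \confprof_{n-1}$ is equivalent to $\opzeroemp_n(\profopt + \confprof_{n-1}) \leq 0$, which I would establish via the decomposition
\[
\opzeroemp_n(\profopt + \confprof_{n-1}) \leq \opzero(\profopt + \confprof_{n-1}) + \bigl[\opzeroemp_n - \opzero\bigr](\profopt + \confprof_{n-1}),
\]
using $\opzero(\profopt + \confprof_{n-1}) \leq -\confprof_{n-1}$ (the slope of $\opzero$ is at most $-1$), and bounding the bracketed discrepancy by $\confprof_{n-1}$ through four concentration ingredients matching the four summands defining $\confprof_{n-1}$: a Hoeffding deviation between $\pr \sum_j (\countbin_{\bin_j}/n) f(x^{\bin_j})$ and $\pr\E[f(X)]$ for bounded $f$, producing the $2\pr\sqrt{\sigma^2 + (\ubrew-\lbrew)^2/4}\sqrt{\ln(1/\delta)/n}$ term; the aggregated regressogram deviation of $\rewest_n^{\bin}$ from the conditional mean of $Y$ on $\bin$, supplied by \citep[Theorem 11.3]{GyoKohKrzWal:Springer2002} and producing the $\kappa\pr\max(\sigma,(\ubrew-\lbrew)/2)\sqrt{(\log(n)+1)/(\binsize\, n)}$ term; the H\"older discretization error $\sqrt{8}\pr\holdcst\binsize^\holdexp/2^\holdexp$ from replacing $\rew(x)$ by $\rew(x^{\bin})$ on each bin; and finally the residual $\pr^2 \ubrew \binsize$ coming from the fact that the data-dependent set $\mathcal{E}_n$ is only approximately sub-profitable, so applying \cref{lem:decline} to the exact sub-profitable set rather than to $\mathcal{E}_n$ costs a term proportional to the H\"older slack times the probability mass on the profitability frontier.

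The main obstacle is this last point: \cref{lem:decline} is an \emph{exact} statement, but bins are eliminated from noisy empirical information, so the reduction must be pushed through uniformly in the data-dependent set $\mathcal{E}_n$. This requires carefully tracking both the H\"older overshoot $\holdcst\binsize^\holdexp$ within each eliminated bin and the probability mass of tasks near the profitability boundary where the algorithm cannot yet reliably distinguish profitable from sub-profitable tasks; it is precisely this accounting that forces the $\pr^2 \ubrew \binsize$ correction in $\confprof_{n-1}$. Once the four bounds sum to at most $\confprof_{n-1}$, the induction step closes and the proposition follows.
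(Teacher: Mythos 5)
Your overall skeleton (induction on the good event, \cref{lem:decline} to neutralize eliminated bins, monotonicity of $\opzeroemp_n$ plus the slope bound $\opzero(\profopt+\eps)\leq -\eps$, and a per-step $2\delta$ budget giving $2N\delta$) is the same as the paper's, but the way you handle the eliminated bins contains a genuine gap. You claim the eliminated bins are only \emph{approximately} sub-profitable (with H\"older slack $\holdcst\binsize^{\holdexp}$), and you then charge a residual ``slack times frontier mass'' to the $\pr^2\ubrew\binsize$ term of $\confprof_{n-1}$. Neither half of this is right. On the good event the elimination is \emph{exact}: the upper estimate $\rewest^+_{m-1}$ is constant on each bin and, by \cref{lem:rewest}, dominates $\rew(x)$ for \emph{every} $x\in[0,\ubrv]$ (the H\"older width inside $\confrew$ is precisely what buys this), while the comparison is made against $\profemp^-_{m-1}x^{\bin}$ with $x^{\bin}=\min\bin$. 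Hence a rejection of some $\rv_m\in\bin$ gives, for \emph{all} $x\in\bin$, $\rew(x)\leq \rewest^+_{m-1}(x)=\rewest^+_{m-1}(\rv_m)<\profemp^-_{m-1}x^{\bin}\leq \profopt x^{\bin}\leq \profopt x$, so the whole eliminated bin lies in $\{x:\rew(x)\leq\profopt x\}$ and \cref{lem:decline} applies verbatim, with no uniform-in-$\mathcal{E}_n$ accounting and no frontier-mass term. Consequently your fourth ``ingredient'' is solving a problem that does not exist, and — more damagingly — the discrepancy it was supposed to cover is misidentified: the term $\pr^2\ubrew\binsize$ in $\confprof_{n-1}$ actually pays for replacing $\prof\rv$ by $\prof x^{\bin(\rv)}$ inside $\opzeroemp_n$, bounded by $\pr\prof\binsize\leq\pr^2\ubrew\binsize$ because only $\prof\leq\pr\ubrew$ is ever relevant (since $\profopt\leq\pr\ubrew$). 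In your decomposition this $\prof$-discretization error is never bounded anywhere (your third ingredient only covers the reward-side bias $\rew(x)$ vs.\ the bin estimate, which in the paper is part of the regressogram bound), so your four bounds do not in fact sum to $\confprof_{n-1}$ and the induction step does not close as written.

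Two smaller points. First, the deviation of $\opzeroemp_n$ around its mean cannot be obtained by plain Hoeffding for a bounded function of the contexts: the randomness includes the subgaussian noises $\noise_i$, and the paper uses the bounded-differences/McDiarmid inequality for subgaussian variables (each $\noisedrew_i$ enters $\opzeroemp_n$ in a $\tfrac{\pr}{n}$-Lipschitz way), which is exactly why the first term of $\confprof_{n-1}$ carries $\sqrt{\sigma^2+(\ubrew-\lbrew)^2/4}$. Second, the paper routes the argument through the intermediate function $\opzeroest_n(\prof)=\E[\opzeroemp_n(\prof)]$ and bounds $\vert\opzeroest_n-\opzero\vert\leq\pr\E[\vert\rewest_n(\rv)-\rew(\rv)\vert]+\pr^2\ubrew\binsize$, then invokes the regressogram $L^1$ bound of Gy\"orfi et al.\ (which supplies both the $\kappa$ term and the $\sqrt{8}\pr\holdcst\binsize^{\holdexp}/2^{\holdexp}$ term); on the induction event this is applied to the modified reward $\rew(\cdot)\one(\tilde{\rew}_n(\cdot)>0)$, which is still $(\holdexp,\holdcst)$-H\"older on each bin, so the bound remains valid despite the adaptively collected data.
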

%
%
We can now state our main result.

\begin{theorem}\label{th:regret-bandit}
If $\rew$ is $(L,\beta)$-H\"older, then the regret of \cref{alg:bandit},  taking $\delta = 1/\timeT^2$ and $\numbin = \big\lceil \ubrv \holdcst^{\frac{2}{2\holdexp+1}}(\pr \timeT +1)^{\frac{1}{2\holdexp+1}} \big\rceil$, satisfies
\begin{align*}
\reg(\timeT) \leq \kappa_1 \pr\ubrv\max(\sigma, \ubrew-\lbrew)\holdcst^{\frac{1}{2\beta+1}}
(\pr\timeT+1)^\frac{\holdexp+1}{2\holdexp+1}\sqrt{\ln(\pr\timeT+1)},
\end{align*}
where $\kappa_1$ is some universal constant independent of any problem parameter. 
\end{theorem}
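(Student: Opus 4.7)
The plan is a per-step Bellman-style decomposition of the regret. Setting $\phi(t)=\profopt(\timeT-t)$ and $M_n = \phi(\mathcal{T}_n) + \sum_{k=1}^n \rew(\rv_k) a_k$, with $\mathcal{T}_n=\sum_{k\le n}(\exprv_k + \rv_k a_k)$, and using the defining equation $\pr\E[(\rew(\rv)-\profopt\rv)_+] = \profopt$, a direct computation yields
\begin{equation*}
\Delta_n := \E\bracks*{(1-a_n)(\rew(\rv_n)-\profopt\rv_n)_+ + a_n(\profopt\rv_n-\rew(\rv_n))_+ \mid \mathcal{H}_{t_n}} = -\E[M_n-M_{n-1}\mid \mathcal{H}_{t_n}].
\end{equation*}
Since $\E[\totdur] = O(\pr\timeT)$, an optional-stopping argument gives $\reg(\timeT) \leq \E\bracks*{\sum_{n=1}^\totdur \Delta_n}$, the boundary effect $\mathcal{T}_\totdur - \timeT \in [0,\ubrv]$ only removing a non-negative quantity.

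Next I introduce the good event $\mathcal{G}$ on which $|\rewest^\bin_n - \rew(x^\bin)| \leq \confrew_n$ for every bin $\bin$ and every $n\le\totdur$, and $|\profemp_n - \profopt| \leq \confprof_n$ for every $n\le\totdur$. \cref{lem:rewest} and \cref{prop:giveaname0} supply the one-sided directions; the opposite directions follow by symmetry from the same sub-Gaussian / regressogram arguments (Theorem 11.3 of \citep{GyoKohKrzWal:Springer2002} for the $L^\infty$-rate entering $\confprof_n$). A union bound with $\delta = \timeT^{-2}$ gives $\prob(\mathcal{G}^c) = O(1/\timeT)$, so the bad-event contribution is $O(1)$. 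On $\mathcal{G}$, the ``missed profitable task'' term of $\Delta_n$ vanishes, because any rejected $\rv_n$ satisfies $\rew(\rv_n)\le \rewest^+_{n-1}(\rv_n) < \profemp^-_{n-1}x^{\bin(\rv_n)} \leq \profopt\rv_n$; and for accepted tasks, combining the acceptance rule with the two-sided event and $\rv_n \in [x^{\bin(\rv_n)}, x^{\bin(\rv_n)}+\binsize]$ yields
\begin{equation*}
\profopt\rv_n - \rew(\rv_n) \leq \profopt\binsize + 2\confrew_{n-1}(\rv_n) + 2\confprof_{n-1}\ubrv.
\end{equation*}
\cref{lem:decline} is the conceptual key at this point: it guarantees that forcing $\tilde\rew^\bin=0$ on rejected bins leaves $\profopt$ unchanged, so $\confprof_n$ remains a valid confidence width for $\profemp_n$ despite the algorithm's biased sampling across bins.

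Summing over $n\le\totdur$ produces three kinds of contributions. The $\profopt\binsize$-term gives $O(\profopt\ubrv\pr\timeT\binsize)$, which is sub-dominant. The deterministic $\confprof$-term is summed directly from~\eqref{eq:xi-n-1}: its bias part contributes $O(\pr^2\timeT\ubrv(\holdcst\binsize^\holdexp + \pr\ubrew\binsize))$, and its variance part $O(\pr\ubrv\sqrt{\pr\timeT\ln(\timeT)/\binsize})$. The $\confrew$-term is handled bin by bin: in each bin $\bin$, $\sum_{k=1}^{\countbin_\bin}\confrew_{k-1} \lesssim \countbin_\bin \holdcst\binsize^\holdexp + \max(\sigma,\holdcst\binsize^\holdexp)\sqrt{\countbin_\bin\ln(\numbin/\delta)}$, and Cauchy--Schwarz on $\sum_\bin\sqrt{\countbin_\bin} \leq \sqrt{\numbin\pr\timeT}$ gives $O(\holdcst\pr\timeT\binsize^\holdexp + \max(\sigma,\holdcst\binsize^\holdexp)\sqrt{\numbin\pr\timeT\ln(\numbin/\delta)})$. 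The prescribed $\numbin\asymp \ubrv\holdcst^{2/(2\holdexp+1)}(\pr\timeT+1)^{1/(2\holdexp+1)}$ exactly balances the bias $\holdcst\pr\timeT\binsize^\holdexp$ against the variance $\sqrt{\numbin\pr\timeT\ln\timeT}$, both equal up to universal constants to $\holdcst^{1/(2\holdexp+1)}(\pr\timeT+1)^{(\holdexp+1)/(2\holdexp+1)}\sqrt{\ln(\pr\timeT+1)}$, the target rate. The main obstacle I anticipate is the two-sided analysis of $\profemp_n$: the elimination rule freezes $\tilde\rew^\bin$ on suboptimal bins, and one must combine \cref{lem:decline} with the regressogram $L^\infty$ bound while carefully accounting for the non-uniform bin sampling induced by the algorithm.
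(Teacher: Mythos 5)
Your proposal is correct, and the concentration/summation half (optimism via $\rewest^+$ and $\profemp^-$, the per-task penalty $2\confrew_{n-1}+2\ubrv\confprof_{n-1}+\profopt\binsize$ for accepted tasks, the bin-by-bin Cauchy--Schwarz bound $\sum_\bin\sqrt{\countbin_\bin}\le\sqrt{\numbin\totdur}$, and the balancing choice of $\numbin$ and $\delta$) coincides with the paper's; where you genuinely diverge is the reduction step. The paper converts the regret via Wald's equation into a renewal-reward form, $\reg(\timeT)\le\E\big[\sum_{n\le\totdur}(\profopt-\profrewest_n)\E[\rv_n\one(\refuse(n))+\exprv_n]\big]$ with $\profrewest_n$ the per-time reward of the learned rule, and then proves \cref{prop:profrewest} through a somewhat delicate indicator-algebra decomposition of $\one(\refuse(n))$ into events $\mathcal{E}_0,\dots,\mathcal{E}_4$. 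You instead telescope the potential $M_n=\profopt(\timeT-\mathcal{T}_n)+\sum_{k\le n}\rew(\rv_k)a_k$, using $\profopt=\pr\E[(\rew(\rv)-\profopt\rv)_+]$, which reduces the regret directly to the one-armed contextual-bandit form $\E\big[\sum_n (1-a_n)(\rew(\rv_n)-\profopt\rv_n)_+ + a_n(\profopt\rv_n-\rew(\rv_n))_+\big]$ --- incidentally the very form the paper itself uses in its lower bound, \cref{eq:lower1}. Your route is cleaner (optimism visibly kills the missed-profitable-task term, no event algebra needed), while the paper's route makes the per-time-reward renewal structure explicit, which it reuses elsewhere (\cref{th:regret-fullinfo}, \cref{alg:monotone}). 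Two small points to tighten: condition at the completion time of task $n-1$ rather than on $\mathcal{H}_{t_n}$ (at $t_n$ the waiting time $\exprv_n$ is already realized, so $\E[\exprv_n\mid\mathcal{H}_{t_n}]\ne 1/\pr$), and note that the two-sided deviation of $\profemp_n$ you invoke --- valid, on the good event, because \cref{lem:decline} keeps $\profopt$ as the root of the zeroed-out population functional and the McDiarmid-plus-regressogram argument of \cref{prop:giveaname0} applies symmetrically --- is exactly the step the paper also needs (last line of the proof of \cref{prop:profrewest}) and treats equally tersely, so your flagged ``obstacle'' is real but resolved by the same tools.
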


A similar lower bound is proved in \cref{subsec:lower}, which implies that \cref{alg:bandit} optimally scales with $T$, up to $\log$ terms. Even faster rates of convergence hold under additional assumptions on the task distribution or the profitability function and are provided in \cref{sec:fast} for the cases of finite support of $X$, margin condition or monotone profitability function.

\begin{algorithm}[h]
\DontPrintSemicolon
\SetKwInOut{Input}{input}
\Input{$\delta$, $\pr$, $\ubrv$, $\ubrew$, $\lbrew$, $\timeT$, $\sigma$, $\kappa$}
	$\profemp^-_0=0$;\ $\rewest^+_{0} =\infty$;\ $n=0$\;
	\While{$\sum_{i=1}^n \exprv_i + \rv_i\one(\rewest^+_i(\rv_i)\geq \profemp^-_i\rv_i)<\timeT$}{
		$n=n+1$\;	
		Wait $\exprv_n$ and observe $\rv_n$\;
		 		\lIf{$\rewest^+_{n-1}(\rv_n)\geq \profemp^-_{n-1}x^{\bin(\rv_n)}$}{
					accept task and receive reward $Y_n$}
					\lElse{reject task}
				For $\bin=\bin(\rv_n)$ compute $\rewest_n^\bin, \rewest^{\bin\, +}_n, \tilde{\rew}_n^\bin$ as described by \cref{eq:r-n-B,eq:eta-n-1,eq:r-tilde} \;
Compute $\profemp_n$ as the unique root of $\opzeroemp_n$ defined in \cref{eq:opzeroemp_n}\;
 	$\profemp^-_n = \profemp_n - \confprof_{n-1}$ as described by \cref{eq:xi-n-1}
	}
\caption{\label{alg:bandit}Bandit algorithm}
\end{algorithm}

\begin{remark}
The complexity of \cref{alg:bandit} when receiving a single task scales with the number of bins $\numbin$, because the function $\opzeroemp_n$ uses the discretization over the bins. 
In particular, it uses the representatives $x^\bin$ instead of the exact observations $\rv_i$ in $\opzeroemp_n$, which is the reason for the additional $\pr^2 \ubrew h$ term in $\confprof_n$.
Similarly to \cref{alg:unknown-distribution}, it can be improved to a $\log(\numbin)$ computational complexity with binary search trees as detailed in \cref{sec:fastcomp}.
\end{remark}

\section{Simulations}\label{sec:expe}

\cref{alg:unknown-distribution,alg:bandit} are computationally efficient when using binary search trees as described in \cref{sec:fastcomp}.
This section compares empirically these different algorithms on toy examples, considering affine and concave reward functions.
The code used in this section is available at \url{github.com/eboursier/making_most_of_your_time}.

\subsection{Affine reward function}\label{sec:expeaffine}

We first study the simple case of affine reward function $\rew(x) = x - 0.5$. The profitability function is increasing and \cref{alg:monotone}, given in \cref{SE:FastMonotone}, can be used in this specific case.
%
We consider a uniform distribution on $[0,3]$ for task distribution, a Poisson rate $\pr = 1$ and a uniform distribution on $[-1,1]$ for the noise $\varepsilon$.
As often in bandit algorithms, we tune the constants scaling the uncertainty terms $\confrew$ and $\confprof$ for better empirical results.

\medskip

\cref{fig:decisions1} shows the typical decisions of \cref{alg:unknown-distribution,alg:bandit} on a single run of this example. 
The former especially seems to take almost no suboptimal decision. This is because in simpler settings with good margin conditions, the regret of \cref{alg:unknown-distribution} can be shown to be only of order $\log(T)$. 
On the other hand, it can be seen how \cref{alg:bandit} progressively eliminates bins, represented by red traces.

\cref{fig:bandit1} gives the upper estimate of the reward function by representing $\rewest_n^{\bin \, +}/x^\bin$ for each bin and the lower estimate of $\profemp_n^-$ after a time $t=10^5$. It illustrates how the algorithm takes decisions at this stage. Especially, the bins with estimated profitability above $\profemp_n^-$, while their actual profitability is below $\profopt$, still have to be eliminated by the algorithm.
Some bins are badly estimated, since the algorithm stops accepting and estimating them as soon as they are detected suboptimal. The algorithm thus adapts nicely to the problem difficulty, by eliminating very bad bins way faster than slightly suboptimal ones.

\begin{figure}[h]
\begin{subfigure}{.45\linewidth}
\centering
\resizebox{\linewidth}{!}{\includegraphics[scale=1]{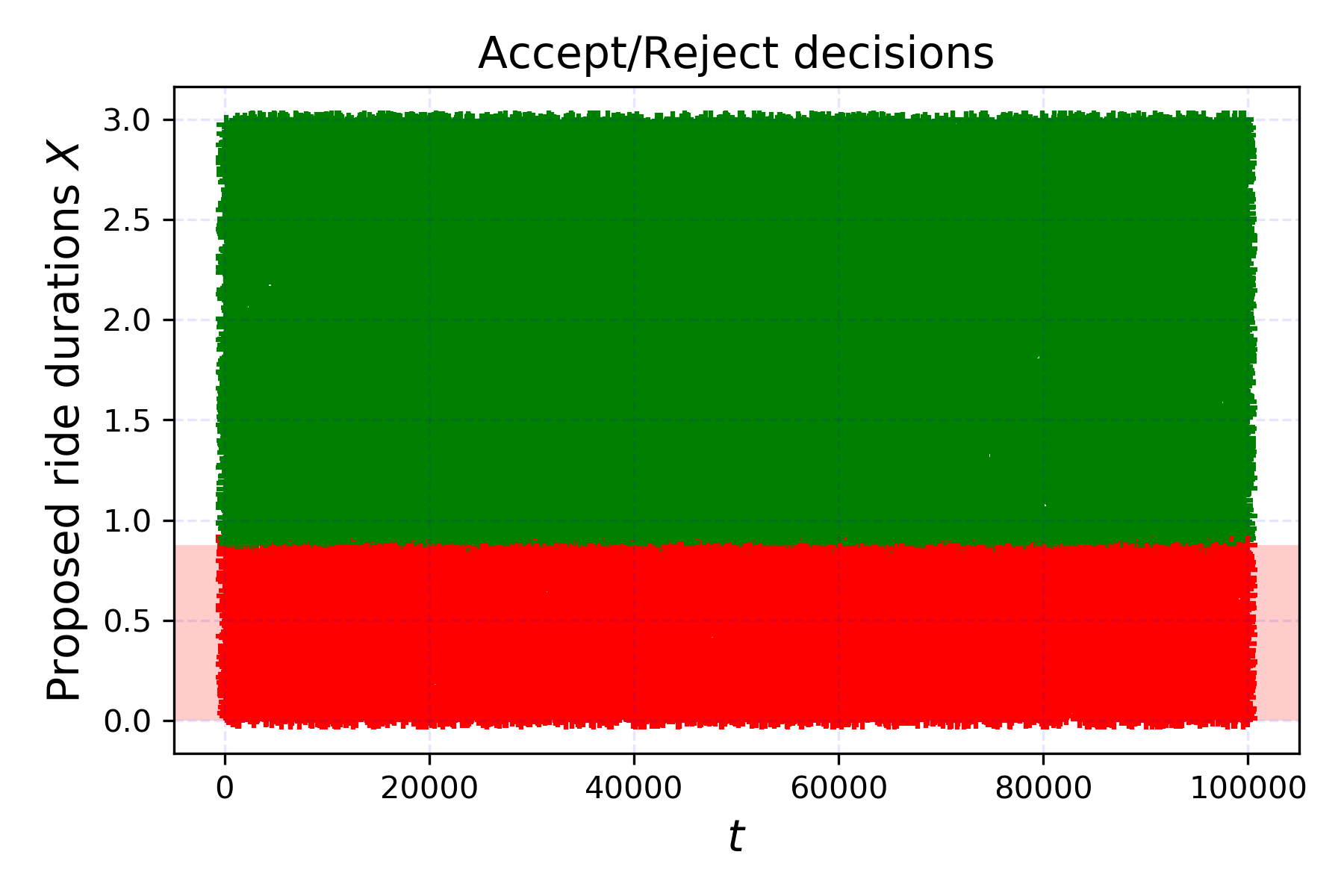}}
\caption{\cref{alg:unknown-distribution}: known reward.}
\end{subfigure}%
\hspace{0.2cm}
\begin{subfigure}{.45\linewidth}
\centering
\resizebox{\linewidth}{!}{\includegraphics[scale=1]{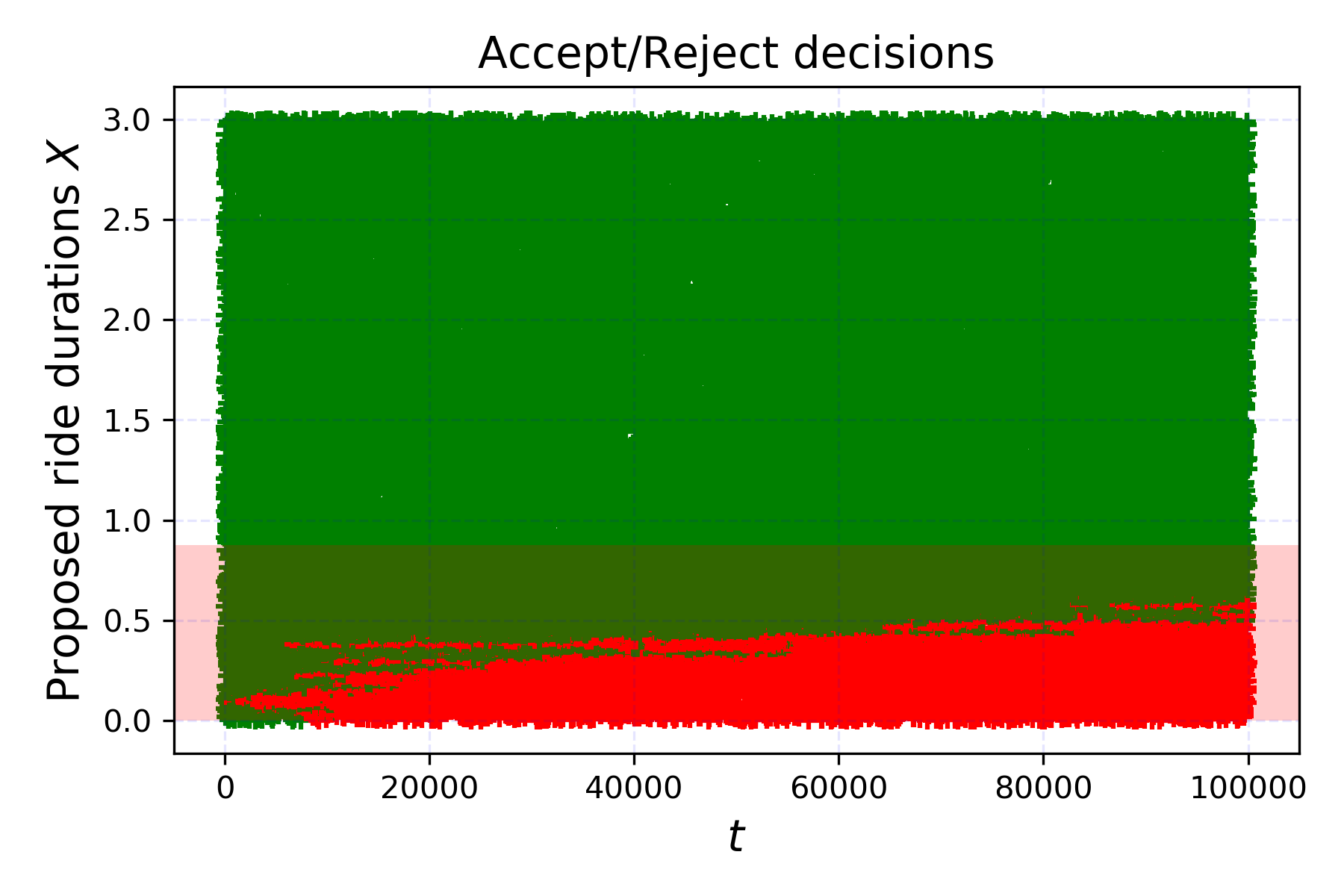}}
\caption{\cref{alg:bandit}: bandit.}
\end{subfigure}
\caption{\label{fig:decisions1}Accept/reject decisions on a single run. A single point corresponds to a proposed task, with the time of appearance on $x$ axis and task durations on $y$ axis. It is marked in green (resp. red) if it is accepted (resp. rejected) by the algorithm, while the light red area highlights the suboptimal region. Each task in this colored region thus has a profitability below $\profopt$.}
\end{figure}

\begin{figure}[h]
\begin{adjustwidth}{-0.2cm}{-0.2cm}  
\centering
\begin{minipage}{.47\linewidth}
\begin{figure}[H]
  \includegraphics[width=\linewidth]{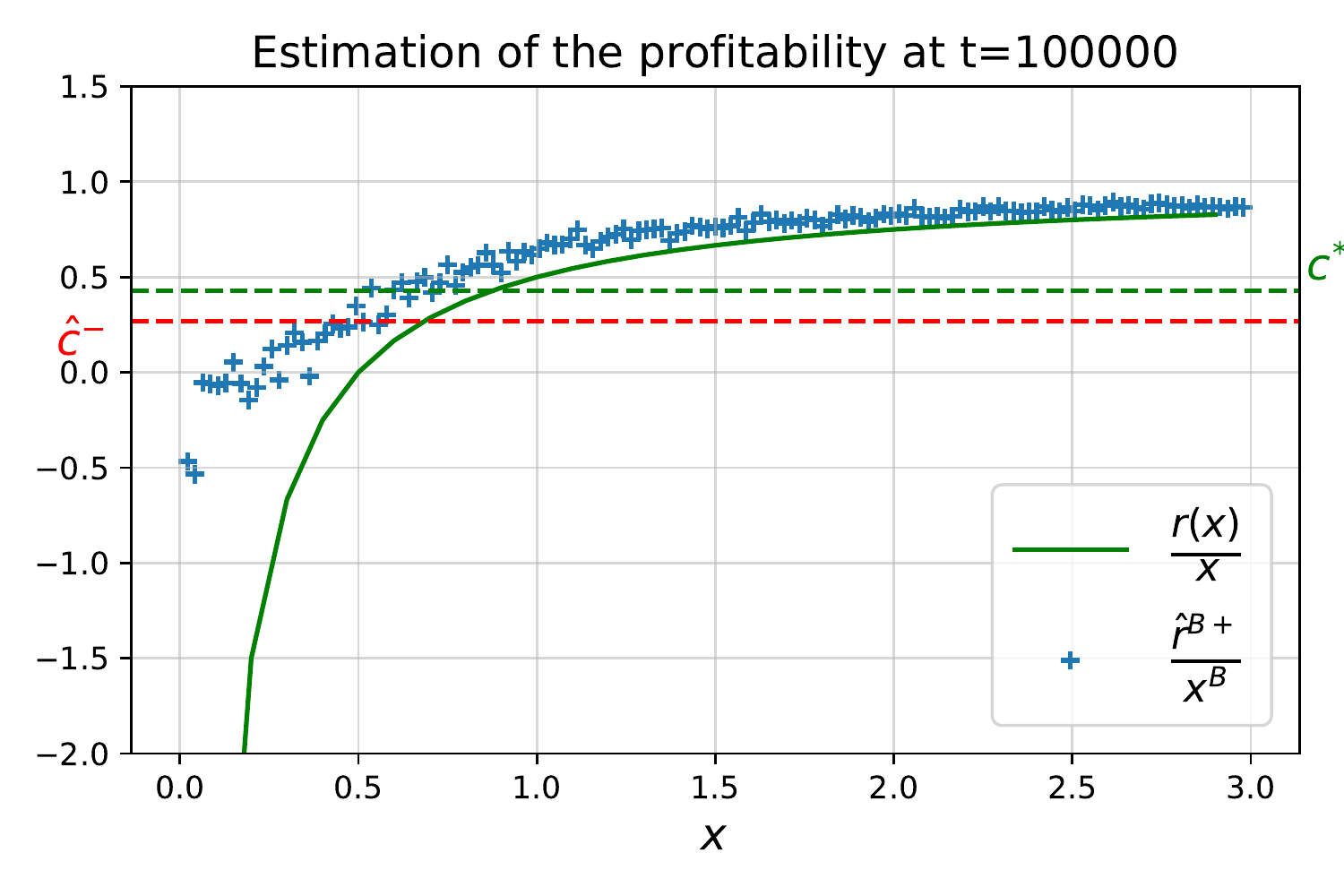}
  \caption{\label{fig:bandit1} Estimations of $\rew(\cdot)$ and $\profopt$ by \cref{alg:bandit} at $t=10^5$.
}
  \end{figure}
\end{minipage}%
\hspace{0.5cm}
\begin{minipage}{.48\linewidth}
\begin{figure}[H]
  \centering
\resizebox{\linewidth}{!}{\input{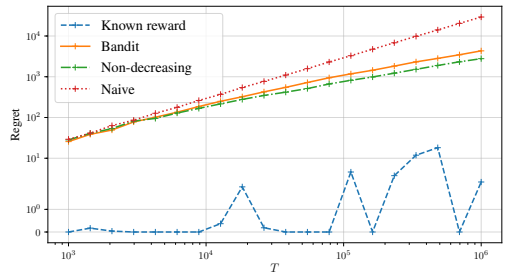}}
\caption{\label{fig:regret1}Evolution of regret with $T$ for different algorithms.}
  \end{figure}
\end{minipage}\end{adjustwidth}
\end{figure}

\cref{fig:regret1} shows in logarithmic scale the evolution of regret with the horizon $T$ for the different algorithms. The regret is averaged over $50$ runs (and $500$ runs for Known reward). 
\cref{alg:unknown-distribution} performs way better than its $\sqrt{T}$ bound, due to a better performance with margin conditions as mentioned above. Its regret is non-monotonic in $T$ here and is sometimes negative, in which case we round it to $0$ for clarity. This is only due to the variance in the Monte-Carlo estimates of the expected regret. For such a small regret, much more than $500$ runs are required to observe accurately the expected value. We unfortunately could not afford to simulate more runs for computational reasons.

\textit{Non-decreasing} corresponds to \cref{alg:monotone} and performs slightly better than \cref{alg:bandit} here. This remains a small improvement since \cref{alg:bandit} benefits from margin conditions as shown in \cref{SE:FastMargin}. Moreover, this improvement comes with additional computations.
Clearly, all these algorithms perform way better than the naive algorithm, which simply accepts every task. We indeed observe that learning occurs and the regret scales sublinearly with $T$ for all non-naive algorithms.

\subsection{Concave reward function}\label{sec:expeconcave}

This section now considers the case of the concave reward function $\rew(x) = -0.3x^2+x-0.2$. The profitability is not monotone here, making \cref{alg:monotone} unadapted.
We consider the same task distribution as in \cref{sec:expeaffine} and a Gaussian noise of variance $0.1$ for~$\varepsilon$.
Similarly to the affine case, \cref{fig:expe21,fig:expe22,fig:expe23} show the behaviors of the different algorithms, as well as the regret evolution. Here as well, the regret is averaged over $50$ runs (and $500$ runs for Known reward).

\begin{figure}[h]
\begin{subfigure}{.45\linewidth}
\centering
\resizebox{\linewidth}{!}{\includegraphics[scale=1]{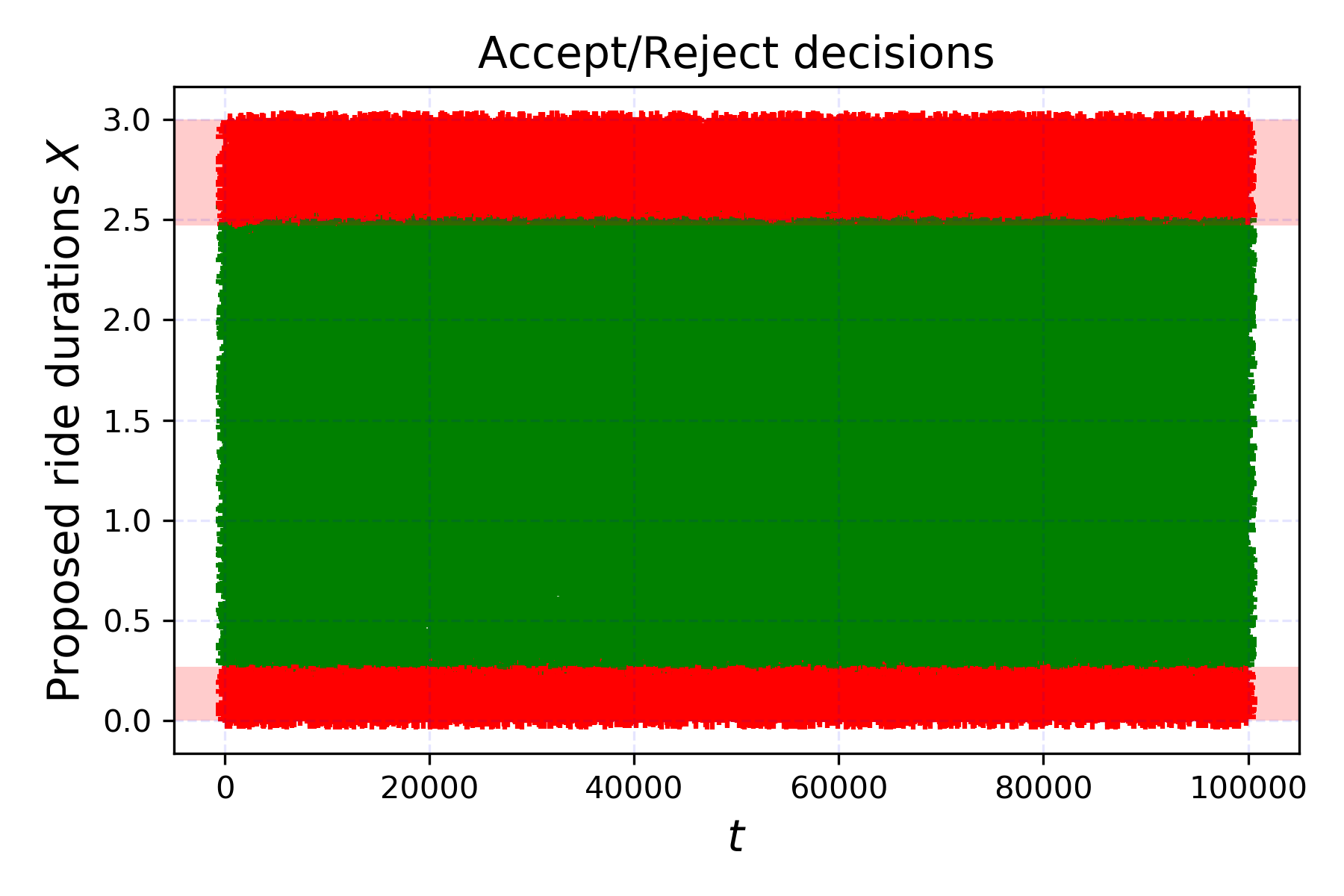}}
\caption{\cref{alg:unknown-distribution}: known reward.}
\end{subfigure}%
\hspace{0.2cm}
\begin{subfigure}{.45\linewidth}
\centering
\resizebox{\linewidth}{!}{\includegraphics[scale=1]{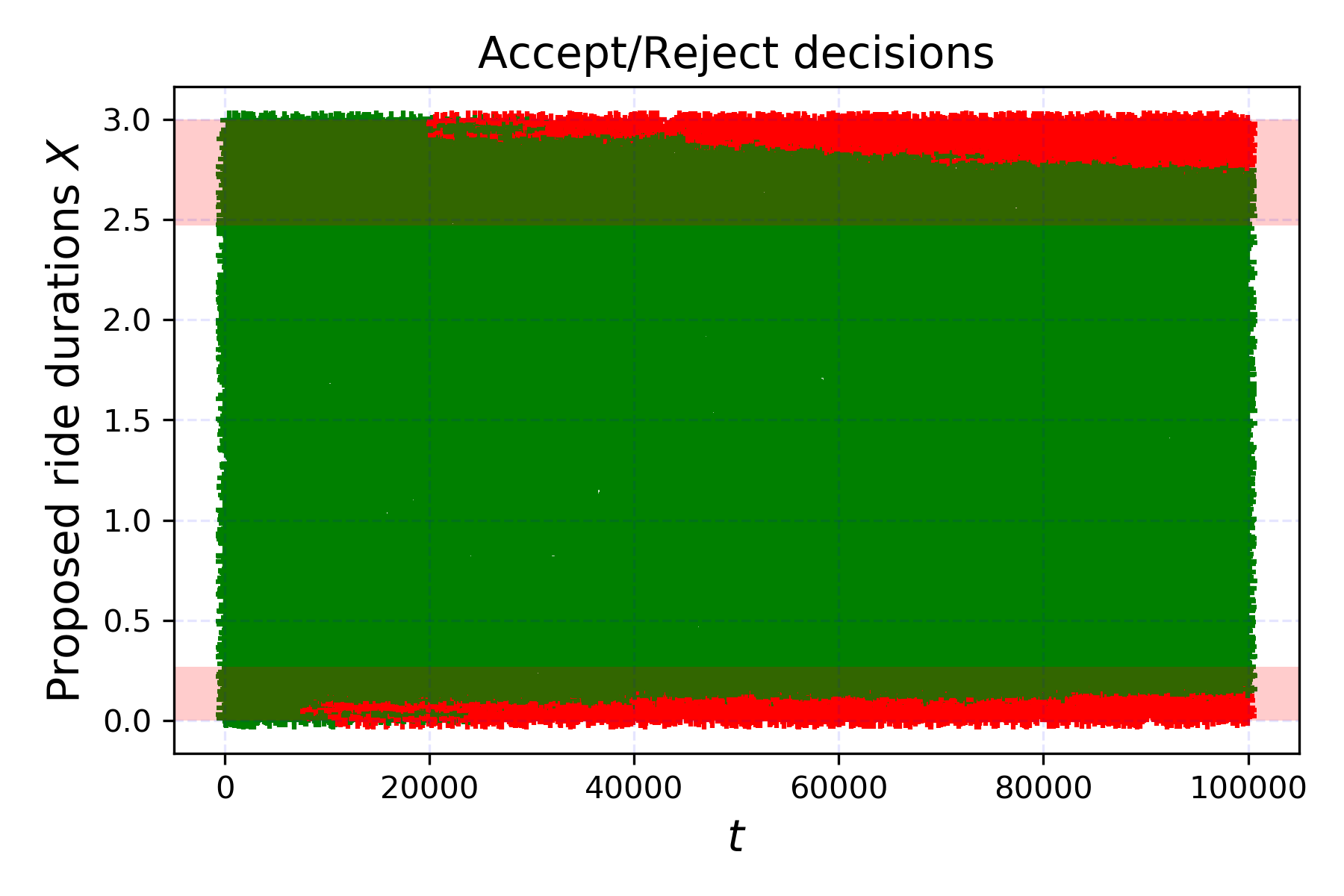}}
\caption{\cref{alg:bandit}: bandit setting.}
\end{subfigure}
\caption{\label{fig:expe21}Accept/Reject decisions on a single run, illustrated as in \cref{fig:decisions1}.}
\end{figure}

\begin{figure}[h]
\begin{adjustwidth}{-0.2cm}{-0.2cm}  
\centering
\begin{minipage}{.47\linewidth}
\begin{figure}[H]
  \includegraphics[width=\linewidth]{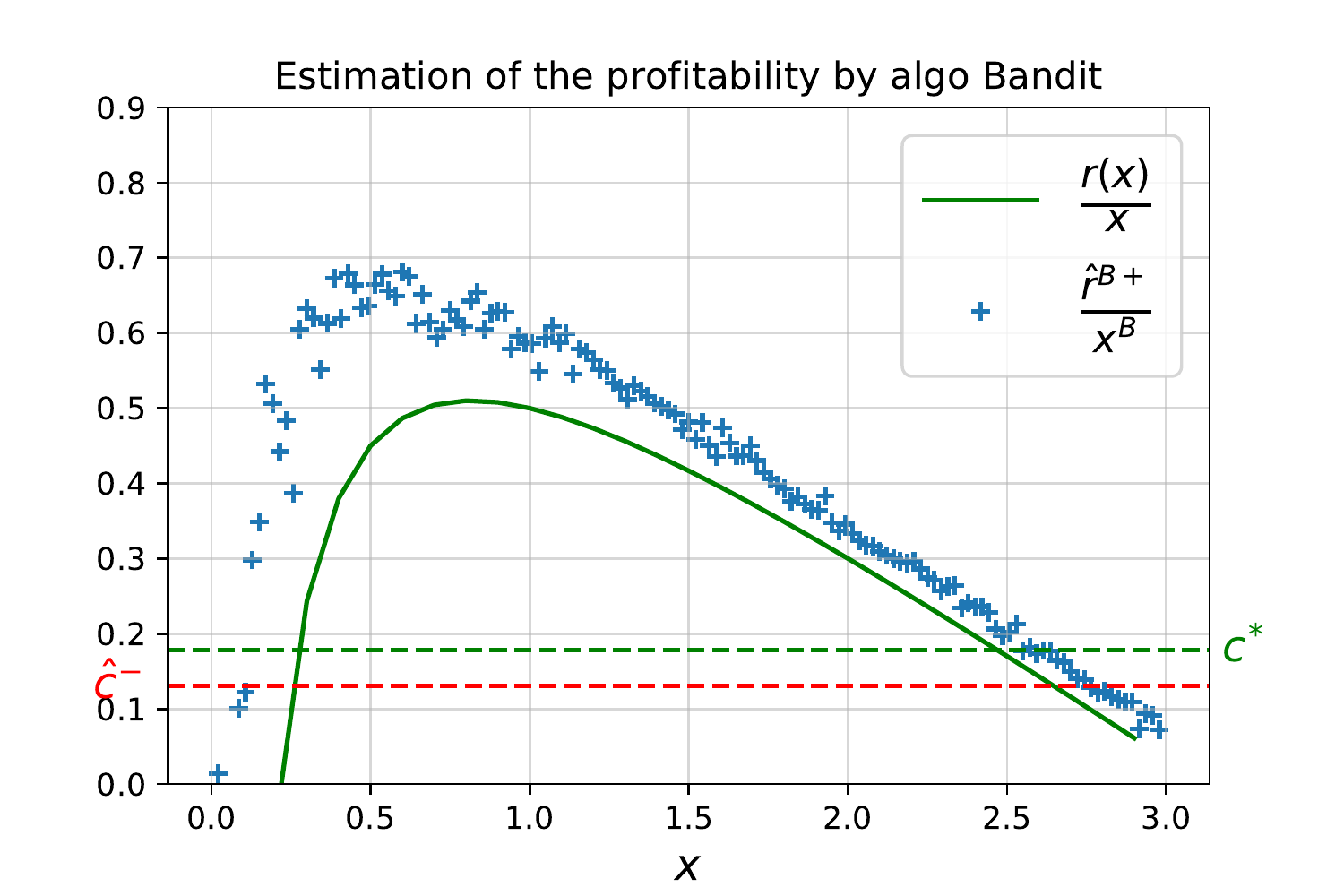}
  \caption{\label{fig:expe22}Estimation of profitability function by \cref{alg:bandit} after a time $t=10^5$.}
  \end{figure}
\end{minipage}%
\hspace{0.5cm}
\begin{minipage}{.47\linewidth}
\begin{figure}[H]
  \centering
  \resizebox{\linewidth}{!}{\input{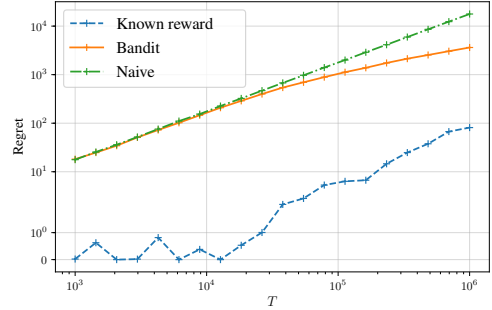}}
  \caption{\label{fig:expe23}Evolution of regret with time $T$.}
  \end{figure}
\end{minipage}\end{adjustwidth}
\end{figure}

The problem is now harder than in the previous affine case, since there are multiple suboptimal regions. 
\cref{alg:unknown-distribution} still performs very well for the same reasons, while \cref{alg:bandit} requires more time to detect suboptimal tasks, but still performs well in the end.
Similarly to \cref{fig:regret1}, the regret of Known reward is non-monotonic at first here. However it is increasing for larger values of $T$, as the variance becomes negligible with respect to the regret in this setting.
%
%
%

\section{Conclusion}

We introduced the problem of online allocation of time where an agent observes the duration of a task before accepting/declining it. Its main novelty and difficulty resides in the estimation and computation of a threshold, which depends both on the whole duration distribution and the reward function. 
After characterizing a baseline policy, we proposed computationally efficient learning algorithms with optimal regret guarantees, when the reward function is either known or unknown. Even faster learning rates are shown in \cref{sec:fast} under stronger assumptions on the duration distribution or the reward function.

The model proposed in this work is rather simple and aims at laying the foundations for possible extensions, that might be better adapted to specific applications. 
We  believe that \cref{alg:bandit} can be easily adapted to many natural extensions.
For instance, we here assume that the reward function only depends on the duration of the task, while it could also depend on additional covariates; using contextual bandits techniques with our algorithm should directly lead to an optimal solution in this case. Similarly, time is the only limited resource here, while extending this work to multiple limited resources seems possible using knapsack bandits techniques. It is also possible to extend it to several actions (instead of a single accept/decline action) by simultaneously estimating multiple rewards functions.

On the other hand, some extensions deserve careful considerations, as they lead to complex problems. In call centers for example, the agent might be able to complete several tasks simultaneously instead of a single one and characterizing the benchmark becomes much more intricate in this setting. 
Also, we consider stochastic durations/rewards here. Extending this work to the adversarial case is far from obvious. We yet believe this might be related to online knapsack problems and the use of a competitive analysis might be necessary in this case.
For ride-hailing applications, the spatial aspect is also crucial as a driver moves from a starting point to his destination during each ride. Adding a state variable to the model might thus be interesting here.

Finally, we believe this work only aims at proposing a first, general and simple model for online learning problems of time allocation, which can be extended to many settings, depending on the considered application.

\section*{Acknowledgements}

E. Boursier was supported by an AMX scholarship. V. Perchet acknowledges support from the French National Research Agency
(ANR) under grant number \#ANR-19-CE23-0026 as well as the support grant, as well as from
the grant “Investissements d’Avenir” (LabEx Ecodec/ANR-11-LABX-0047).
This research project received partial support from the COST action GAMENET.
M. Scarsini is a member of INdAM-GNAMPA.
His work was partially supported by the INdAM-GNAMPA 2020 Project ``Random walks on random games'' and the Italian MIUR PRIN 2017 Project ALGADIMAR ``Algorithms, Games, and Digital Markets.'' 
He gratefully acknowledges the kind hospitality of Ecole Normale Sup\'erieure Paris-Saclay, where this project started.

\newpage
\bibliographystyle{plainnat}
\bibliography{bibuber}

\newpage
\appendix

\section*{List of symbols}
\begin{longtable}{p{.13\textwidth} p{.82\textwidth}}
$a_i$ & decision concerning $i$-th task\\
$\bin_{j}$ & $j$-th bin\\
$\profopt$ & unique root of the function $\opzero$\\
$\prof_n$ & unique root of $\opzero_n$\\
$\profemp_n$ & estimate of $\profopt$\\
$\profemp_n^-$ & lower estimate of $\profopt$\\
$\tilde{\prof}$ & profitability threshold associated to $\rew_{\mathcal{E}}$\\
$\ubrv$ & upper bound for $\rv_i$\\
$\ubrew$ & upper bound for $\rew$\\
$\lbrew$ & lower bound for $\rew$\\
$\mathcal{E}$ & subset of sub-profitable tasks, introduced in \cref{lem:decline}\\
$\binsize$ & size of bins\\
$\mathcal{H}_t$ & history at time $t$\\
$K$ & $\card(\supp(X_1))$\\
$K_n$ & $\card\left\{ \rv_{i} \mid 1 \leq i \leq n\right\}$\\
$\holdcst$ & constant of the H\"older class, defined in \cref{def:holder}\\
$\numbin$ & number of bins\\
$n_\bin$ & number of received task proposals in the bin $\bin$\\
$\countbin_\bin$ & $\sum_{i=1}^n \one(\rv_i\in \bin \text{ and } a_i=1)$\\
$\profset$ & reward per time unit function, defined in \cref{eq:profit-thresh}\\
$\profset_n$ & empirical counterpart of $\profset$, defined in \cref{eq:thresh}\\
$q_\eps$ & $\E \bracks*{X\one(\rew(\rv)\geq (\profopt + \eps)\rv)}$, defined in \cref{eq:q-epsilon}\\
$\rew$ & reward function\\
$\rewest_n$ & estimate of $\rew$\\
$\rew(\rv_i)$ & $\expect[Y_{i}]$\\
$\rewest_n^{\bin}$ & $\rewest_n(\duration)\ \forall \duration\in \bin$, defined in \cref{eq:r-n-B}\\
$\rewest_n^+$ & upper estimate of $\rew$\\
$\tilde{\rew}_n^\bin$ & defined in \cref{eq:r-tilde}\\
$\rew_{\mathcal{E}}(x)$ & $\rew(x)\one(x \not\in \mathcal{E})$, introduced in \cref{lem:decline}\\
$\reg$ & regret, defined in \cref{eq:regret}\\
$\thresh$ & threshold\\
$\threshopt$ & threshold such that $\rew(\threshopt)/\threshopt = \profopt$\\
$\thresh_{n}$ & $\min \threshset_{n}$\\
$\exprv_i$ & idling time after $i$-th task\\
$\threshset_{n}$ & set of all potentially optimal thresholds, defined in \cref{eq:thresh-set}\\
$\timeT$ & problem horizon\\
$t_n$ & time at which the $n$-th task is proposed\\
$\mathcal{T}_n$ & amount of time after $n$ proposals\\
$\reward_\pi$ & expected reward, defined in \cref{eq:defnreward}\\
$\payv$ & value function\\
$w$ & bound on the value function\\
$x^\bin$ & $\min \left\{ x \in B \right\}$\\
$\rv_i$ &  duration of the $i$-th task\\
$Y_i$ & reward of the $i$-th task\\
$\alpha$ & parameter of the margin condition\\
$\holdexp$ & exponent of the H\"older class, defined in \cref{def:holder}\\
$\profrew_n$ & $\frac{\pr \E[\rew(\rv_n)\one(\rew(\rv_n)\geq\prof_n\rv_n)]}{1+\pr\E[\rv_n\one(\rew(\rv_n)\geq \prof_n\rv_n)]}$, defined in \cref{eq:gamma-n}\\
$\delta$ & probability bound, introduced in \cref{prop:prof_estimate}\\
$\Delta_{\min}$ & $\inf\{  \profopt \duration - \rew(\duration) \,\vert\,\rew(\duration) < \profopt \duration \}$, introduced in \cref{th:FastFinite}\\
$\noise_i$ & subgaussian noise\\
$\confprofset_{n}$ & defined in \cref{eq:confprofset}\\
$\confrew_{n-1}$ & defined in \cref{eq:eta-n-1}\\
$\totdur$ & total number of accepted task proposals\\
$\kappa$ & universal constant\\
$\pr$ & arrival rate of Poisson process\\
$\confprof_{n-1}$ & defined in \cref{eq:xi-n-1}\\
$\pi$ & policy\\
$\pi^*$ & threshold policy\\
$\sigma^2$ & proxy of the subgaussian noise\\
$\opzero$ & function $
\prof \mapsto \pr \E\left[\left(\rew(\rv) - \prof \rv \right)_+\right] - \prof$, defined in \cref{eq:opzero}\\
$\opzero_n$ &  empirical counterpart of $\opzero$, defined in \cref{eq:opzero_n}\\
\end{longtable}

\section*{Appendix}

\section{Fast rates}\label{sec:fast}
In this section, we investigate several cases where the general slow rate convergence  can be improved. 
We use two  assumptions about the distribution of task durations $\rv_i$ and one about the reward function~$\rew$.
For the sake of clarity, the proofs of all the results from this section are deferred to \cref{proof:fast}.

\subsection{Finite support}\label{SE:FastFinite}

The first assumption that can be used to obtain fast rates of convergence is the so-called ``finite support assumption''.
Under this assumption, the learning algorithm is quite simple and it consists in keeping an under-estimation  $\profemp_n^-$ of $\profopt$ and  an upper-estimation $\rewest^+_n(\duration)$ of $\rew(\duration)$ for all different values of $\duration$, whose number is denoted by  $K= \card(\supp(X_1))$. 

The finite algorithm is then based on \cref{alg:bandit}, with the difference that each bin corresponds to a single value of the support of the distribution: $B_j = \{x_j\}$, where $x_j$ is the $j$-th element in $\supp(X_1)$.
The estimates $\profemp_n$ and $\rewest_n$ are then computed similarly to \cref{alg:bandit}, but their uncertainty bounds are tighter. This yields the following optimistic estimates:
\begin{gather*}
\rewest^+_n(\duration) =\rewest_n(\duration) + \sigma\sqrt{\frac{\ln(K/\delta)}{2\countbin_{\bin}}} \qquad \text{for } \bin = \lbrace x \rbrace \text{ and } \\ \profemp^-_n= \profemp_n - 2\pr\sqrt{\sigma^2+\frac{(\ubrew-\lbrew)^2}{4}}\sqrt{\frac{\ln(1/\delta)}{n}}-\pr\sigma\sqrt{\frac{K}{2n}} - 8\pr\frac{K(D-E)}{n}.
\end{gather*}
The task $\rv_{n+1}$ is then accepted if and only if $\rewest^+_n(\rv_{n+1}) \geq \profemp^-_n\rv_{n+1}$. 

\begin{theorem}
\label{th:FastFinite}
Assume that the support of $\rv$ has cardinality $K$. 
Then, taking $\delta = 1/\timeT$, the finite algorithm's regret scales as
\[
\reg(\timeT) \leq \kappa_2  (1+\pr\ubrv)\max(\sigma , \ubrew-\lbrew)\sqrt{K\timeT\ln(\timeT)},
\]
where $\kappa_2 $ is some universal constant independent of any problem parameter. 

Moreover,  if  $\Delta_{\min} \coloneqq \inf\{  \profopt \duration - \rew(\duration) \,\vert\,\rew(\duration) < \profopt \duration \}$, then 
\[
\reg(\timeT) \leq \kappa_3 (1+\pr^2\ubrv^2)\max(\sigma , \ubrew-\lbrew)^2 K\frac{\ln\timeT}{\Delta_{\min}},
\]
for another universal constant $\kappa_3$.
\end{theorem}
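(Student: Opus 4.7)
The plan is to adapt the proof of \cref{th:regret-bandit} to the finite-support case, exploiting the fact that no Hölder discretization error is needed: each value in $\supp(X)$ forms its own bin, the bin-representative satisfies $x^{\bin}=x$, and the reward estimate per bin converges at rate $\sqrt{\ln(K/\delta)/\countbin_{\bin}}$ instead of the regressogram rate. First, as in \cref{lem:rewest} and \cref{prop:giveaname0}, on a ``good'' event of probability at least $1-\kappa'\totdur\delta$ (via union bounds over the $K$ possible values of $\rv$ and over times $n\le \totdur$), the estimates satisfy $\rewest_n^+(x)\ge \rew(x)$ for all $x\in\supp(X)$ and $\profemp_n^-\le \profopt$ for all $n$. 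Consequently, every task with $\rew(x)\ge \profopt x$ is always accepted, and regret is accrued only on sub-profitable durations.

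For a sub-profitable $x$ with gap $\Delta(x)\coloneqq \profopt x - \rew(x) >0$, the acceptance rule $\rewest^+_{n-1}(x)\ge \profemp^-_{n-1}x$ combined with the concentration bounds gives, roughly,
\begin{equation*}
\Delta(x) \;\le\; 2\sigma\sqrt{\tfrac{\ln(K/\delta)}{2\countbin_x}} + 2\pr x\sqrt{\sigma^2 + (\tfrac{\ubrew-\lbrew}{2})^2}\sqrt{\tfrac{\ln(1/\delta)}{n}} + \pr \sigma x\sqrt{\tfrac{K}{2n}} + \tfrac{8\pr K(\ubrew-\lbrew)}{n}\,x.
\end{equation*}
Here $\countbin_x$ is the number of \emph{accepted} proposals with duration $x$, and the argument from \cref{lem:decline} lets us use $\countbin_x$ (not total proposals of $x$) in the $\rewest$ concentration. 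The last three terms are negligible compared with the first once $n\gtrsim K$; so essentially $\countbin_x\lesssim \max(\sigma,\ubrew-\lbrew)^2\ln(KT)/\Delta(x)^2$.

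For the first (distribution-free) bound I will sum per-acceptance regret $\Delta(x)+\profrew\cdot x$ (the extra $\profopt x$ coming from the ``time lost'' during the task, as in the proof of \cref{th:regret-bandit}): each sub-profitable $x$ contributes at most $\countbin_x\,\Delta(x) \lesssim \max(\sigma,\ubrew-\lbrew)\sqrt{\countbin_x \ln(KT)}$, and summing over the $\le K$ sub-profitable values with $\sum_x \countbin_x \le \totdur$ and Cauchy--Schwarz yields the announced $\sqrt{KT\ln T}$ rate (the $(1+\pr \ubrv)$ factor comes from converting between ``number of proposals $\totdur$'' and ``horizon $T$'' as $\totdur\le (1+\pr\ubrv)(\pr T+1)$ in expectation, exactly as in the proof of \cref{th:regret-fullinfo}).

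For the second (margin) bound I exploit $\Delta(x)\ge \Delta_{\min}$ for every sub-profitable $x$: the display above forces $\countbin_x \lesssim \max(\sigma,\ubrew-\lbrew)^2\ln(T)/\Delta_{\min}^2$, so the total regret contributed by bin $x$ is at most $\countbin_x\cdot (\Delta(x)+\profopt x)\lesssim \max(\sigma,\ubrew-\lbrew)^2(1+\pr\ubrv)\ln(T)/\Delta_{\min}$, and summing over the $\le K$ bins yields the $K\ln T/\Delta_{\min}$ rate, with the extra $(1+\pr\ubrv)^2$ absorbing both the proposal-to-time conversion and the contribution of the ``lost time'' $\profopt x$ per acceptance. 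The main technical difficulty, shared with \cref{th:regret-bandit}, is decoupling the $\countbin_x$ concentration of $\rewest_n^{\bin}$ from the $n$-dependent concentration of $\profemp_n$; the former is handled only because \cref{lem:decline} guarantees that declining tasks does not bias $\profemp_n$, while the latter must use a union bound over all $n\le \totdur$, explaining the $\sqrt{\ln T}$ factor in the first statement and the $\ln T$ in the second.
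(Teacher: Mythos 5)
Your overall skeleton is the same as the paper's: per-value Hoeffding concentration with a $\ln(K/\delta)$ union bound, the finite-support analogue of \cref{prop:giveaname0} (relying on \cref{lem:decline} so that declining does not bias $\profemp_n$, and on the Chernoff/Cauchy--Schwarz bound $\E[\vert\rewest_n(\rv_n)-\rew(\rv_n)\vert]\lesssim \sigma\sqrt{K/n}+K(\ubrew-\lbrew)/n$), then a standard bandit count of erroneous acceptances per duration, concluded by a gap-dependent sum and a Cauchy--Schwarz/worst-case-gap conversion. The genuine gap is the step where you declare the three $n$-dependent terms in the acceptance inequality ``negligible once $n\gtrsim K$'' and conclude $\countbin_x\lesssim\max(\sigma,\ubrew-\lbrew)^2\ln(K\timeT)/\Delta(x)^2$. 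They are not negligible: a sub-profitable $x$ can be accepted at \emph{every} proposal, regardless of $\countbin_x$, as long as $\pr x\max(\sigma,\ubrew-\lbrew)\sqrt{(K+\ln(1/\delta))/n}\gtrsim\Delta(x)$, i.e.\ during the first $n_0(x)\approx\pr^2\ubrv^2\max(\sigma,\ubrew-\lbrew)^2\,(K+\ln(1/\delta))/\Delta(x)^2$ rounds. One must either sum these per-round $\ubrv\,\confprof_{n-1}$-type terms over $n\le\totdur$ (as in the proof of \cref{th:regret-bandit}) or fold them into the bound on the number of accepted tasks of duration $x$, which is exactly what the paper does (its bound on $N_{\{x\}}$ carries the extra $K\pr^2(\ubrv^2+(\ubrew-\lbrew)^2)$ and $(1+\pr^2\ubrv^2)\ln(K/\delta)$ terms in the numerator); these terms are the true source of the $(1+\pr\ubrv)$ and $(1+\pr^2\ubrv^2)$ prefactors in the theorem.

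Relatedly, your explanation of those prefactors does not hold up. First, $\E[\totdur]\le\pr\timeT+1$ directly from \cref{eq:ub-totdur} (the idle times alone exceed $\timeT$), so there is no $(1+\pr\ubrv)$ proposals-to-time conversion. Second, the extra ``$\profopt x$ lost-time'' charge per acceptance is double counting: on the good event the per-round regret equals $\E[(\profopt\rv-\rew(\rv))\one(\text{sub-profitable and accepted})]$, the waiting-time term $\profopt\E[\exprv_n]=\E[(\rew(\rv)-\profopt\rv)_+]$ canceling against the always-accepted profitable tasks, so the per-acceptance loss is exactly $\Delta(x)$. Worse, if you do keep that extra charge, $\countbin_x\cdot\profopt x\lesssim\pr\ubrew\ubrv\,\ln(\timeT)/\Delta(x)^2$ scales like $1/\Delta_{\min}^2$, so your second bound would not follow as written. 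Once the $n$-dependent confidence terms are accounted for properly (summed over rounds, or converted into the early-round acceptance count above), both stated rates follow exactly as in the paper.
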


\begin{remark}
The algorithm complexity scales with $K$ in this particular setting. The size $K$ of the support is assumed to be known for simplicity. 
For an unknown $K$, this algorithm can be adapted by using the observed value $K_n \coloneqq \card\left\{ \rv_{i} \mid 1 \leq i \leq n\right\}$ instead of $K$ and restarting every time $K_n$ increases. 
Such an algorithm  yields a similar performance, if not better in practice.
\end{remark}

\subsection{Margin condition}\label{SE:FastMargin}
The finite support assumption gives very fast learning rates but is quite strong. 
It is possible to study some ``intermediate'' regime between the general H\"older case and the finite support assumption, thanks to the margin condition recalled below. Intuitively, this condition states that slightly suboptimal tasks have a small probability of being proposed, thus limiting the regret from declining them.

\begin{definition}
The distribution of task durations $\rv_{i}$ satisfies the \emph{margin condition} with parameter $\alpha \geq 0$ if there exists $\kappa_0 >0$ such that for every $\varepsilon >0$,
\[
\proba\big( \profopt\rv - \varepsilon \leq \rew(\rv) < \profopt \rv \big) \leq   \kappa_0 \varepsilon^\alpha.
\]
\end{definition}

If the margin condition $\alpha$ is small enough (\ie the problem is not too easy and $\alpha <1$), then we can apply the very same algorithm as in the general case when $\rew$ is $(\holdcst,\holdexp)$-H\"older. 
We also assume that the distribution of task durations has a positive density lower-bounded by $\underline{\kappa}>0$ and upper bounded by $\overline{\kappa}>0$; with a slight abuse of notation, we call this being Lebesgue-equivalent.

\begin{theorem}\label{TH:Margin}
If $\rew$ is $(L,\beta)$-H\"older, the distribution of $\rv$ satisfies the margin condition with (unknown) parameter $\alpha \in [0, 1)$, and is Lebesgue-equivalent, then the regret of \cref{alg:bandit} (with same $\delta$ and $M$ as in \cref{th:regret-bandit}) scales as
\[
\reg(\timeT) \leq \kappa_4 \sigma^{1+\alpha}\left(\pr\ubrv\right)^{1+\alpha}\holdcst^{\frac{1+\alpha}{2\beta+1}}
(\pr\timeT+1)^{1-\frac{\holdexp}{2\holdexp+1}(1+\alpha)}\log(\pr\timeT+1)^{\frac{1+\alpha}{2}},
\]
where $\kappa_4$ is a constant depending only on $\underline{\kappa}$ and $\kappa_0$ defined in the margin condition assumption.
\end{theorem}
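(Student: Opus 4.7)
The plan is to refine the proof of Theorem~\ref{th:regret-bandit} by replacing the worst-case gap-free bound by a gap-sensitive one, then summing the per-bin contributions using the margin condition. As in the proof of Theorem~\ref{th:regret-bandit}, combine Lemma~\ref{lem:rewest} with Proposition~\ref{prop:giveaname0} so that with probability at least $1-\bigoh(T\delta)$, for every $n\leq N$ and every $x\in[0,\ubrv]$ one has $\rewest_n^+(x)\geq r(x)$ and $\profemp_n^-\leq \profopt$. Under this event the learning rule accepts every truly optimal task, so all regret comes from accepting suboptimal ones, with per-accept cost equal to the gap $\Delta(x):=\profopt x - r(x)\geq 0$.

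Next, fix a bin $\bin$ with representative $x^\bin$ and let $\Delta_\bin:=\profopt x^\bin - r(x^\bin)>0$. The acceptance rule keeps drawing from $\bin$ while $\rewest_{n-1}^+(x)\geq \profemp_{n-1}^- x^\bin$. Using the concentration inequalities hidden in $\confrew_{n-1}$ and $\confprof_{n-1}$ together with the H\"older approximation error $Lh^\beta$, a direct computation shows that once
\[
N_\bin \;\gtrsim\; \frac{\sigma^2\ln T}{(\Delta_\bin - C\,Lh^\beta)_+^2} + \frac{\pr^2\sigma^2 \ln T}{(\Delta_\bin - C\,Lh^\beta)_+^2}
\]
samples have been accepted in $\bin$, the bin is eliminated. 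Provided $\Delta_\bin \geq 2C\,Lh^\beta$, this yields a per-bin regret contribution bounded by $\Delta_\bin N_\bin \lesssim (1+\pr\ubrv)^2\sigma^2 \ln T /\Delta_\bin$. For bins with $\Delta_\bin < 2C\,Lh^\beta$, the discretisation bias dominates and we use the trivial per-bin bound $\Delta_\bin \cdot (\text{samples in the bin})$.

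Now aggregate the bins by dyadic gap scales $\Delta_\bin \in [2^{j}\Delta^\star, 2^{j+1}\Delta^\star)$ with $\Delta^\star \asymp Lh^\beta$. By the margin condition and the upper bound $\overline{\kappa}$ on the density, the probability mass in each scale is at most $\kappa_0 (2^{j+1}\Delta^\star)^\alpha$; combined with the lower bound $\underline{\kappa}$ on the density, the number of bins at scale $j$ is at most $\overline{\kappa}\kappa_0(2^{j+1}\Delta^\star)^\alpha / (\underline{\kappa} h)$. Summing the per-bin regrets gives
\begin{align*}
\sum_{j\geq 0}\frac{(1+\pr\ubrv)^2\sigma^2\ln T}{2^j\Delta^\star}\cdot \frac{(2^{j+1}\Delta^\star)^\alpha}{h}
&\lesssim \frac{(1+\pr\ubrv)^2\sigma^2\ln T}{h}(\Delta^\star)^{\alpha-1},
\end{align*}
where the geometric series converges thanks to $\alpha<1$. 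The contribution of bins with $\Delta_\bin<\Delta^\star$ is bounded by $\Delta^\star \cdot \pr T \cdot \prob(\Delta(X)\leq \Delta^\star)\lesssim \pr T(\Delta^\star)^{1+\alpha}$. Plugging the choice of $M$ from Theorem~\ref{th:regret-bandit}, namely $h\asymp L^{-2/(2\beta+1)}(\pr T)^{-1/(2\beta+1)}$, so that $\Delta^\star\asymp L^{1/(2\beta+1)}(\pr T)^{-\beta/(2\beta+1)}$, both contributions balance at the announced rate $(\pr T)^{1-\beta(1+\alpha)/(2\beta+1)}(\ln \pr T)^{(1+\alpha)/2}$, up to the prefactors in $\sigma,\pr,\ubrv,L$.

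The main obstacle is the coupled uncertainty in the acceptance rule: the empirical reward $\rewest_n^+$ and the empirical threshold $\profemp_n^-$ are both noisy, and $\profemp_n^-$ depends globally on which bins have been eliminated. Showing that $N_\bin$ grows proportionally to the number of task proposals in $\bin$ (so that the elimination time is really $\bigoh(\ln T/\Delta_\bin^2)$) relies on the Lebesgue-equivalence of the density and on Lemma~\ref{lem:decline} guaranteeing that previous eliminations do not perturb the estimation of $\profopt$. Handling simultaneously the discretisation bias $Lh^\beta$ inside the inverse-gap $1/\Delta_\bin$ (instead of treating it as an independent additive error, as in Theorem~\ref{th:regret-bandit}) is the delicate bookkeeping that governs the final exponent.
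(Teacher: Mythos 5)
Your plan follows essentially the same route as the paper's proof, which itself adapts the analysis of Perchet and Rigollet: on the high-probability event all optimal tasks are accepted, regret is charged per bin through a gap-dependent count, the margin condition together with the density lower bound $\underline{\kappa}$ limits how many bins can have a small gap, and bins with gap below a cutoff of order $\holdcst\binsize^{\holdexp}$ are handled directly by the margin condition; your dyadic peeling over gap scales is an equivalent repackaging of the paper's sorted-gap summation. One step, however, is not valid as written: you claim a bin $\bin$ stops being accepted once $\countbin_\bin\gtrsim(1+\pr^2)\sigma^2\ln\timeT/(\Delta_\bin-C\holdcst\binsize^{\holdexp})_+^2$, thereby folding the threshold uncertainty into a per-bin sample count. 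But $\confprof_{n-1}$ does not shrink with $\countbin_\bin$; it shrinks with the global round index $n$ (its dominant term is of order $\pr\sigma\sqrt{\log n/(\binsize n)}$, plus fixed discretization biases), so a bin can keep being accepted, and $\countbin_\bin$ keep growing beyond your formula, solely because the global estimate of $\profopt$ is still loose. The paper sidesteps this by splitting the acceptance indicator into a $\confrew$-event (per-bin counting, the well/ill-behaved bin analysis) and a $\confprof$-event bounded directly by the margin condition, which produces the term $\sum_n(\ubrv\confprof_{n-1})^{1+\alpha}$ of the right order. Your route can be patched — the density upper bound converts the $\bigoh\bigl(\pr^2\ubrv^2\sigma^2\log\timeT/(\binsize\Delta_\bin^2)\bigr)$ global rounds during which $\ubrv\confprof_n\gtrsim\Delta_\bin$ into $\bigoh\bigl(\overline{\kappa}\,\pr^2\ubrv^2\sigma^2\log\timeT/\Delta_\bin^2\bigr)$ proposals landing in the bin — but that conversion (plus concentration of the per-bin proposal counts) is exactly the bookkeeping you leave implicit, and it is where Lebesgue-equivalence is genuinely needed.

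The second issue is quantitative: with your cutoff $\Delta^\star\asymp\holdcst\binsize^{\holdexp}$ the large-gap sum carries a full $\ln(\pr\timeT+1)$ and a $\sigma^2$ prefactor, so you obtain a bound of order $\sigma^2\log(\pr\timeT+1)$ times the right power of $\timeT$, which for $\alpha<1$ is strictly weaker than the stated $\sigma^{1+\alpha}\log(\pr\timeT+1)^{(1+\alpha)/2}$. To prove the theorem as stated you must put the noise scale and the $\sqrt{\log}$ into the cutoff, as the paper does with $c_1\propto\sigma\sqrt{\log(\pr\timeT+1)}$: the small-gap contribution then scales as $c_1^{1+\alpha}\numbin^{-\holdexp(1+\alpha)}(\pr\timeT+1)$ and the large-gap one as $\sigma^2\log(\pr\timeT+1)\,c_1^{\alpha-1}\numbin^{1+\holdexp(1-\alpha)}$, and the two balance at the announced rate with the announced powers of $\sigma$ and of the logarithm.
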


\begin{remark}
The proof follows the lines of Theorem~4.1 in \citet{PerRig:AS2013}. The term $\log(\pr\timeT+1)$  can be removed with a refined analysis using decreasing confidence levels $\delta_n = 1/n^2$ instead of a fixed $\delta$.\\
For larger margin conditions $\alpha \geq 1$, improved rates are still possible using an adaptive binning instead. The analysis should also follow the lines of \citet{PerRig:AS2013}.
\end{remark}

\subsection{Monotone profitability function}
\label{SE:FastMonotone}
We consider in this section an additional assumption on the profitability function $\duration\mapsto\rew(\duration)/\duration$, motivated by practical considerations. We assume that this function is non-decreasing (as the impact of some fixed cost declines with the task duration) -- the analysis would follow the same line for a non-increasing profitability function (to model diminishing returns for instance). 
A sufficient condition is convexity of the reward function $\rew$  with $\rew(0) \leq 0$. It is not difficult to see that a non-decreasing profitability function ensures the existence of  a threshold\footnote{For non-continuous reward, we only have that $\lim_{\substack{x \to \threshopt \\ x<\threshopt}} \frac{\rew(x)}{x} \leq \profopt \leq \lim_{\substack{x \to \threshopt \\ x>\threshopt}} \frac{\rew(x)}{x}$, but the following remains valid.} $\threshopt\in[0,\ubrv]$, such that
$ \rew(\threshopt)/\threshopt = \profopt$. 
Moreover, by monotonicity, it is  optimal to accept a task with duration $\duration$ if and only if $\duration\geq \threshopt$. 
Hence the problem reduces to learning this optimal threshold based on the reward gathered in the first tasks. 
In particular, we consider plug-in strategies that accept any task whose duration is large enough (and decline the other ones). 
The reward per time unit of these strategies is then a function of the threshold~$\thresh$, defined by
\begin{equation}
\label{eq:profit-thresh}
\profset(\thresh) = \frac{ \pr\E[\rew(\rv)\one(\rv\geq s)]}{1+\pr\E[\rv \one(\rv\geq s)]}.
\end{equation}

Our policy uses and updates a threshold $\thresh_n$  for all rounds.
Let $\numstage = 2(\pr\timeT+1)$ be an upper bound on the total number of received tasks with high probability.
As the objective is to find the optimum of the function $\profset(\cdot)$, we introduce its empirical counterparts defined, at the end of stage $n$, by
\begin{align}
\label{eq:thresh}
\profset_n(\thresh) = \frac{\frac{\pr}{n}\sum_{i=1}^{n} \noisedrew_i\one(\rv_i\geq\thresh)}{1+\frac{\pr}{n}\sum_{i=1}^{n} \rv_i\one(\rv_i\geq\thresh)}.
\end{align}
Given some confidence level $\delta\in(0,1]$ -- to be chosen later --, we also introduce the error term
\begin{equation}\label{eq:confprofset}
\confprofset_n = \left(\sqrt{\sigma^2 + \frac{(\ubrew - \lbrew)^2}{4}} + \frac{\ubrew-\lbrew}{\sqrt{2}}(\pr\ubrv+2)\right)\sqrt{\frac{\ln\left(\frac{2(\numstage+1)}{\delta}\right)}{n-1}} + \pr\frac{\ubrew-\lbrew}{n},
\end{equation}
in the estimation of $\profset(\cdot)$ by $\profset_n(\cdot)$, see \cref{lem:profset_estimate} in the Appendix. 

The policy starts with  a first threshold  $\thresh_1=0$. After observing a task, the first estimate $\profset_1(\cdot)$ is computed as in \cref{eq:thresh} and we proceed inductively. 
After $n-1$ tasks, given the estimate $\profset_{n-1}(\cdot)$ on $[\thresh_{n-1},\ubrv]$, we compute a lower estimate  of $\profset(\threshopt)$ by $\max_\thresh \profset_n(\thresh) - \confprofset_n$ (for notational convenience, we also introduce $\threshopt_{n-1} = \argmax_{\thresh\in[\thresh_{n-1},\ubrv]} \profset_{n-1}(\thresh)$).  A threshold is then potentially optimal if its associated reward per time unit (plus the error term) 
$\profset_n(\thresh) + \confprofset_n$ exceeds this lower estimate. 
Mathematically,  the set of all potentially optimal thresholds is  therefore:
\begin{equation}
\label{eq:thresh-set}
\threshset_{n} = \braces*{\thresh\in[\thresh_{n-1},\ubrv]\,\bigg\vert\, (\profset_{n-1}(\thresh)-\profset_{n-1}(\threshopt_{n-1}))\parens*{\frac{1}{\pr}+\frac{1}{n-1}\sum_{i=1}^{n-1} \rv_i\one(\rv_i\geq\thresh)}
+2\confprofset_{n-1} \geq 0 }.
\end{equation}
We then define the new threshold to be used in the next stage as  $\thresh_{n} = \min \threshset_{n}$. The rationale behind this choice is that it provides information for all the (potentially optimal) thresholds in $\threshset_{n}$.
The pseudo-code is given in \cref{alg:monotone} below.

We can finally state the main result when profitability is monotone. Regret scales as $\sqrt{T}$ independently of the regularity of the reward function $\rew$.  

\begin{theorem}\label{th:regret-profitability-function}
Assume the profitability function $\duration\mapsto \rew(\duration)/\duration$ is non-decreasing. Then, the regret of \cref{alg:monotone} (taking $\delta = 1/\timeT^2$) scales as
\begin{align}
\reg(\timeT) \leq \kappa_5 (\sigma+\ubrew-\lbrew)(1+\pr\ubrv)\sqrt{(\pr\timeT+1)\ln(\pr\timeT)},
\end{align}
where $\kappa_5$ is a constant independent from any problem parameter.
\end{theorem}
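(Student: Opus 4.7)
The plan is to implement an elimination/optimism argument that exploits the one‑dimensional structure of the problem under monotone profitability: we are effectively trying to locate the single threshold $\threshopt$ rather than estimate the whole reward function, which is why the final bound is free of the Hölder exponent $\holdexp$.

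First, I would rewrite the criterion defining $\threshset_n$ in \cref{eq:thresh-set} as a linear inequality in the empirical means
\(
\hat\mu_N(s)=\tfrac1{n-1}\sum_{i<n}\noisedrew_i\one(\rv_i\ge s),\;
\hat\mu_D(s)=\tfrac1{n-1}\sum_{i<n}\rv_i\one(\rv_i\ge s).
\)
Indeed, $\profset_{n-1}(s)\bigl(\tfrac1\pr+\hat\mu_D(s)\bigr)=\hat\mu_N(s)$, so the condition becomes
$\hat\mu_N(s)-\profset_{n-1}(\threshopt_{n-1})\bigl(\tfrac1\pr+\hat\mu_D(s)\bigr)+2\confprofset_{n-1}\ge0$,
which is linear in $(\hat\mu_N,\hat\mu_D)$ and hence amenable to standard concentration. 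The key observation underlying this rewriting is that comparing ratios with a common denominator reduces to a single Bernstein/Hoeffding‑type inequality.

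Second, I would prove the concentration lemma implicit in the definition of $\confprofset_n$: with probability at least $1-\delta$, uniformly over $n\le\numstage$ and $s\in[0,\ubrv]$,
\(
\bigl|\hat\mu_N(s)-\profopt\bigl(\tfrac1\pr+\hat\mu_D(s)\bigr)-\bigl(\E[\rew(\rv)\one(\rv\ge s)]-\profopt(\tfrac1\pr+\E[\rv\one(\rv\ge s)])\bigr)\bigr|\le\confprofset_{n-1}.
\)
The factor $\sqrt{\sigma^2+(\ubrew-\lbrew)^2/4}$ in \cref{eq:confprofset} comes from sub‑Gaussian concentration of $\noisedrew_i-\profopt\rv_i$; the term $(\ubrew-\lbrew)(\pr\ubrv+2)/\sqrt2$ handles the transfer from the empirical denominator $\tfrac1\pr+\hat\mu_D(s)$ (which is bounded below by $1/\pr$ and above by $1/\pr+\ubrv$) to its expected version; and the $\pr(\ubrew-\lbrew)/n$ correction absorbs the boundary effect at $n-1$ vs.\ $n$. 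Uniformity in $s$ comes for free: both empirical sums are piecewise constant in $s$ with jumps only at the observed $\rv_i$, so it suffices to union‑bound over at most $n-1$ grid points and over $n\le\numstage$, which explains the $\ln(2(\numstage+1)/\delta)$ factor.

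Third, I would show that on this good event, $\threshopt\in\threshset_n$ and hence $\thresh_n\le\threshopt$ for all $n\le\numstage$. The monotonicity of $\duration\mapsto\rew(\duration)/\duration$ together with $\rew(\threshopt)/\threshopt=\profopt$ implies that $\rew(\rv)-\profopt\rv$ is negative for $\rv<\threshopt$ and non‑negative for $\rv\ge\threshopt$, so the deterministic counterpart $s\mapsto\E[(\rew(\rv)-\profopt\rv)\one(\rv\ge s)]$ is maximized at $s=\threshopt$. Concentration then transfers this property to the empirical quantity, showing that the elimination criterion cannot discard $\threshopt$. Consequently, the algorithm always accepts tasks with $\rv\ge\threshopt$ (the truly optimal ones) and only incurs regret when it accepts borderline tasks with $\rv\in[\thresh_n,\threshopt)$.

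Fourth, I would convert the above into a regret bound using the per‑round decomposition of \cref{eq:regret-,eq:gamma-n}: the expected loss on round $n$ is $(\profopt-\profset(\thresh_n))\cdot\E[\exprv_n+\rv_n\one(\rv_n\ge\thresh_n)]$. The inclusion $\thresh_n\in\threshset_n$ and the concentration give $\profopt-\profset(\thresh_n)=O(\pr\,\confprofset_{n-1})$, where the extra $\pr$ comes from dividing by $\tfrac1\pr+\hat\mu_D(\thresh_n)\ge\tfrac1\pr$. Summing over $n\le\numstage$, using $\sum_{n\le\numstage}1/\sqrt n=O(\sqrt\numstage)$, and taking $\delta=1/\timeT^2$ with the Poisson tail bound $\prob(\theta>\numstage)\le1/\timeT$ for $\numstage=2(\pr\timeT+1)$, yields the claimed rate $\kappa_5(\sigma+\ubrew-\lbrew)(1+\pr\ubrv)\sqrt{(\pr\timeT+1)\ln(\pr\timeT)}$.

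The main obstacle I anticipate is the third step: the elimination condition multiplies the $\profset$-comparison by the \emph{empirical} denominator evaluated at the candidate $\thresh$, not at $\threshopt_{n-1}$ nor at $\threshopt$, so showing $\threshopt\in\threshset_n$ requires a careful two‑sided use of the uniform concentration lemma to simultaneously control both endpoints of the comparison. Once this is carried out cleanly, the remaining calculations (Poisson tail, summation of $\confprofset_n$, absorption of numerical constants into $\kappa_5$) are routine.
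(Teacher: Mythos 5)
Your plan follows essentially the same route as the paper's proof: a uniform concentration statement for $\profset_n$ over thresholds (the paper's \cref{lem:profset_estimate}), an induction showing $\threshopt$ is never eliminated so that $\thresh_n\leq\threshopt$ (the paper's \cref{prop:lower-estimate-threshold}), a per-round bound of order $\confprofset_{n-1}$ deduced from $\thresh_n\in\threshset_n$ (the paper's \cref{prop:regret-monotone}), and a summation combined with a Poisson tail bound for the number of proposals. The obstacle you flag at the end (empirical denominator evaluated at the candidate threshold) is indeed exactly the point the paper handles by an extra DKW-type comparison between $\den_n(\thresh_n)$ and $\den(\thresh_n)$.

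Two concrete points where your sketch, as written, falls short of the stated theorem. First, uniformity in $s$ does \emph{not} come for free from piecewise constancy: the empirical sums are piecewise constant, but the population functionals $\E[\rew(\rv)\one(\rv\geq s)]$ and $\E[\rv\one(\rv\geq s)]$ are not, so you must control their variation between consecutive observed durations. The paper does this with the Dvoretzky--Kiefer--Wolfowitz inequality together with the bounds $\lbrew\leq\rew\leq\ubrew$, and this is precisely where the $(\ubrew-\lbrew)(\pr\ubrv+2)/\sqrt{2}$ and $\pr(\ubrew-\lbrew)/n$ terms of \cref{eq:confprofset} originate (not a boundary effect in $n$ versus $n-1$). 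Second, in your fourth step you bound $\profopt-\profset(\thresh_n)\leq \bigoh(\pr\,\confprofset_{n-1})$ by dividing by the minimal denominator $1/\pr$ and then multiply back by $1/\pr+\ubrv$; this loses a factor $(1+\pr\ubrv)$, and since $\confprofset_{n-1}$ already carries a $(\pr\ubrv+2)$ term, the resulting regret scales as $(1+\pr\ubrv)^2$, weaker than the claimed $(1+\pr\ubrv)$. The fix is to keep the product form throughout, i.e., bound $(\profopt-\profset(\thresh_n))\bigl(\tfrac1\pr+\E[\rv\one(\rv\geq\thresh_n)]\bigr)\leq 4\confprofset_{n-1}+\bigoh(1/n)$ directly, which your own linear-functional reformulation supports (note $\E[\rew(\rv)\one(\rv\geq s)]-\profopt\bigl(\tfrac1\pr+\E[\rv\one(\rv\geq s)]\bigr)=(\profset(s)-\profopt)\bigl(\tfrac1\pr+\E[\rv\one(\rv\geq s)]\bigr)$), and which is exactly what the paper's \cref{prop:regret-monotone} does before summing.
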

\begin{algorithm}[H]
\DontPrintSemicolon
\SetKwInOut{Input}{input}
\SetKw{From}{from}
\Input{$\delta$, $\pr$, $\ubrv$, $\ubrew$, $\sigma$, $\lbrew$, $\timeT$}
	$s_1=0$;\ $\numstage = 2\pr\timeT+1$;\ $n=0$\;
	\While{$\sum_{i=1}^n \exprv_i + \rv_i\one(\rv_i\geq \thresh_i)<\timeT$}{
		$n=n+1$\;
		Wait $S_n$ and observe $X_n$\;
			\lIf{$\rv_n \geq \thresh_n$}{accept task and receive reward $\noisedrew_n$}
			\lElse{reject task}
			Compute for $\thresh\in [\thresh_n,\ubrv]$, \hspace{1cm} $
\profset_n(\thresh) = \frac{\pr\frac{1}{n}\sum_{i=1}^{n} \noisedrew_i\one(\rv_i\geq\thresh)}{1+\pr\frac{1}{n}\sum_{i=1}^{n} \rv_i\one(\rv_i\geq\thresh)}$\;
Compute $\threshopt_{n} = \argmax_{\thresh\in[\thresh_{n},\ubrv]} \profset_{n}(\thresh)$\;
			Let $\threshset_{n+1} = \braces*{\thresh\in[\thresh_{n},\ubrv]\,\vert\, (\profset_{n}(\thresh)-\profset_{n}(\threshopt_{n}))\left(\frac{1}{\pr}+\frac{1}{n}\sum_{i=1}^{n} \rv_i\one(\rv_i\geq\thresh)\right)
			+2\confprofset_{n}  \geq 0 }$\;
			Set $\thresh_{n+1} = \min \threshset_{n+1}$\;}
\caption{\label{alg:monotone}Non-decreasing profitability algorithm}
\end{algorithm}
\begin{remark}
The function $\profset_n(\thresh)$ is piecewise constant in \cref{alg:monotone} and thus only requires to be computed at all the points $\rv_i$.
The algorithm then requires to store the complete history of tasks and has a complexity of order $n$ when receiving the $n$-th proposition. \\
However, a trick similar to \cref{rem:computation1} leads to improved complexity. The algorithm actually requires to compute $\profset_n(\thresh)$ only for all $s \in \threshset_n$. Only tasks of length in $\threshset_n$ then have to be individually stored and the computation of all ``interesting'' values of $\profset_n$ scales with the number of $\rv_i$ in $\threshset_n$. This yields a sublinear complexity in $n$ under regularity conditions on the distribution of $\rv$.
\end{remark}

\subsection{Lower bounds}\label{subsec:lower}

All the algorithms we propose are actually ``optimal'' for their respective class of problems. 
Here, optimality must be understood in the minimax sense, as a function of the horizon $\timeT$ and up to $\mathrm{poly}\log(\timeT)$ terms. 
More formally, a minimax lower-bound for a given class of problems (say, $(\holdcst,\holdexp)$-H\"older reward functions with bandit noisy feedback) states that, for any given algorithm, there always exists a specific problem instance such that $R(\timeT) \geq \kappa' \timeT^\gamma$ for some number $\gamma \in [0,1]$ and $\kappa'$ is independent of $\timeT$. 
An algorithm whose regret never increases faster than $\kappa'' \timeT^\gamma \log^{\gamma'}(\timeT)$, for some constants $\kappa''$ and $\gamma'$ independent of $\timeT$ is then \emph{minimax optimal}.

\begin{proposition}\label{th:lower}
Our algorithms are all minimax optimal for their respective class of problems.
\end{proposition}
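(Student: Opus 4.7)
My plan is to prove each lower bound by reducing the corresponding setting to a classical statistical decision problem whose minimax complexity is already known. For the $\sqrt{\timeT}$ rates I would use Le Cam's two-point method, while for the H\"older rates I would use Tsybakov's nonparametric construction. In each case the key step is translating the resulting estimation or classification lower bound into a regret lower bound for our continuous-time model.

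For \cref{alg:unknown-distribution} (\cref{th:regret-fullinfo}), I would fix the known reward $\rew(x) = x$ and consider two candidate duration laws $F_0, F_1$ supported on two points $\{x_0, x_1\} \subset (0, \ubrv]$, chosen so that the probabilities of the two atoms differ by $\Theta(1/\sqrt{\pr\timeT})$. The induced optimal thresholds $\profopt$ can be arranged so that the correct accept/reject decision flips on one of the atoms across the two instances, while the KL divergence between $\Theta(\pr\timeT)$ i.i.d.\ observations from the two laws remains $O(1)$. Le Cam's inequality then forces any policy to err on a constant fraction of the received tasks with constant probability, each such error costing $\Theta(1/\sqrt{\pr\timeT})$ in regret, which yields the matching $\Omega(\sqrt{\pr\timeT})$ bound.

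For \cref{alg:bandit} (\cref{th:regret-bandit}), I would observe that the accept/reject rule $\one(\rew(X) \geq \profopt X)$ is exactly a plug-in classifier for the regression function $x \mapsto \rew(x) - \profopt x$, which inherits the H\"older smoothness of $\rew$. The classical packing construction of \citet{tsybakov2006statistique} then provides a family of $(\holdcst, \holdexp)$-H\"older reward candidates on which no classifier based on $\Theta(\pr\timeT)$ noisy observations can achieve excess classification risk below $\timeT^{-\holdexp/(2\holdexp+1)}$; multiplying this by the $\Theta(\pr\timeT)$ misclassified tasks yields $\Omega(\timeT^{(\holdexp+1)/(2\holdexp+1)})$ and matches \cref{th:regret-bandit} up to the $\sqrt{\ln\timeT}$ factor. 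The finite-support and margin cases (\cref{th:FastFinite,TH:Margin}) follow from the same reduction specialised respectively to a $K$-armed bandit, using the $\Omega(\sqrt{KT})$ and $\Omega(K\log T/\Delta_{\min})$ bounds of \citet{bubeck2012regret}, and to H\"older classification under margin, using the lower bound in \citet{PerRig:AS2013}. The monotone case (\cref{th:regret-profitability-function}) reduces to one-dimensional threshold estimation and is again handled by Le Cam.

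The main obstacle will be converting classical sample-complexity lower bounds into time-horizon bounds under our continuous-time Poisson model. Three points deserve care: the number of task proposals before $\timeT$ must concentrate around $\pr\timeT$, which is handled by a standard Chernoff bound on the Poisson count; the exponential waiting times on rejection must not leak information about the unknown parameter, which follows from the memoryless property since the $\exprv_i$ are drawn independently of the instance; and the alternative instances must be constructed so that accepting and rejecting cost comparable expected time, so that accrued suboptimality scales linearly with the number of misclassified tasks rather than being partially absorbed by idle time. \cref{lem:decline} is useful here: modifying the reward on the set of declined tasks does not change $\profopt$, so hard instances can be made to differ only near the decision boundary, giving a tight reduction.
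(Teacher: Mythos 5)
Your overall strategy is the same as the paper's: for the noisy-reward settings, transfer known minimax lower bounds (the paper reduces to a one-armed contextual bandit in which $\profopt$ is \emph{given}, then cites \citet{AudTsy:AS2007,rigollet2010nonparametric}, which is essentially the classification-type construction you invoke via Tsybakov and Perchet--Rigollet), and for the known-reward setting, use a two-point argument; the continuous-time issues you list (concentration of the Poisson count, non-informativeness of the waiting times) are handled in the paper by noting $\totdur \geq \pr\timeT/(1+\ubrv)$ with constant probability. Two caveats on how you keep the hard instances inside the model class: the paper's point is that the subclass with normalized reward $\profopt x$ and the constraint $\profopt=\pr\E[(\rew(\rv)-\profopt\rv)_+]$ is rich enough for the existing constructions; \cref{lem:decline} does \emph{not} quite do this job for you, because it only covers modifications on sets where $\rew(x)\le\profopt x$, whereas a Tsybakov-type packing necessarily contains instances whose bumps cross above the threshold, and for those $\profopt$ does move (by a controllably small amount, but that has to be argued, not dismissed via the lemma).

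The concrete gap is in your known-reward construction. With $\rew(x)=x$ the profitability $\rew(x)/x\equiv 1$ is constant, and $\opzero(\prof)=\pr(1-\prof)_+\E[\rv]-\prof$ has root $\profopt=\pr\E[\rv]/(1+\pr\E[\rv])<1$ for \emph{every} duration law, so the optimal policy accepts every task in both of your instances: no choice of two-point laws $F_0,F_1$ can make the accept/reject decision flip, the always-accept policy has (essentially) zero regret on both, and Le Cam gives nothing. You need a reward whose profitability differs across the atoms, with the distributional perturbation of size $\Theta(1/\sqrt{\pr\timeT})$ pushing $\profopt$ across the profitability of one atom; this is exactly what the paper does with $\rv\in\{1,2\}$, $\rew(1)=\tfrac12$, $\rew(2)=2$, probabilities $\tfrac12\pm\Delta$, so that $\profopt$ sits below $\tfrac12$ in one world and above it in the other. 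With that repaired, the rest of your Le Cam/KL accounting ($\mathrm{KL}=O(1)$ over $\Theta(\pr\timeT)$ observations, per-error cost $\Theta(\Delta)$) goes through as in the paper.
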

The proof is rather immediate and postponed to \cref{app:lowerproof}. Indeed, the agent is facing a problem more complicated than a contextual (one-armed) bandit problem because $\profopt$ is unknown at first. 
On the other hand, if we assume that it is given to the agent, the problem is actually a one-armed contextual bandit problem. As a consequence, lower bounds for this problem are also lower-bounds for the agent's problem; as lower and upper bounds match (up to $\mathrm{poly}\log$ terms), the devised algorithms are optimal.

\section{Fast implementation using search trees}\label{sec:fastcomp}

First recall that the  complexity of computing the $n$-th decision of \cref{alg:unknown-distribution} is of order $n$ in worst cases, while \cref{alg:bandit} scales with $\numbin$, which is of order $T^{\frac{1}{2\holdexp+1}}$. The computational complexities of these algorithms at the $n$-th round can actually be respectively improved to $\log(n)$ and $\log(\numbin)$, while keeping the same space complexity, by the use of augmented balanced binary search trees data structures, \eg augmented red black trees \citep[][Chapters 12-14]{CorLeiRivSte:MIT2009}.
Such kind of trees stores nodes with keys in an increasing order. The search, insert and deletion function can thus all be computed in a time $\log(n)$, where $n$ is the number of nodes in the tree. 
An augmented structure allows to store additional information at each node, which is here used to compute the root of $\opzero_n$.

\paragraph{Fast \cref{alg:unknown-distribution}.} For \cref{alg:unknown-distribution}, each task $i$ is stored as a node with the key $\rew(\rv_i)/\rv_i$, with the additional piece of information $(\rew(\rv_i), \rv_i)$. Using augmented trees, it is possible to also store in each node $i$ the sum of $\rew(\rv_j)$ and $\rv_j$ for all the nodes $j$ in its right subtree, and similarly for its left subtree with the same complexity.

The main step of \cref{alg:unknown-distribution} is the computation of $\prof_n$, which can now be computed in a time $\log(n)$, thanks to the described structure. Indeed, note that
\begin{equation*}
\opzero_n\left(\frac{\rew(\rv_i)}{\rv_i}\right) = \frac{\pr}{n} \left(\sum_{j:\ \frac{\rew(\rv_j)}{\rv_j} > \frac{\rew(\rv_i)}{\rv_i}} \rew(\rv_j) - \frac{\rew(\rv_i)}{\rv_i}\sum_{j:\ \frac{\rew(\rv_j)}{\rv_j} > \frac{\rew(\rv_i)}{\rv_i}} \rv_j\right) - \frac{\rew(\rv_i)}{\rv_i}.
\end{equation*}

As the right subtree of the root exactly corresponds to the tasks $j$ with a larger profitability than the root, $\opzero_n(c)$ can be computed in time $1$ where $c$ is the profitability of the root node, using the additional stored information. 
As $\opzero_n$ is decreasing, the algorithm continues in the left subtree if $\opzero_n(c)<0$ and in the right subtree otherwise. Evaluating $\opzero_n$ at the key of the following node also takes a time $1$ using the different stored information. The algorithm then searches the tree in a time $\log(n)$ to find the task $i^*$ with the largest profitability such that $\opzero_n\left(\frac{\rew(\rv_{i^*})}{\rv_{i^*}}\right)>0$.

Note that $\opzero_n$ is piecewise linear with a partition given by the profitability of the received tasks. In particular, it is linear on $[\frac{\rew(\rv_{i^*})}{\rv_{i^*}}, \prof_n]$ and $\prof_n$ is given by
\begin{equation*}
\prof_n = \frac{\pr\sum_{j:\ \frac{\rew(\rv_j)}{\rv_j} > \frac{\rew(\rv_{i^*})}{\rv_{i^*}}} \rew(\rv_j)}{n + \pr \sum_{j:\ \frac{\rew(\rv_j)}{\rv_j} > \frac{\rew(\rv_{i^*})}{\rv_{i^*}}} \rv_j}.
\end{equation*}

\medskip

Recall that it was suggested in \cref{rem:computation1} to remove all tasks with profitability outside 
\[
\bracks*{c_n - \pr (\ubrew-\lbrew) \sqrt{\frac{\ln(1/\delta)}{2n}}, \ c_n + \pr (\ubrew-\lbrew) \sqrt{\frac{\ln(1/\delta)}{2n}}}
\]
and to only store the sum of rewards and durations for those above $c_n + \pr (\ubrew-\lbrew) \sqrt{\frac{\ln(1/\delta)}{2n}}$. Removing all nodes above or below some threshold in a search tree is called the split operation and can be done in time $\log(n)$ with the structures used here. Thanks to this, keeping a reduced space complexity is possible without additional computational cost.

\paragraph{Fast \cref{alg:bandit}.} For \cref{alg:bandit}, each node represents a bin with the key $(\frac{\tilde{\rew}_n^\bin}{x^\bin}, \bin)$ and the additional information $(n_{\bin} \, \tilde{\rew}_n^\bin, n_\bin \, x^\bin)$. $\bin$ is also required in the key to handle the case of bins with same profitabilities.

Here again at each node, we also store the sum of the additional information of the left and right subtrees. Note that at each time step, the algorithm updates $\tilde{\rew}_n^\bin$ for the bin of the received task. This is done by deleting the node with the old value\footnote{The algorithm also requires to store a list to find $\tilde{\rew}_n^\bin/x^\bin$ in time $1$ for any bin $\bin$.} of $\tilde{\rew}_n^\bin/x^\bin$ and insert a node with the new value.

The computation of $\profemp_n$ then follows the lines of the computation of $\prof_n$ in \cref{alg:unknown-distribution} above.

\medskip

Unfortunately, search trees do not seem adapted to \cref{alg:monotone} since maximizing the function $\profset_n(s)$ requires to browse the whole tree here.

\section{Proofs of \cref{se:model}}

\begin{proof}[Proof of \cref{prop:dynamic-programming}]
For $h>0$, let $B(t,h)$ denote the event ``the agent is proposed exactly one task in the interval $[\timet,\timet+h]$.''
Since the probability of receiving more than one task in this interval is $\smalloh(h)$, one has for $t < T$
\begin{align*}
\payv(\timet) &= \proba(B(t,h))\E[\max(\rew(\rv) + \payv(\timet + \rv),\payv(\timet+h))\,\vert\, B(t,h)] + (1-\proba(B(t,h)))\payv(t+h) + \smalloh(h)\\
&=(\pr h + \smalloh(h))\E[\max(\rew(\rv) + \payv(\timet + \rv),\payv(\timet+h))\,\vert\, B(t,h)] + (1-\pr h + o(h))\payv(\timet+h) + \smalloh(h). 
\end{align*}
By definition of the value function, the optimal strategy accepts a task $X$ at time $t$ if and only if $r(X) + \payv(t+X) \geq \payv(t)$. 
Hence
\begin{equation}
\label{eq:diff-v}
\frac{\payv(\timet+h)-\payv(\timet)}{h} = -(\pr + \smalloh(1))\E[(\rew(\rv) + \payv(\timet + \rv)-\payv(\timet+h))_+\, \vert\, B(t,h)] + \smalloh(1).
\end{equation}
Letting $h\to 0$ one has
$
\payv'(\timet) = -\pr\E[\left(\rew(\rv)+\payv(\timet+\rv)-\payv(\timet)\right)_+]$.

Since $(\timet,\payv)\mapsto -\pr\E[\left(\rew(\rv)+\payv(\timet+\rv)-\payv(\timet)\right)_+]$ is uniformly Lipschitz-continuous in $\payv$, uniqueness of the solution follows the same lines as the proof of the Picard-Lindel\"of (a.k.a. Cauchy-Lipschitz) theorem. 
Hence, $\payv$ is  the unique solution of the dynamic programming  \cref{eq:dynprog}.

Note that $\payw$ is actually the unique solution of the same equation without the boundary effect
\begin{equation}
\label{eq:wdyn}	
\begin{cases}
        \payw'(\timet) = -\pr \E\left[\left(\rew(\rv) + \payw(\timet + \rv) - \payw(\timet)\right)_+\right] \, &\text{ for all } \timet \in \R,\\
       \payw(\timeT) = 0. &{}
\end{cases}
\end{equation}

Similarly to the argument above, $\payw$ is the value function of the optimal strategy of an alternative program, defined as the original program, with the difference that, if the agent ends at time $t>T$, then she suffers an additional loss $(t-T) \profopt$ in her reward.

The optimal strategy does not only maximize the value function at time $t=0$, but actually maximizes it for any time $t \in [0, T]$.
By definition, any strategy earns less in the alternative program than in the original program. 
By optimality of the strategy giving the value $\payv$, this yields $\payv(t) \geq \payw(t)$ for any time $t$.

By translation, $\payw(\cdot - \ubrv)$ is also the value of the optimal strategy in the alternative program, but with horizon $T+\ubrv$. 
As the length of any task is at most $\ubrv$, following the optimal strategy in the original program of horizon $T$ and rejecting all tasks proposed between $T$ and $T+\ubrv$   yields exactly the value $\payv$ in this \emph{delayed} alternative program;  thus $\payw(t-\ubrv) \geq \payv(t)$ by optimality. 
\end{proof}

\begin{proof}[Proof of \cref{thm:baseline}]
Recall that the strategies considered in the proof of \cref{prop:dynamic-programming} accept a task $X$ at time $t$ if and only if $r(X) + v(t+X) \geq v(t)$ where $v$ is its value function. 
Thus, the optimal strategy in the alternative program described in the proof of \cref{prop:dynamic-programming} accepts a task $X$ if and only if $r(X) \geq \profopt X$.
Moreover, the cumulative reward of this strategy in the original program is larger than $\payw(0)$. 
The relation between $\payv$ and $\payw$ given by  \cref{prop:dynamic-programming} then yields the result.
\end{proof}

\section{Proofs of \cref{subsec:unknown-distrib}}

%
%

\begin{proof}[Proof of \cref{prop:prof_estimate}]
Let $\eps>0$. One has,
\begin{align*}
\prob(\prof_n-\profopt > \eps) &= \prob(\opzero_n(\prof_n) < \opzero_n(\profopt +\eps)) \text{ since } \opzero_n \text{ is decreasing}\\
&=  \prob(0 < \opzero_n(\profopt +\eps)) \text{ since } \prof_n \text{ is the root of } \opzero_n\\
&\leq \prob(\eps < \opzero_n(\profopt +\eps)-\opzero(\profopt +\eps)) \text{ since } \opzero(\profopt+\eps)\leq - \eps\\
&\leq \exp\left(\frac{-2n\eps^2}{\pr^2 (\ubrew-\lbrew)^2}\right),
\end{align*}
where the last inequality follows from Hoeffding's inequality.
\end{proof}

\begin{proposition}\label{prop:proprew}
For all $n\geq 1$, the following  bound holds
\begin{align*}
(\profopt - \profrew_n)\E\left[\rv_n\one(\rew(\rv_n)\geq \prof_n\rv_n) + \exprv_n\right]\ \leq \frac{\pr(\ubrew-\lbrew)\ubrv}{2}\sqrt{\frac{\pi}{2n}}.
\end{align*}
\end{proposition}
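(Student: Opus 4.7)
The plan is to rewrite the left-hand side as an expectation that measures how often the plug-in policy $\prof_n$ disagrees with the oracle policy $\profopt$ on the $n$-th task, and then to control that expectation via the sub-Gaussian concentration of $\prof_n$ provided by \cref{prop:prof_estimate}.

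First I would unfold the definition of $\profrew_n$ in \cref{eq:gamma-n}, namely $\profrew_n\,\E[\exprv_n+\rv_n\one(\rew(\rv_n)\geq\prof_n\rv_n)]=\E[\rew(\rv_n)\one(\rew(\rv_n)\geq\prof_n\rv_n)]$, combined with $\E[\exprv_n]=1/\pr$, to obtain
\begin{equation*}
(\profopt-\profrew_n)\,\E[\exprv_n+\rv_n\one(\rew(\rv_n)\geq\prof_n\rv_n)]\;=\;\frac{\profopt}{\pr}\;-\;\E[(\rew(\rv_n)-\profopt\rv_n)\one(\rew(\rv_n)\geq\prof_n\rv_n)].
\end{equation*}
The fixed-point identity $\profopt/\pr=\E[(\rew(\rv_n)-\profopt\rv_n)_+]=\E[(\rew(\rv_n)-\profopt\rv_n)\one(\rew(\rv_n)\geq\profopt\rv_n)]$, which is just $\opzero(\profopt)=0$ from \cref{prop:dynamic-programming}, rewrites the right-hand side as
\begin{equation*}
\E\bracks*{(\rew(\rv_n)-\profopt\rv_n)\bigl(\one(\rew(\rv_n)\geq\profopt\rv_n)-\one(\rew(\rv_n)\geq\prof_n\rv_n)\bigr)}.
\end{equation*}

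Next I would bound the integrand pointwise by a case analysis on the sign of $\prof_n-\profopt$. On $\{\prof_n>\profopt\}$ the indicator difference is supported on $\{\profopt\rv_n\leq\rew(\rv_n)<\prof_n\rv_n\}$, where $\rew(\rv_n)-\profopt\rv_n$ is nonneg and at most $(\prof_n-\profopt)\rv_n$; symmetrically on $\{\prof_n<\profopt\}$ the integrand lies in $[0,(\profopt-\prof_n)\rv_n]$. Since $\rv_n\leq\ubrv$ and the two events are disjoint, this yields $(\profopt-\profrew_n)\E[\exprv_n+\rv_n\one]\leq \ubrv\,\E[|\prof_n-\profopt|]$. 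To conclude I would integrate the sub-Gaussian tail from \cref{prop:prof_estimate}, $\prob(\prof_n-\profopt>\eps)\leq\exp(-2n\eps^2/(\pr(\ubrew-\lbrew))^2)$, together with its symmetric counterpart derived by applying the same Hoeffding bound to $\opzero(\profopt-\eps)\geq\eps$; using $\E[X_+]=\int_0^\infty\prob(X>t)\dd t$ and $\int_0^\infty e^{-at^2}\dd t=\tfrac{1}{2}\sqrt{\pi/a}$ gives $\E[(\prof_n-\profopt)_\pm]\leq \tfrac{\pr(\ubrew-\lbrew)}{2}\sqrt{\pi/(2n)}$.

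The main obstacle is retaining the exact constant $\tfrac{1}{2}$ in the stated bound: simply adding the two symmetric tail contributions $\E[(\prof_n-\profopt)_+]$ and $\E[(\profopt-\prof_n)_+]$ costs a factor two and yields $\ubrv\pr(\ubrew-\lbrew)\sqrt{\pi/(2n)}$, exactly twice the announced quantity. Saving this factor requires keeping track of the localization of the integrand on $\{\rew(\rv_n)/\rv_n\in[\min(\prof_n,\profopt),\max(\prof_n,\profopt))\}$, so that in any realization only one of the two one-sided tails of $\prof_n-\profopt$ contributes (the sign of the "bad" region is fully determined by $\sign(\prof_n-\profopt)$), rather than collapsing this structure by the looser pointwise bound $|\prof_n-\profopt|$ before taking expectations.
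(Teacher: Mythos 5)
Your argument is essentially the paper's own: both rewrite $(\profopt-\profrew_n)\,\E[\rv_n\one(\rew(\rv_n)\geq\prof_n\rv_n)+\exprv_n]$ through the fixed-point identity $\profopt=\pr\E[(\rew(\rv)-\profopt\rv)_+]$, localize the difference of acceptance indicators on the band of profitabilities between $\prof_n$ and $\profopt$, bound the integrand there by $\ubrv\vert\prof_n-\profopt\vert$, and integrate the Hoeffding tail of \cref{prop:prof_estimate} (your symmetric tail for $(\profopt-\prof_n)_+$, obtained from $\opzero(\profopt-\eps)\geq\eps$, is indeed needed and correct, since $\opzero$ has slope at most $-1$). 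Up to the constant, this is a complete proof, and the factor you lose would only propagate into the multiplicative constant of \cref{th:regret-fullinfo}.

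Concerning the factor $\tfrac12$: your proposed repair cannot work, but your worry is well founded. Since $\vert\prof_n-\profopt\vert=(\prof_n-\profopt)_+ +(\profopt-\prof_n)_+$ pointwise, $\E[\vert\prof_n-\profopt\vert]$ is exactly the sum of the two one-sided expectations; the fact that only one side is active in each realization is already encoded in the absolute value and yields no saving. The paper reaches $\tfrac12$ differently: writing $\profrew_n\geq \profopt + \E[(\prof_n-\profopt)\quant]/(\den+\quant)$ with $\quant=\pr\E\bigl[\rv\bigl(\one(\profopt\rv>\rew(\rv)\geq\prof_n\rv)-\one(\prof_n\rv>\rew(\rv)\geq\profopt\rv)\bigr)\bigr]$, it lower bounds the correction by $-\pr\ubrv\E[(\prof_n-\profopt)_+]$, i.e.\ it keeps only the over-estimation tail. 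That step is valid on $\{\prof_n\geq\profopt\}$, where $\quant\in[-\pr\ubrv,0]$, but not on $\{\prof_n<\profopt\}$, where $\quant\geq0$ and $(\prof_n-\profopt)\quant$ is strictly negative whenever there is mass with profitability in $[\prof_n,\profopt)$ (accepting those sub-profitable tasks does cost regret). Carried out carefully, the paper's computation also produces the extra term $\pr\ubrv\E[(\profopt-\prof_n)_+]$ and lands exactly on your bound with constant $1$ instead of $\tfrac12$. So the discrepancy you flag reflects a looseness in the stated constant rather than a missing idea on your side; your derivation is the correct accounting of both tails, and you should not spend more effort trying to recover the $\tfrac12$ along this route.
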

\begin{proof}[Proof of \cref{prop:proprew}]
First note that the definition of $\profopt$ implies
\begin{align*}
\profopt = \frac{\pr\E[\rew(\rv)\one(\rew(\rv)\geq \profopt\rv)]}{1+ \pr\E[\rv\one(\rew(\rv)\geq \profopt\rv)]}.
\end{align*}
Let us denote $\nume$ the numerator and $\den$ the denominator of the above expression.
First decompose
\begin{align*}
\one(\rew(\rv)\geq \prof_n\rv) = \one(\rew(\rv)\geq \profopt\rv) + \one(\profopt\rv> \rew(\rv)\geq \prof_n\rv) - \one(\prof_n\rv>\rew(\rv)\geq \profopt\rv).
\end{align*}
Denote 
\begin{align*}
\quant = \pr\E\left[\rv\Big(\one(\profopt\rv> \rew(\rv)\geq \prof_n\rv) - \one(\prof_n\rv>\rew(\rv)\geq \profopt\rv)\Big)\right].
\end{align*}
One has
\begin{align*}
\profrew_n
&= \frac{\nume + \pr\E[\rew(\rv)(\one(\profopt\rv> \rew(\rv)\geq \prof_n\rv) - \one(\prof_n\rv>\rew(\rv)\geq \profopt\rv))]}{\den + \pr\E[\rv(\one(\profopt\rv> \rew(\rv)\geq \prof_n\rv) - \one(\prof_n\rv>\rew(\rv)\geq \profopt\rv))]}\\
&\geq  \frac{\nume + \pr\E[\prof_n\rv(\one(\profopt\rv>\rew(\rv)\geq \prof_n\rv) - \one(\prof_n\rv>\rew(\rv)\geq \profopt\rv))] }{\den + \quant}\\
& = \profopt + \frac{\E\left[(\prof_n - \profopt)\quant\right]}{\den + \quant} \\
&\geq \profopt - \frac{\pr\ubrv\E[( \prof_n - \profopt )_+]}{1+\pr\E\left[\rv_n\one(\rew(\rv_n)\geq \prof_n\rv_n)\right]}\\
&\geq \profopt - \frac{\pr(\ubrew-\lbrew)\ubrv}{2}\sqrt{\frac{\pi}{2n}}\frac{1}{\E\left[\rv_n\one(\rew(\rv_n)\geq \prof_n\rv_n) + \exprv_n\right]},
\end{align*}
where the last inequality follows from \cref{prop:prof_estimate}.
\end{proof}

\begin{proof}[Proof of \cref{th:regret-fullinfo}]
From \cref{eq:regret-} one has
\begin{align*}
\reg(\timeT)
&= \E\left[\sum_{n=1}^{\totdur} (\profopt-\profrew_n)\E\left[\rv_n\one(\rew(\rv_n)\geq \prof_n\rv_n)+ \exprv_n\right]\right] + \profopt\left(\timeT-\E\left[\sum_{n=1}^{\totdur} \rv_n\one(\rew(\rv_n)\geq \prof_n\rv_n)+ \exprv_n\right]\right)\\
&\leq \E\left[\sum_{n=1}^{\totdur} (\profopt-\profrew_n)\E\left[\rv_n\one(\rew(\rv_n)\geq \prof_n\rv_n)+ \exprv_n\right]\right]\\
&\leq \frac{\pr(\ubrew-\lbrew)\ubrv}{2}\sqrt{\frac{\pi}{2}}\E\left[\sum_{n=1}^{\totdur} \sqrt{\frac{1}{n}}\right] \text{ by \cref{prop:proprew}}\\
&\leq \pr(\ubrew-\lbrew)\ubrv\sqrt{\frac{\pi}{2}}\E\left[\sqrt{\totdur}\right] \text{ by Wald's formula.}
\end{align*}
It remains to control the expected number of tasks observed $\E \totdur$. First remark that
\begin{align}\label{eq:ub-totdur}
\totdur -1 \leq \min\{n\geq 1 \,\vert\,\sum_{i=1}^n \exprv_i>\timeT\} -1 =   \sup\{ n\geq 1 \,\vert\,\sum_{i=1}^n \exprv_i\leq \timeT\}
\end{align}
and that the law of the right-hand side of \cref{eq:ub-totdur}  is Poisson of parameter $\pr \timeT$. So we get that  $\E \totdur \leq \pr\timeT +1$ and thus $\E \sqrt{\totdur} \leq \sqrt{\pr \timeT +1}$ by Jensen's inequality.
\end{proof}

\newpage 
\section{Proofs of \cref{subsec:bandit}} \label{proof:bandit}


\begin{proof}[Proof of \cref{lem:rewest}]
It follows from Hoeffding's inequality, and the fact that $\rew$ is H\"older continuous.
\end{proof}

\begin{proof}[Proof of \cref{lem:decline}]
The proof of this crucial lemma is actually straightforward, and follows from the fact that  $\profopt \geq 0$ and thus  $\left(\rew_\mathcal{E}(\rv)- \profopt \rv \right)_+=\left(\rew(\rv)- \profopt \rv \right)_+$.
\end{proof}

\begin{proof}[Proof of \cref{prop:giveaname0}]
Let us, in the first step, assume that all the first $N$ tasks have been accepted, so that $(\rv_i,\noisedrew_i)$  are i.i.d., and $\E [\noisedrew_i | \rv_i] = \rew(\rv_i)$.  We define
\begin{align*}
\opzeroest_n : \prof \mapsto \pr\E[\left(\rewest_n(\rv)-\prof x^{\bin(\rv)}\right)_+] - \prof,
\end{align*}
where ${\bin(\rv)}$ is the bin corresponding to $\rv$. Hence for all $\prof\geq 0$, $\E[\opzeroemp_n(\prof)] = \opzeroest_n(\prof)$.

Second, remark that for all $\prof \geq 0$ 
\begin{align}
\vert \opzeroest_n(\prof) - \opzero(\prof)\vert 
&= \pr \vert \E[(\rewest_n(\rv)-\prof x^{\bin(\rv)})_+ - (\rew(\rv)-\prof \rv)_+]\vert\\
&\leq   \pr\E[\vert\rewest_n(\rv)-\rew(\rv)\vert] + \pr \prof h\\
&\leq   \pr\E[\vert\rewest_n(\rv)-\rew(\rv)\vert] + \pr^2 \ubrew h.
\label{eq:opzero-binsize}
\end{align}
The last inequality is obtained by noting that $\profopt \leq \pr \ubrew$, and thus we only consider $\prof \leq \pr \ubrew$.

Third, for a fixed $\prof \geq 0$, $\opzeroemp_n(\prof)$ as a function of $(\noisedrew_1,\dots,\noisedrew_n)$,  is $\frac{\pr}{n}$-Lipschitz with respect to each variable.
Hence,
\begin{align*}
\prob(\profemp_n-\profopt > \eps) 
&= \prob(\opzeroemp_n(\profemp_n) < \opzeroemp_n(\profopt +\eps)) \text{ since } \opzeroemp_n \text{ is decreasing}\\
&=  \prob(0 < \opzeroemp_n(\profopt +\eps)) \text{ since } \profemp_i \text{ is the root of } \opzeroemp_i\\
&= \prob(-\opzeroest_n(\profopt +\eps) < \opzeroemp_n(\profopt +\eps)-\opzeroest_n(\profopt +\eps))\\
&\leq \proba(-\opzero(\profopt+\eps)- \pr \E[\vert\rewest_n(\rv)-\rew(\rv)\vert] - \pr^2 \ubrew h < \opzeroemp_n(\profopt +\eps)-\opzeroest_n(\profopt +\eps)) \text{ by } \cref{eq:opzero-binsize}\\
&\leq \proba(\eps-  \pr\E[\vert\rewest_n(\rv)-\rew(\rv)\vert] - \pr^2 \ubrew h < \opzeroemp_n(\profopt +\eps)-\opzeroest_n(\profopt +\eps))\text{ since } \opzero(\profopt+\eps)\leq - \eps\\
&\leq \exp\left(\frac{-n(\eps- \pr\E[\vert\rewest_n(\rv)-\rew(\rv)\vert]- \pr^2 \ubrew h)^2}{4\pr^2(\sigma^2+\frac{(\ubrew-\lbrew)^2}{4})}\right) \text{ by McDiarmid's inequality}.
\end{align*}
The last inequality uses McDiarmid's inequality for subgaussian variables \cite[Theorem~1]{kontorovich2014concentration}.
We conclude using \citet[Corollary 11.2]{GyoKohKrzWal:Springer2002}, which yields that $\E[\vert\rewest_n(\rv)-\rew(\rv)\vert] \leq \kappa \max(\sigma, \frac{D-E}{2}) \sqrt{\frac{\log(n)+1}{hn}} + \sqrt{8}\frac{Lh^\beta}{2^\beta}$.

\bigskip

Now, we focus on the case where some of the first $N$ tasks have been declined by the agent. We are going to prove the result by induction. Consider the event where, on the first $n-1$ tasks, it always happened that $\rewest^+_i(\cdot) \geq \rew(\cdot)$ and $\profemp_i^- \leq \profopt$. As a consequence, on this event, only sub-optimal tasks are declined. 
Using \cref{lem:decline}, this yields that the optimal value per time step associated to $\rew(\cdot)$ and to $\rew(\cdot)\one( \tilde{\rew}_n(\cdot) > 0)$ are equal.

As a consequence, the precedent arguments can be applied to the following function
\begin{equation*}
\tilde{\opzero}_n : \prof \mapsto \lambda \expect\left[ (\tilde{\rew}_n(X) - c x^{B(X)}) \right]
\end{equation*}
instead of $\bar{\opzero}_n$. 
Although \citet[Corollary~11.2]{GyoKohKrzWal:Springer2002} requires the function $\rew$ to be $(\holdexp,\holdcst)$-H\"older, it also holds here as $\rew(\cdot)\one( \tilde{\rew}_n(\cdot) > 0)$ is $(\holdexp,\holdcst)$-H\"older on every interval of the subdivision $\bin_1, \ldots, \bin_M$.

\end{proof}

Note that in the proof above, we always consider $\eps > \sqrt{8}\pr\frac{\holdcst\binsize^{\holdexp}}{2^\holdexp}+ \pr^2 \ubrew h$. Thus, by removing the term $\sqrt{8}\pr\frac{\holdcst\binsize^{\holdexp}}{2^\holdexp}+ \pr^2 \ubrew h$ in $\confprof_{n-1}$, all tasks with profitability above $\profopt + \eps$ remain observed. With a slight modification of the last argument, it can still be shown that $\opzeroemp_n(\profemp_n^-) < 0$ with high probability. 
While this has no influence on the theoretical order of the regret, it is used in the experiments as it yields a significant practical improvement.

\begin{proof}[Proof of \cref{th:regret-bandit}]
Recall that upon the $n$-th call, the agent computes $\rewest_{n-1}$ and $\profemp_{n-1}$ and accepts the task $\rv_{n}\in \bin$ if and only if  $\rewest^+_{n-1}(\rv_n) \geq  \profemp_{n-1}^- x^{\bin(\rv_n)}$. 
Let us denote by $\refuse(n)$ this event so that the regret can be decomposed into
\begin{align*}
\reg(\timeT) 
&= \profopt \timeT - \E\left[\sum_{n=1}^{\totdur} \rew(\rv_n)\one(\refuse(n))\right]\end{align*}
where 
\begin{align*}
\totdur \coloneqq \min\{N\in\N \,\vert\, \sum_{n=1}^N \exprv_n + \rv_n\one\left(\refuse(n)\right)> \timeT\}.
\end{align*}
The regret can then be rewritten as
\begin{align*}
\reg(\timeT) &= \profopt \timeT - \E\left[\sum_{n=1}^{\totdur} \E\left[\rew(\rv_n)\one(\refuse(n))\right]\right] \text{ by Wald's equation}\\
&= \profopt \timeT - \E\left[\sum_{n=1}^{\totdur} \profrewest_n \E\left[\rv_n\one(\refuse(n)) + \exprv_n\right]\right],
\end{align*}
where the reward per time unit of task $n$ is 
\begin{align*}
\profrewest_n \coloneqq \frac{\pr \E[\rew(\rv_n)\one(\refuse(n))]}{1+\pr\E[\rv_n\one(\refuse(n)]}.
\end{align*}

We further decompose the regret into

\begin{align*}
\reg(\timeT)
&= \E\left[\sum_{n=1}^{\totdur} (\profopt-\profrewest_n)\E\left[\rv_n\one(\refuse(n)) + \exprv_n\right]\right] + \profopt\left(\timeT-\E\left[\sum_{n=1}^{\totdur} \rv_n\one(\refuse(n)) + \exprv_n\right]\right)\\
&\leq \E\left[\sum_{n=1}^{\totdur} (\profopt-\profrewest_n)\E\left[\rv_n\one(\refuse(n)) + \exprv_n\right]\right]\\
&\leq\E\left[\sum_{n=1}^{\totdur}  2\ubrv\confprof_{n-1} + 4n\ubrew\delta+ \profopt h+2\E[\confrew_{n-1}(\rv_n)] \right],
\end{align*}
where the last inequality is a consequence of the following \cref{prop:profrewest}. As a consequence, it only remains to  bound the last term
\begin{align*}
&  \E\left[\sum_{n=1}^{\totdur}\E[\confrew_{n-1}(\rv_n)]\right] \\
&= \E\left[\sum_{n=1}^{\totdur}\E\left[\sum_{j=1}^{\numbin}\confrew_{n-1}(\bin_j)\one(\rv_n\in\bin_j)\right]\right]\\
&= \E\left[\sum_{n=1}^{\totdur}\E\left[\sum_{j=1}^{\numbin}\sqrt{\sigma^2 + \frac{\holdcst^2}{4}\left(\frac{\ubrv}{\numbin}\right)^{2\beta}}\sqrt{\frac{\ln(\numbin/\delta)}{2\countbin_{\bin_j}(n-1)}}\one(\rv_n\in\bin_j)\right]+ \totdur L\left(\frac{\ubrv}{\numbin}\right)^\beta\right]\\ 
&= \sqrt{\sigma^2 + \frac{\holdcst^2}{4}\left(\frac{\ubrv}{\numbin}\right)^{2\beta}}\sqrt{\ln(\numbin/\delta)}\E\left[\sum_{j=1}^{\numbin}\sum_{n=1}^{\totdur}\frac{\one(\rv_n\in\bin_j)}{\sqrt{2\countbin_{\bin_j}(n-1)}}\right]+ \totdur L\left(\frac{\ubrv}{\numbin}\right)^\beta
\\
&\leq \sqrt{\sigma^2 + \frac{\holdcst^2}{4}\left(\frac{\ubrv}{\numbin}\right)^{2\beta}}\sqrt{\ln(\numbin/\delta)}\E\left[\sum_{j=1}^{\numbin}\sqrt{\countbin_{\bin_j}(\totdur-1)}\right]+ \totdur L\left(\frac{\ubrv}{\numbin}\right)^\beta\\
&\leq \sqrt{\sigma^2 + \frac{\holdcst^2}{4}\left(\frac{\ubrv}{\numbin}\right)^{2\beta}}\sqrt{\ln(\numbin/\delta)}\E\left[\sqrt{\numbin \totdur}\right]+ \totdur L\left(\frac{\ubrv}{\numbin}\right)^\beta\\
&\leq \sqrt{\sigma^2 + \frac{\holdcst^2}{4}\left(\frac{\ubrv}{\numbin}\right)^{2\beta}}\sqrt{\ln(\numbin/\delta)}\sqrt{\numbin(\pr\timeT+1)}+ (\pr\timeT+1) L\left(\frac{\ubrv}{\numbin}\right)^\beta
\end{align*}
Hence, putting all things together, we get

\begin{align*}
\reg(\timeT)  & \leq 2\pr\ubrv\Big(4\sqrt{\sigma^2 + \frac{(\ubrew - \lbrew)^2}{4}}\sqrt{\ln(1/\delta)}\sqrt{\pr\timeT+1} + \kappa\max(\sigma, \frac{\ubrew-\lbrew}{2})\sqrt{\frac{\numbin}{\ubrv}\log(e(\pr\timeT+1))}\sqrt{\pr\timeT+1}\\
& \hspace{2cm}  +2\sqrt{2}\frac{\holdcst \ubrv^\beta}{(2\numbin)^\holdexp}(\pr\timeT+1) + \pr\ubrew \frac{\ubrv}{\numbin}\Big)\\
&\quad +4\ubrew(\pr\timeT+1)\delta + \profopt (\pr \timeT + 1) \frac{C}{M}\\
&\quad +2\sqrt{\sigma^2 + \frac{\holdcst^2}{4}\left(\frac{\ubrv}{\numbin}\right)^{2\beta}}\sqrt{\ln(\numbin/\delta)}\sqrt{\numbin(\pr\timeT+1)}+ 2(\pr\timeT+1) L\left(\frac{\ubrv}{\numbin}\right)^\beta\ .
\end{align*}
The result follows from the specific choices $\delta=\frac{1}{T^2}$ and $\numbin = \left\lceil \ubrv \holdcst^{\frac{2}{2\holdexp+1}}(\pr \timeT +1)^{\frac{1}{2\holdexp+1}} \right\rceil$.
\end{proof}

\begin{proposition}\label{prop:profrewest}
For all $n\geq 1$,
\begin{align*}
0 \leq (\profopt - \profrewest_n)\E\left[\rv_n\one(\refuse(n)) + \exprv_n\right] \leq  2\ubrv\confprof_{n-1} +4n\ubrew\delta+2\E[\confrew_{n-1}(\rv_n)]+ \profopt h.
\end{align*}
\end{proposition}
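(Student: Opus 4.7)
The plan is to mimic the proof of \cref{prop:proprew}, with \cref{lem:rewest,prop:giveaname0} and their two-sided counterparts playing the role that \cref{prop:prof_estimate} played in the known-reward setting. The starting point is a purely algebraic identity: since $\E[\exprv_n]=1/\pr$, clearing the denominator in the definition of $\profrewest_n$ gives $\profrewest_n\,\E[\rv_n\one(\refuse(n))+\exprv_n]=\E[\rew(\rv_n)\one(\refuse(n))]$, and combining this with the defining relation $\profopt/\pr=\E[(\rew(\rv)-\profopt\rv)_+]$ (a restatement of \cref{eq:opzero}) yields the symmetric identity
\[
(\profopt-\profrewest_n)\E\bracks*{\rv_n\one(\refuse(n))+\exprv_n}=\E\bracks*{(\rew(\rv_n)-\profopt\rv_n)\bigl(\one(\rew(\rv_n)\geq\profopt\rv_n)-\one(\refuse(n))\bigr)}.
\]
The difference of indicators always carries the same sign as $\rew(\rv_n)-\profopt\rv_n$, so the integrand is pointwise non-negative; the lower bound $0\leq\cdot$ is then immediate.

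I would then introduce the good event $\mathcal{G}_{n-1}$ on which all relevant concentration statements hold simultaneously at step $n-1$: $\rewest^+_{n-1}(\argdot)\geq\rew(\argdot)$ (from \cref{lem:rewest}) and $\profemp^-_k\leq\profopt$ for all $k\leq n-1$ (from \cref{prop:giveaname0}), together with the matching reverse deviations $\rewest_{n-1}(\argdot)\geq\rew(\argdot)-\confrew_{n-1}(\argdot)$ and $\profemp_{n-1}\leq\profopt+\confprof_{n-1}$, which follow from the very same Hoeffding and McDiarmid arguments applied in the opposite direction. A union bound gives $\prob(\mathcal{G}_{n-1}^c)\leq 2n\delta$. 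On $\mathcal{G}_{n-1}$ the two sign configurations in the above expectation are handled separately. First, any optimally-accepted task is also accepted by the algorithm: if $\rew(\rv_n)\geq\profopt\rv_n$, then $\rewest^+_{n-1}(\rv_n)\geq\rew(\rv_n)\geq\profopt\rv_n\geq\profopt x^{\bin(\rv_n)}\geq\profemp^-_{n-1}x^{\bin(\rv_n)}$, so the ``rejected-but-optimal'' term vanishes. Second, whenever $\refuse(n)$ fires, inverting $\rewest^+_{n-1}(\rv_n)\geq\profemp^-_{n-1}x^{\bin(\rv_n)}$ with the two-sided confidence widths yields $\rew(\rv_n)+2\confrew_{n-1}(\rv_n)\geq(\profopt-2\confprof_{n-1})x^{\bin(\rv_n)}$, hence
\[
\profopt\rv_n-\rew(\rv_n)\leq 2\confrew_{n-1}(\rv_n)+2\ubrv\confprof_{n-1}+\profopt\bigl(\rv_n-x^{\bin(\rv_n)}\bigr)\leq 2\confrew_{n-1}(\rv_n)+2\ubrv\confprof_{n-1}+\profopt h,
\]
using $x^{\bin(\rv_n)}\leq\ubrv$ and the fact that $\rv_n$ and $x^{\bin(\rv_n)}$ lie in a common bin of width $h$.

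Off $\mathcal{G}_{n-1}$ I would crudely bound the integrand by a universal multiple of $\ubrew$ (combining $|\rew(\rv_n)|\leq\ubrew$ with the elementary bound $\profopt\leq\pr\ubrew$, after the natural normalization absorbing $1+\pr\ubrv$ into the universal constant) and multiply by $\prob(\mathcal{G}_{n-1}^c)\leq 2n\delta$, delivering the $4n\ubrew\delta$ term. Adding the two regimes yields the claimed upper bound. The main technical obstacle is the careful bookkeeping of the discretization: the acceptance rule compares $\rewest^+_{n-1}(\rv_n)$ against $\profemp^-_{n-1}x^{\bin(\rv_n)}$ rather than $\profemp^-_{n-1}\rv_n$, and it is precisely the asymmetry $x^{\bin(\rv_n)}\leq\rv_n$ that makes the implication ``optimal $\Rightarrow$ accepted'' go through while forcing the unavoidable $\profopt h$ slack in the reverse direction; the remaining effort is aligning the upper and lower confidence widths from \cref{lem:rewest,prop:giveaname0} so that the two-sided controls $|\rewest_{n-1}-\rew|\leq\confrew_{n-1}$ and $|\profemp_{n-1}-\profopt|\leq\confprof_{n-1}$ can both be invoked on a single event of small complement.
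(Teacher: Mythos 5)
Your proposal is correct in substance and arrives at the same three main error terms as the paper, but by a somewhat different and arguably cleaner route. The paper's proof decomposes the indicator $\one(\refuse(n))$ into five events $\mathcal{E}_0,\dots,\mathcal{E}_4$ via an indicator identity and bounds each resulting term, the low-probability events $\mathcal{E}_2,\mathcal{E}_4$ (plus two further concentration failures invoked at the very end) supplying the $4n\ubrew\delta$ term. You instead start from the exact identity $(\profopt-\profrewest_n)\E[\rv_n\one(\refuse(n))+\exprv_n]=\E[(\rew(\rv_n)-\profopt\rv_n)(\one(\rew(\rv_n)\geq\profopt\rv_n)-\one(\refuse(n)))]$, whose pointwise non-negative integrand also yields the lower bound $0\leq\cdots$ (stated in the proposition but never argued in the paper's proof), and then work on a single good event: optimism ($\rewest^+_{n-1}\geq\rew$, $\profemp^-_{n-1}\leq\profopt$) makes every oracle-accepted task algorithm-accepted, so only wrongly accepted tasks contribute, and inverting the acceptance rule with two-sided widths plus the bin discretization gives exactly $2\E[\confrew_{n-1}(\rv_n)]+2\ubrv\confprof_{n-1}+\profopt h$. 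This buys a more transparent argument and the missing lower bound, at the price of a single union bound over all failures.

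Two caveats. First, the ``reverse deviations'' you list when defining $\mathcal{G}_{n-1}$ are oriented the wrong way: you write $\rewest_{n-1}\geq\rew-\confrew_{n-1}$ and $\profemp_{n-1}\leq\profopt+\confprof_{n-1}$, which merely restate \cref{lem:rewest} and \cref{prop:giveaname0}, whereas your inversion step uses $\rewest_{n-1}\leq\rew+\confrew_{n-1}$ and $\profemp_{n-1}\geq\profopt-\confprof_{n-1}$. The symmetric Hoeffding/McDiarmid arguments do give these (on the event that only sub-profitable bins were eliminated, so that \cref{lem:decline} still identifies the root of the modified operator with $\profopt$ — the same implicit induction the paper uses in its last sentence), but the event as you wrote it does not contain the inequalities you later invoke; fix the orientation. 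Second, off the good event you bound the full integrand $\abs{\rew(\rv_n)-\profopt\rv_n}\leq(\ubrew-\lbrew)+\pr\ubrew\ubrv$, so your bad-event contribution is of order $(1+\pr\ubrv)\,n\ubrew\delta$ rather than literally $4n\ubrew\delta$; the paper partly avoids this because in its decomposition only the $\rew$ part survives on $\mathcal{E}_2,\mathcal{E}_4$. The mismatch is immaterial downstream (in \cref{th:regret-bandit} one takes $\delta=1/\timeT^2$, making this term negligible), but as written your argument recovers the stated constant only up to a problem-dependent factor.
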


\begin{proof}
Recall that 
\begin{align*}
\profopt = \frac{\pr\E[\rew(\rv)\one(\rew(\rv)\geq \profopt \rv)]}{1+\pr\E[\rv\one(\rew(\rv)\geq \profopt \rv)]}= \frac{\nume}{\den}.
\end{align*}

We first decompose the indicator function in the numerator of $\profrewest_n$. Recall that the task $n$ is accepted (the event $\refuse(n)$ occurs) if $\rewest^+_{n-1} \geq \profemp^-_{n-1}x^{\bin(\rv_n)}$. The following holds
\begin{align*}
\one(\overbrace{\rewest^+_{n-1} \geq \profemp^-_{n-1}x^{\bin(\rv_n)}}^{\refuse(n)}) +  \one(\overbrace{\profopt\rv_n<\rewest^+_{n-1}(\rv_n) < \profemp^-_{n-1}x^{\bin(\rv_n)}}^{\mathcal{E}_2}) + \one(\overbrace{\rewest^+_{n-1}(\rv_n)\leq \profopt\rv_n\leq \rew(\rv_n)}^{\mathcal{E}_4})\\
= \one(\underbrace{\rew(\rv_n)\geq \profopt\rv_n}_{\mathcal{E}_0})
+ \one(\underbrace{\profopt\rv_n\geq\rewest^+_{n-1}(\rv_n)\geq \profemp^-_{n-1}x^{\bin(\rv_n)}}_{\mathcal{E}_1})+ \one(\underbrace{\rewest^+_{n-1}(\rv_n)> \profopt\rv_n> \rew(\rv_n)}_{\mathcal{E}_3}).
\end{align*}
To prove it, just notice the followings:
\begin{itemize}
\item $\refuse(n)\cap\mathcal{E}_4 = \mathcal{E}_0 \cap \mathcal{E}_1$,
\item $\mathcal{E}_2$ is disjoint with both $\refuse(n)$ and $\mathcal{E}_4$,
\item $\mathcal{E}_3$ is disjoint with both $\mathcal{E}_0$ and $\mathcal{E}_1$,
\item $\refuse(n) \cup \mathcal{E}_2 \cup \mathcal{E}_4 = \mathcal{E}_0 \cup \mathcal{E}_1 \cup \mathcal{E}_3$.
\end{itemize}
It then comes
\begin{align*}
\one(\refuse(n)) + \one(\mathcal{E}_2)+ \one(\mathcal{E}_4) & = \one(\refuse(n) \cup \mathcal{E}_2 \cup \mathcal{E}_4) + \one(\refuse(n)\cap\mathcal{E}_4) \\
& = \one(\mathcal{E}_0 \cup \mathcal{E}_1 \cup \mathcal{E}_3) + \one(\mathcal{E}_0 \cap \mathcal{E}_1) \\
& = \one(\mathcal{E}_0) + \one(\mathcal{E}_1)+ \one(\mathcal{E}_3).
\end{align*}
The first equality comes from the second point; the second from the first and last point; while the last equality comes from the third point.
This gives the following 
\begin{align*}
\one(\refuse(n)) =& \one(\mathcal{E}_0)+ \one(\mathcal{E}_1)-\one(\mathcal{E}_2)+ \one(\mathcal{E}_3)- \one(\mathcal{E}_4).
\end{align*}

The quantity of interest is then rewritten as:
\begin{align*}
(\profopt - \profrewest_n)\E\left[\rv_n\one(\refuse(n)) + \exprv_n\right] & = \profopt\E\left[\rv_n\one(\refuse(n)) + \exprv_n\right] - \expect\left[r(X_n) \one(\refuse(n)) \right]\\
& = \profopt\E\left[\rv_n\big(\one(\mathcal{E}_1)-\one(\mathcal{E}_2)+ \one(\mathcal{E}_3)- \one(\mathcal{E}_4)\big)\right] \\ &\phantom{=} - \expect\left[r(X_n) \big(\one(\mathcal{E}_1)-\one(\mathcal{E}_2)+ \one(\mathcal{E}_3)- \one(\mathcal{E}_4)\big) \right] \\
& \leq \profopt\E\left[\rv_n\big(\one(\mathcal{E}_1)+ \one(\mathcal{E}_3)\big)\right] - \expect\left[r(X_n) \big(\one(\mathcal{E}_1)-\one(\mathcal{E}_2)+ \one(\mathcal{E}_3)- \one(\mathcal{E}_4)\big) \right]
\end{align*}

Let us now bound the last four  terms. 
\begin{enumerate}
    \item  Recall that $\mathcal{E}_1=\{\profopt\rv_n\geq\rewest^+_{n-1}(\rv_n) \geq \profemp^-_{n-1}x^{\bin(\rv_n)}\}$,  so that
\begin{align*}
\E[\rew(\rv_n)\one(\mathcal{E}_1)]
=& \E[\rewest^+_{n-1}(\rv_n)\one(\mathcal{E}_1)] +\E[(\rew(\rv_n)-\rewest^+_{n-1}(\rv_n))\one(\mathcal{E}_1)] \\
\geq& \E[\profemp^-_{n-1} x^{\bin(\rv_n)}\one(\mathcal{E}_1)] -\E[\vert\rew(\rv_n)-\rewest^+_{n-1}(\rv_n)\vert\one(\mathcal{E}_1)] \\
\geq& \profopt\E[x^{\bin(\rv_n)}\one(\mathcal{E}_1)]  -\ubrv\E[\vert\profemp^-_{n-1}-\profopt\vert\one(\mathcal{E}_1)] -\E[\vert\rew(\rv_n)-\rewest^+_{n-1}(\rv_n)\vert\one(\mathcal{E}_1)] \\
\geq& \profopt\E[\rv_n\one(\mathcal{E}_1)]  -\ubrv\E[\vert\profemp^-_{n-1}-\profopt\vert\one(\mathcal{E}_1)] -\E[\vert\rew(\rv_n)-\rewest^+_{n-1}(\rv_n)\vert\one(\mathcal{E}_1)] + \profopt h \mathbb{P}(\mathcal{E}_1).
\end{align*}
\item $\mathcal{E}_2$ happens with probability at most $n\delta$ since $x^{\bin(\rv_n)}\leq \rv_n$ and using \cref{prop:giveaname0}, which upper bounds the second term by $\ubrew n\delta$.
\item Recall that $\mathcal{E}_3=\{\rewest_{n-1}^+(\rv_{n}) > \profopt\rv_n> \rew(\rv_n)\}$ so that the third term is bounded as
\begin{align*}
\E[\rew(\rv_n)\one(\mathcal{E}_3)]
&= \E[\rewest^+_{n-1}(\rv_{n})\one(\mathcal{E}_3)]
+ \E[(\rew(\rv_n)-\rewest^+_{n-1}(\rv_{n}))\one(\mathcal{E}_3)]\\
&\geq \profopt\E[\rv_{n}\one(\mathcal{E}_3)]  - \E[\vert \rew(\rv_n)-\rewest^+_{n-1}(\rv_{n})\vert\one(\mathcal{E}_3) ].
\end{align*}
\item $\mathcal{E}_4$ happens with probability at most $n\delta$ thanks to \cref{lem:rewest}, which upper bounds the fourth term by $\ubrew n\delta$.
\end{enumerate}
Putting everything together we get
\begin{align*}
    (\profopt - \profrewest_n)\E\left[\rv_n\one(\refuse(n)) + \exprv_n\right] \leq \E[\vert \rew(\rv_n)-\rewest^+_{n-1}(\rv_{n})\vert\one(\mathcal{E}_1\cup\mathcal{E}_3)] & + \ubrv\E[\vert\profemp^-_{n-1}-\profopt\vert\one(\mathcal{E}_1)] \\
    & + 2\ubrew n\delta + \profopt h.
\end{align*}
The result follows by noting that on $\mathcal{E}_1$, with probability at least $1-n\delta$, then $\profopt-2\confprof_{n-1} \leq \profemp^-_{n-1}$, and similarly for $\rewest^+_{n-1}-\rew$ on $\mathcal{E}_1 \cup \mathcal{E}_3$.
\end{proof}

\newpage
\section{Proofs of \cref{sec:fast}}\label{proof:fast}

\subsection{Proofs of \cref{SE:FastFinite,SE:FastMargin}}
\begin{proof}[Proof of  \cref{th:FastFinite}]
The proof of this theorem is almost a direct consequence of the proofs of \cref{prop:profrewest,th:regret-bandit}, it only requires a few tweaks. 

Similarly to \cref{lem:rewest}, it can be shown that $\vert \rewest_n(x) - \rew(x)\vert \leq \sigma \sqrt{\frac{\ln(K/\delta)}{2N{\lbrace x \rbrace}}}$ with probability at least $1-2N\delta$ for all $n\leq N$ and $x$. The uncertainty on $\profemp_n$ is shown similarly, except for the term $\expect[\vert \rewest_n(\rv_n) - \rew(\rv_n) \vert]$ which is bounded as follows:
\begin{align*}
\expect[\vert \rewest_n(\rv_n) - \rew(\rv_n) \vert] & = \expect\left[ \sum_x p(x) \expect[\vert \rewest_n(\rv_n) - \rew(\rv_n) \vert \mid N_{\lbrace x \rbrace}]  \right] \\
& \leq \expect\left[ \sum_x p(x) \min\left(\frac{\sigma}{2 \sqrt{N_{\lbrace x \rbrace}}}, D-E\right)\right] \quad \text{using Hoeffding's inequality} \\
& \leq \sum_x p(x) \left( \frac{\sigma}{\sqrt{2p(x) n}} + (D-E)e^{-\frac{p(x)n}{8}} \right) \\
& \leq \sigma\sqrt{\frac{K}{2n}} + \frac{8 K (D-E)}{n}.
\end{align*}

The Chernoff's bound $\mathbb{P}(N_{\lbrace x \rbrace} \leq \frac{p(x)}{2n}) \leq e^{-\frac{p(x)n}{8}}$ yields the second inequality; while the third one comes from Cauchy-Schwarz inequality and uses that $pe^{-\frac{pn}{8}} \leq \frac{8}{n}$.

Following the same arguments and as standard in multi-armed bandit, we basically need to compute the number of times tasks $x$ are incorrectly accepted. 
Consider  the event where, for all tasks, it holds $\rewest^+_n(x) \leq r(x) + 2(\ubrew-\lbrew)\sqrt{\frac{\ln(K/\delta)}{\countbin(\duration)}}$ (with the standard convention that $1/0=+\infty$) and that $\profemp^-_n \geq \profopt - 4\pr(\ubrew-\lbrew)\sqrt{K\frac{\ln(1/\delta)}{n}}$. 
Then, for $n$ such that $\rew(x) + 2 \confrew_n(x) < (\profopt - 2 \confprof_n)x$, $x$ stops being accepted. 
In particular, this yields for some constant $\kappa'$ that the total number of accepted tasks of duration $\duration$ (on this event) is smaller than
\begin{align*}
N_{\lbrace x \rbrace} \leq \kappa' \frac{K\pr^2(\ubrv^2 + (\ubrew-\lbrew)^2)+(\sigma^2+(D-E)^2)(1+\pr^2\ubrv^2)\ln(K/\delta)}{(\profopt x - \rew(x))^2} + \sqrt{K},
\end{align*}
which gives a contribution to the regret of the order of (up to a multiplicative constant)
$$
\frac{K\pr^2(\ubrv^2 + (\ubrew-\lbrew)^2)+(\sigma^2+(D-E)^2)(1+\pr^2\ubrv^2)\ln(K/\delta)}{\profopt x - \rew(x)} + \sqrt{K}(\profopt x - \rew(x)),
$$
We conclude as usual thanks to the choice of $\delta$ that ensures that the  contribution to the regret on the complimentary event is negligible.
\end{proof}

\begin{proof}[Proof of  \cref{TH:Margin}]

Similarly to the proof of \cref{th:regret-bandit}, denoting $\Delta(x) = \profopt x - \rew(x)$, the regret can be decomposed as
\begin{align}
\reg(\timeT) \leq & \E \sum_{n=1}^\totdur \sum_{j=1}^\numbin \one(x \in \bin_j)\Delta(x)\one(2\confrew_{n-1}(x) + 2\confprof_{n-1} \ubrv \geq \Delta(x) \geq 0) + 4n\ubrew\delta \nonumber \\
& \leq \E \sum_{n=1}^\totdur \sum_{j=1}^\numbin \one(x \in \bin_j)\Delta(x)\one(4\confrew_{n-1}(x) \ubrv \geq \Delta(x) \geq 0) + \one(x \in \bin_j)\Delta(x)\one(4\confprof_{n-1} \ubrv \geq \Delta(x) \geq 0)+ 4n\ubrew\delta \label{eq:margin0}
\end{align}

The contribution of the third term can be bounded similarly to \cref{th:regret-bandit} and is $\mathcal{O}(1)$.

Using the margin condition, the second term scales with $\sum_{n=1}^{\pr\timeT+1} (\ubrv\confprof_{n-1})^{1+\alpha}$, which is of order $$(\pr\ubrv \holdcst^{1-\frac{2}{2\holdexp+1}})^{1+\alpha} (\pr\timeT+1)^{1-\frac{\holdexp}{2\holdexp +1}(1+\alpha)}$$.

It now remains to bound the first term. It can be done using the analysis of  \citet{PerRig:AS2013}, which we sketch here.

The idea is to divide the bins into two categories for some constant $c_1$ scaling with $\sigma \ubrv^{\frac{2\holdexp+1}{2}}\holdcst \sqrt{\log(\pr\timeT+1)}$:
\begin{itemize}
\item \emph{well behaved} bins, for which $\exists x \in \bin, \Delta(x) \geq c_1 M^{-\holdexp}$,
\item \emph{ill behaved} bins, for which  $\forall x \in \bin, \Delta(x) < c_1 M^{-\holdexp}.$
\end{itemize}

The first term in \cref{eq:margin0} for ill behaved bins is directly bounded, using the margin condition, by a term scaling with
\begin{align*}
c_1^{1+\alpha} M^{-\holdexp(1+\alpha)}(\pr\timeT+1) \approx \sigma^{1+\alpha}\ubrv^{\frac{1+\alpha}{2}}\holdcst^{\frac{1+\alpha}{2\beta+1}} \log(\pr\timeT+1)^{\frac{1+\alpha}{2}}(\pr\timeT+1)^{1-\frac{\holdexp}{2\holdexp +1}(1+\alpha)}.
\end{align*}

Now denote $\mathcal{J}\subset \{1, \ldots, \numbin \}$ the set of well behaved bins and for any $j \in \mathcal{J}$, $\Delta_j = \max_{x \in \bin_j} \Delta(x)$. Using classical bandit arguments, the event $\{x \in \bin_j\}\cap\{4\confrew_{n-1}(x) \ubrv \geq \Delta(x) \geq 0\}$ only holds at most $\kappa'' (\sigma^2 + \holdcst^2 \left(\frac{\ubrv}{\numbin}\right)^2)\frac{\log(\pr\timeT)}{\Delta_j^2}$ times. 

Assuming w.l.o.g. that $\mathcal{J} = \left\{1, \ldots, j_1 \right\}$ and $\Delta_1 \leq \ldots \leq \Delta_{j_1}$, the margin condition can be leveraged to show that $\Delta_j \geq \left( \frac{\underline{\kappa} j}{\kappa_0 M} \right)^{\frac{1}{\alpha}}$. The contribution of well behaved bins then scales as
\begin{equation}\label{eq:margin1}
(\sigma^2 + \holdcst^2 \left(\frac{\ubrv}{\numbin}\right)^2)\sum_{j=1}^{j_1} \frac{\log(\pr\timeT+1)}{\Delta_j} \leq \sigma^2 \log(\pr\timeT+1)\left( \frac{j_2 M^{\holdexp}}{c_1} + \sum_{j=j_2+1}^{j_1} \left( \frac{j}{M} \right)^{-\frac{1}{\alpha}} \right),
\end{equation}
where $j_2$ is some integer of order $c_1^\alpha\numbin^{1-\alpha\holdexp}$ so that for any $j\leq j_2$, $c_1 M^{-\holdexp} \geq \left( \frac{\underline{\kappa} j}{\kappa_0 M} \right)^{\frac{1}{\alpha}}$.

The first term of \cref{eq:margin1} can then be bounded by \begin{gather*}(\sigma^2 + \holdcst^2 \left(\frac{\ubrv}{\numbin}\right)^2)c_1^{\alpha-1} \numbin^{1+\holdexp(1-\alpha)}\log(\pr\timeT+1)\lesssim 
\sigma^{1+\alpha} \ubrv^{\frac{1+\alpha}{2}}\holdcst^{\frac{1+\alpha}{2\beta+1}}(\pr\timeT+1)^{1-\frac{\holdexp}{2\holdexp+1}(1+\alpha)}\log(\pr\timeT+1)^{\frac{1+\alpha}{2}},
\end{gather*}
where $\lesssim$ means that the inequality holds up to some universal constant $\kappa'$.

The sum in \cref{eq:margin1} can be bounded as follows:
\begin{align*}
\sum_{j=j_2+1}^{j_1} \left( \frac{j}{M} \right)^{-\frac{1}{\alpha}} & \leq \int_{\frac{j_2}{\numbin}}^{1} x^{\frac{1}{\alpha}} \mathrm{d}x \\
& \lesssim \frac{c_1^{\alpha-1} M^{\holdexp(1-\alpha)}}{1-\alpha}.
\end{align*}

Finally, the first term of \cref{eq:margin0} scales as
$$ \sigma^{1+\alpha}\ubrv^{\frac{1+\alpha}{2}}\holdcst^{\frac{1+\alpha}{2\beta+1}}(\pr\timeT+1)^{1-\frac{\holdexp}{2\holdexp+1}(1+\alpha)}\log(\pr\timeT+1)^{\frac{1+\alpha}{2}},$$
which finally yields \cref{TH:Margin} when gathering everything.
\end{proof}
\newpage
\subsection{Proofs of  \cref{SE:FastMonotone}}



The following Lemma indicates that with arbitrarily high probability, our upper/lower estimations are correct


\begin{lemma}\label{lem:profset_estimate}
With a constant $\kappa_6$ independent from $n$, the events 
\begin{align*}
\vert\profset_n(\thresh)-\profset(\thresh)\vert \E[\rv\one(\rv\geq\thresh)+\frac{1}{\pr}]
\leq & \confprofset_n  + \frac{\kappa_6}{n}.
\end{align*}
hold with probability at least $1-{\delta}$, simultaneously for all $n\in\{1,\dots,\numstage\}$ and all $\thresh\in [\thresh_n,\ubrv]$.
\end{lemma}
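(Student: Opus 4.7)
The starting point of my plan is an algebraic identity. Writing $N_n(\thresh) = \frac{\pr}{n}\sum_i \noisedrew_i\one(\rv_i\geq\thresh)$, $D_n(\thresh) = 1 + \frac{\pr}{n}\sum_i \rv_i\one(\rv_i\geq\thresh)$ and their population counterparts $N(\thresh)=\pr\E[\rew(\rv)\one(\rv\geq\thresh)]$, $D(\thresh)=1+\pr\E[\rv\one(\rv\geq\thresh)]$, one has $\profset=N/D$, $\profset_n=N_n/D_n$, and a direct computation gives
\[
D(\thresh)\bigl(\profset_n(\thresh)-\profset(\thresh)\bigr) \;=\; \bigl(N_n(\thresh)-N(\thresh)\bigr) \;+\; \bigl(D(\thresh)-D_n(\thresh)\bigr)\,\profset_n(\thresh).
\]
Since the quantity of interest in the lemma is $|\profset_n-\profset|\cdot\E[\rv\one(\rv\geq\thresh)+1/\pr] = |\profset_n-\profset|\cdot D(\thresh)/\pr$, it suffices to bound the right-hand side uniformly in $\thresh$ with high probability and to divide by $\pr$.

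The second step is uniform concentration of $N_n$ and $D_n$. For a fixed $\thresh$, $\noisedrew_i\one(\rv_i\geq\thresh)$ splits as $\rew(\rv_i)\one(\rv_i\geq\thresh)$ (bounded in an interval of width $\ubrew-\lbrew$, hence $((\ubrew-\lbrew)/2)^2$-subgaussian after centering by Hoeffding) plus $\noise_i\one(\rv_i\geq\thresh)$ (conditionally $\sigma^2$-subgaussian); combining the two MGF bounds yields that the centered increment is $(\sigma^2+(\ubrew-\lbrew)^2/4)$-subgaussian. Similarly $\rv_i\one(\rv_i\geq\thresh)\in[0,\ubrv]$ is bounded. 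Passing from a fixed $\thresh$ to a uniform bound over $\thresh\in[\thresh_n,\ubrv]$ is handled by exploiting the step-function structure: both $N_n(\cdot)$ and $D_n(\cdot)$ are step functions with discontinuities only at the (at most $n$) observed $\rv_i$, and both $N(\cdot)$ and $D(\cdot)$ are monotone, so the supremum of $|N_n-N|$ (resp.\ $|D_n-D|$) is attained at finitely many points and can be controlled either by McDiarmid on the supremum (bounded differences of order $\ubrv/n$, resp.\ subgaussian differences of order $\sqrt{\sigma^2+(\ubrew-\lbrew)^2/4}/n$) plus a Rademacher/VC-$1$ bound on the expectation, or by a direct union bound over the $n$ breakpoints. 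In both cases the resulting rate is of the form $\sqrt{\ln((\numstage+1)/\delta)/(n-1)}$ after a union bound over $n\in\{1,\ldots,\numstage\}$; the factor $2$ in $2(\numstage+1)/\delta$ absorbs the separate numerator/denominator events.

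To close the argument, I would plug these uniform bounds into the identity. The term $|\profset_n|$ on the right is controlled using $D_n\geq 1$ and $|N_n|\leq \pr(\ubrew-\lbrew)+|N_n-N|$, giving $|\profset_n|\leq \pr(\ubrew-\lbrew)$ plus a high-probability residual of order $1/\sqrt{n}$. The cross term $|D-D_n|\cdot|\profset_n|$ then produces a contribution scaling as $\pr(\ubrew-\lbrew)\cdot\ubrv\sqrt{\ln/(n-1)}$, whereas the pure $|N_n-N|$ term gives the $\sqrt{\sigma^2+(\ubrew-\lbrew)^2/4}\sqrt{\ln/(n-1)}$ contribution; the $(\pr\ubrv+2)$ prefactor in $\confprofset_n$ emerges from combining $D/D_n\leq 1+\pr\ubrv$ with the residual cross terms, and the $\pr(\ubrew-\lbrew)/n$ and $\kappa_6/n$ terms absorb the lower-order $1/n$ residuals coming from $|N_n|$ exceeding $\pr(\ubrew-\lbrew)$. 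The main obstacle is obtaining the uniform-in-$\thresh$ concentration with sharp enough constants given that the noise is only subgaussian rather than bounded; the conditioning-on-$(\rv_i)$ trick to separate the bounded and subgaussian parts, together with the monotone step-function structure that reduces uniformity to finitely many comparisons, is what makes the argument go through at the announced rate.
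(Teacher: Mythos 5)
Your overall skeleton is the same as the paper's: write $\profset=\nume/\den$, $\profset_n=\nume_n/\den_n$, bound $|\profset_n-\profset|\,\den/\pr$ through the decomposition into a numerator deviation and a cross term $|\den-\den_n|\cdot|\profset_n|$, prove uniform-in-$\thresh$ concentration of $\nume_n$ and $\den_n$, and union bound over $n\le\numstage$ (the paper uses DKW for the denominator, which is what your McDiarmid/VC--$1$ route would reproduce). The gap is in your uniformity argument for the numerator. First, your claim that $\nume(\cdot)$ is monotone is false in general: since $\lbrew\le 0$, the reward $\rew$ may take negative values, so $\thresh\mapsto\pr\E[\rew(\rv)\one(\rv\geq\thresh)]$ need not be monotone, and "the supremum of $|\nume_n-\nume|$ is attained at finitely many points" does not follow. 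What is actually needed is a bound on the oscillation of $\nume$ on each interval between consecutive order statistics $[\thresh^k,\thresh^{k+1})$ (where $\nume_n$ is constant); the paper bounds it by $\pr\max(\ubrew,-\lbrew)$ times the population mass $\cdf(\thresh^{k+1})-\cdf(\thresh^k)$, and then controls that mass by the empirical mass $1/n$ plus two DKW deviations. This step is exactly where the $\pr(\ubrew-\lbrew)/n$ term and the extra $(\ubrew-\lbrew)$ contribution inside the $\sqrt{\ln(\cdot)/(n-1)}$ coefficient of $\confprofset_n$ come from, so it cannot be waved away as a lower-order residual.

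Second, your fallback of "a direct union bound over the $n$ breakpoints" is not licit as stated: the breakpoints $\thresh^k$ are the observations $\rv_{i_k}$ themselves, so the indicator $\one(\rv_i\ge\thresh^k)$ evaluated at a breakpoint is not independent of the data defining that breakpoint, and Hoeffding/subgaussian concentration at a data-dependent point does not apply directly. The paper's fix is a leave-one-out argument: apply Hoeffding at $\thresh^k=\rv_{i_k}$ using only the $n-1$ samples $(\rv_i)_{i\neq i_k}$, with a union bound over $k\in\{0,\dots,n\}$ — this is why the bound carries $n-1$ rather than $n$ and a $\ln(2\numstage(n+1)/\delta)$ factor. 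Both issues are repairable within your framework (your uniform empirical-CDF control can replace DKW, and the leave-one-out trick is standard), but as written the uniform-in-$\thresh$ concentration of the numerator — the heart of the lemma — does not go through.
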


\begin{proof}
Let
\begin{align}
\profset(\thresh) = \frac{ \pr\E[\rew(\rv)\one(\rv \geq s)]}{1+\pr\E[\rv \one(\rv\geq s)]}
\eqqcolon \frac{\nume(\thresh)}{\den(\thresh)}.
\end{align}
and
\begin{align}
\profset_n(\thresh) = \frac{\pr\frac{1}{n}\sum_{i=1}^{n} \noisedrew_i\one(\rv_i\geq\thresh)}{1+\pr\frac{1}{n}\sum_{i=1}^{n} \rv_i\one(\rv_i\geq\thresh)} \eqqcolon \frac{\nume_n(\thresh)}{\den_n(\thresh)}.
\end{align}
Let us rewrite
\begin{align*}
\frac{\den(\thresh)}{\pr} &= \frac{1}{\pr}+\thresh(1-\cdf(\thresh))+\int_{\thresh}^{\ubrv} 1-\cdf(\duration)d\duration \text{ and }\\
\frac{\den_n(\thresh)}{\pr} &= \frac{1}{\pr}+\thresh(1-\cdf_{n}(\thresh))+\int_{\thresh}^{\ubrv} 1-\cdf_{n}(\duration)d\duration,
\end{align*}
where $\cdf_{n}$ is the empirical distribution function $\cdf_{n}(\duration) \coloneqq \frac{1}{n}\sum_{i=1}^{n} \one(\rv_{i}\leq \duration)$.
Hence
\begin{align*}
    \frac{\vert \den(\thresh) - \den_n(\thresh)\vert}{\pr} 
    &\leq \thresh \vert \cdf(\thresh) - \cdf_{n}(\thresh) \vert + \int_{\thresh}^{\ubrv} \vert\cdf(\duration) - \cdf_{n}(\duration) \vert d\duration\\
    &\leq \thresh \vert \cdf(\thresh) - \cdf_{n}(\thresh) \vert + (\ubrv-\thresh)\max_{\duration\in[\thresh,\ubrv]} \vert\cdf(\duration) - \cdf_{n}(\duration) \vert\\
    &\leq \ubrv\max_{\duration\in[\thresh,\ubrv]} \vert\cdf(\duration) - \cdf_{n}(\duration) \vert.
\end{align*}
The Dvoretzky–Kiefer–Wolfowitz (DKW) inequality ensures that the event $\vert \den(\thresh) - \den_n(\thresh)\vert\leq \pr\ubrv\sqrt{\frac{1}{2n}\ln\left(\frac{2\numstage}{\delta}\right)},\; \forall \thresh\in[0,\ubrv]$, holds with probability at least $1-\frac{\delta}{\numstage}$.

Denote by $\thresh^1,\thresh^2,\dots,\thresh^{n}$ a permutation of observed task durations $\rv_1,\dots,\rv_{n}$ such that $\thresh^1\leq\thresh^2\leq\dots\leq\thresh^{n}$. Also define $\thresh^0=\thresh_n$ for completeness. Let $\thresh\in[\thresh^k,\thresh^{k+1})$, since $\lbrew \leq 0$, it comes
\begin{align*}
    \nume(\thresh) &= \pr\E[\rew(\rv)\one(\rv\geq\thresh)]\\
    &\leq \nume(\thresh^{k}) - \pr\lbrew (\cdf(\thresh) - \cdf(\thresh^k))\\
    &\leq \nume(\thresh^{k}) - \pr\lbrew(\cdf(\thresh^{k+1}) - \cdf(\thresh^k))\\
    & = \nume(\thresh^{k}) -  \pr\lbrew\Big(\cdf(\thresh^{k+1})-\cdf_{n}(\thresh^{k+1}) + \cdf_{n}(\thresh^{k+1}) - \cdf_{n}(\thresh^{k}) + \cdf_{n}(\thresh^{k}) - \cdf(\thresh^k)\Big)
\end{align*}
Hence,
\begin{align*}
    \nume(\thresh) \leq \nume(\thresh^{k}) - 2\pr\lbrew\sqrt{\frac{1}{2n}\ln\left(\frac{\numstage}{\delta}\right)} - \frac{\pr\lbrew}{n} \quad \forall k\in\{1,\dots, n\},
\end{align*}
holds as soon as the DKW inequality above holds.
We prove analogously for the event
\begin{align*}
    &\nume(\thresh) \geq \nume(\thresh^{k}) - 2\pr\ubrew\sqrt{\frac{1}{2n}\ln\left(\frac{\numstage}{\delta}\right)} - \frac{\pr\ubrew}{n} \; \forall k\in\{1,\dots,n\}.
\end{align*}

Recall that $\thresh^k = X_{i_k}$ for some $i_k$ for $k \in \{ 1, \ldots, n \}$. By Hoeffding's inequality with $(\rv_{i})_{i\neq i_k}$ as samples and a union bound on $\{0,\dots,n\}$, the event
\begin{align*}
   \vert \nume(\thresh^k)-\nume_n(\thresh^k)\vert \leq \pr\sqrt{\sigma^2+\frac{(\ubrew - \lbrew)^2}{4}}\sqrt{\frac{1}{2(n-1)}\ln\left(\frac{2\numstage (n+1)}{\delta}\right)}\quad \forall k \in \{0,\dots,n\}
\end{align*}
holds with probability at least $1-\frac{\delta}{\numstage}$.

Putting everything together with the fact that $\nume_n(\thresh) = \nume_n(\thresh^k)$ and $n\leq S$, one gets that the event
\begin{align*}
    \vert \nume_n(\thresh) - \nume(\thresh)\vert \leq \pr\left(\sqrt{\sigma^2+\frac{(\ubrew-\lbrew)^2}{4}} + \sqrt{2}(\ubrew-\lbrew) \right) \sqrt{\frac{\ln(2\numstage/\delta)}{n-1}}  + \frac{\pr(\ubrew - \lbrew)}{n}
\end{align*}
holds with probability at least $1-\frac{\delta}{\numstage}$ for all $\thresh\in [\thresh_n,\ubrv]$ if the DKW inequality also holds.

Furthermore,
\begin{align}
\left| \frac{\nume_n(\thresh)}{\den_n(\thresh)} - \frac{\nume(\thresh)}{\den(\thresh)} \right|
&\leq  \left|\nume_n(\thresh)\right|\left|\frac{\den(\thresh)-\den_n(\thresh)}{\den(\thresh)\den_n(\thresh)}\right| +  \frac{1}{\den(\thresh)}\left|\nume_n(\thresh)-\nume(\thresh)\right|\\
&\leq \frac{\big(\pr(\ubrew-\lbrew)+\vert \nume_n(\thresh) - \nume(\thresh)\vert\big)\left|\den(\thresh)-\den_n(\thresh)\right| +  \left|\nume_n(\thresh)-\nume(\thresh)\right|}{1+\pr\E[\rv\one(\rv\geq\thresh)]}.\label{eq:profset_estimate1}
\end{align}
Hence with probability at least $1-\frac{2\delta}{\numstage}$ one has for all $\thresh\in [\thresh_n,\ubrv]$,
\begin{align*}
    \vert\profset_n(\thresh)-\profset(\thresh)\vert \E[\rv\one(\rv\geq\thresh)+\frac{1}{\pr}]
    \leq & \pr\left(\sqrt{\sigma^2+\frac{(\ubrew-\lbrew)^2}{4}} + \frac{\ubrew-\lbrew}{\sqrt{2}}(\pr\ubrv+2) \right) \sqrt{\frac{\ln(2\numstage/\delta)}{n-1}} \\
    & + \pr\frac{\ubrew - \lbrew}{n} \\
    & + \pr \ubrv\left(\sqrt{\frac{\sigma^2}{2}+\frac{(\ubrew-\lbrew)^2}{8}} + \ubrew-\lbrew \right) \frac{\ln\left(\frac{2\numstage}{\delta}\right)}{n-1} \\
    & + \pr\ubrv\frac{\ubrew-\lbrew}{\sqrt{2}n^{3/2}}\sqrt{\ln\left(\frac{2\numstage}{\delta}\right)}.
\end{align*}

We conclude using a union bound over $\{1,\dots,\numstage\}$.
\end{proof}

The goal of the next lemma is to ensure that the sequence $(\thresh_n)_{1\leq n\leq \numstage}$ is indeed bellow the the optimal threshold $\threshopt$ with high probability.
\begin{lemma}\label{prop:lower-estimate-threshold}
With probability at least $1-\delta$, the events
\[
\thresh_n\leq \threshopt\quad  \text{ and } \quad 0\leq (\profset_n(\threshopt_n)-\profset_n(\threshopt))\left(\frac{1}{\pr}+\frac{1}{n}\sum_{i=1}^{n} \rv_i\one(\rv_i\geq\thresh)\right)\leq 2 \confprofset_n,
\]
 hold simultaneously in all stages $n\in\{1,\dots,\numstage\}$.
 \end{lemma}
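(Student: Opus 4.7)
The plan is to work on the high-probability event guaranteed by \cref{lem:profset_estimate}, which has probability at least $1-\delta$ and on which the two-sided bound $|\profset_n(\thresh) - \profset(\thresh)| \cdot (\E[\rv \one(\rv \geq \thresh)] + 1/\pr) \leq \confprofset_n + \kappa_6/n$ holds simultaneously for every $n \in \{1,\ldots,\numstage\}$ and every $\thresh \in [\thresh_n, \ubrv]$. On this event I will prove both claims by induction on $n$, maintaining the joint invariant that $\thresh_n \leq \threshopt$ (so that $\threshopt$ itself lies in the admissible range $[\thresh_n, \ubrv]$) together with the target two-sided bound, where I write $D_n(\thresh) \coloneqq \frac{1}{\pr} + \frac{1}{n}\sum_{i=1}^n \rv_i \one(\rv_i \geq \thresh)$ for the empirical denominator.

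The base case is immediate since $\thresh_1 = 0 \leq \threshopt$ by definition. For the inductive step, assume $\thresh_n \leq \threshopt$. The lower bound $\profset_n(\threshopt_n) \geq \profset_n(\threshopt)$ then follows because $\threshopt$ is a feasible point in $[\thresh_n, \ubrv]$ over which $\threshopt_n$ is, by construction, the empirical maximizer. For the upper bound, I will write the telescoping decomposition
\[
\profset_n(\threshopt_n) - \profset_n(\threshopt) = \bigl[\profset_n(\threshopt_n) - \profset(\threshopt_n)\bigr] + \bigl[\profset(\threshopt_n) - \profset(\threshopt)\bigr] + \bigl[\profset(\threshopt) - \profset_n(\threshopt)\bigr],
\]
observe that the central bracket is non-positive by the very definition of $\threshopt$ as the maximizer of the population profitability $\profset$, and bound the two outer brackets via \cref{lem:profset_estimate} applied at $\threshopt_n$ and at $\threshopt$ respectively.

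To obtain the defining inequality of $\threshset_{n+1}$ in \cref{eq:thresh-set}, and hence conclude $\threshopt \in \threshset_{n+1}$ and $\thresh_{n+1} = \min \threshset_{n+1} \leq \threshopt$, I will multiply the resulting pointwise bound by the empirical denominator $D_n(\threshopt)$. The main technical point is to convert the quantity $\confprofset_n + \kappa_6/n$ produced by \cref{lem:profset_estimate} — which is naturally paired with the population denominator $D(\thresh) = \frac{1}{\pr} + \E[\rv \one(\rv \geq \thresh)]$ — into a bound paired with $D_n(\threshopt)$. The key observation is that both denominators are bounded below by $1/\pr$, so their ratio is controlled on the good event using the DKW step already present in the proof of \cref{lem:profset_estimate}.

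The hard part will be tracking constants so that the $\kappa_6/n$ residue from \cref{lem:profset_estimate} is absorbed into the slack factor of $2$ in the target $2\confprofset_n$; this is possible because $\confprofset_n$ already contains a term of order $\pr(\ubrew-\lbrew)/n$. The simultaneity over $n \in \{1,\ldots,\numstage\}$ required by the statement is inherited directly from \cref{lem:profset_estimate} without any additional union bound.
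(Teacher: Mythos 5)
Your skeleton is the same as the paper's: induction on $n$ with base case $\thresh_1=0\leq\threshopt$, the lower bound from $\threshopt\in[\thresh_n,\ubrv]$ and the definition of $\threshopt_n$, the telescoping decomposition whose middle bracket is nonpositive by optimality of $\threshopt$ for $\profset$, and the conclusion $\threshopt\in\threshset_{n+1}$, hence $\thresh_{n+1}=\min\threshset_{n+1}\leq\threshopt$. The gap is in how you obtain the quantitative bound $2\confprofset_n$. You invoke \cref{lem:profset_estimate} as a black box, but that lemma pairs the estimation error with the \emph{population} denominator $\E[\rv\one(\rv\geq\thresh)]+1/\pr$ and carries the residual $\kappa_6/n$, whereas the target (and the definition of $\threshset_{n+1}$ in \cref{eq:thresh-set}) involves the \emph{empirical} denominator $\frac{1}{\pr}+\frac{1}{n}\sum_i\rv_i\one(\rv_i\geq\thresh)$ evaluated at the tested threshold. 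Your proposed conversion fails on two counts. First, there is no slack to absorb anything: the factor $2$ in $2\confprofset_n$ is exactly consumed by the two outer brackets of the telescoping (one application of the concentration at $\threshopt$, one at $\threshopt_n$), and $\threshset_{n+1}$ is defined with exactly $2\confprofset_n$; any leftover $2\kappa_6/n$ already breaks the membership argument, and the fact that $\confprofset_n$ contains a $\pr(\ubrew-\lbrew)/n$ term does not help, since that term is needed for the bound it was designed for. Second, the denominator mismatch for the bracket at $\threshopt_n$ is a \emph{cross-threshold} ratio $D_n(\threshopt)/D(\threshopt_n)$, which DKW does not control (DKW compares $D_n$ and $D$ at the same threshold); the only generic bound is $D_n(\threshopt)/D(\threshopt_n)\leq 1+\pr\ubrv$ since both denominators exceed $1/\pr$, a constant-factor loss that cannot be hidden inside $2\confprofset_n$.

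The paper avoids all of this by not reusing \cref{lem:profset_estimate} directly: it re-derives the concentration with the empirical denominator, namely that with probability at least $1-\delta$, simultaneously for all $n\leq\numstage$ and all $\thresh\in[\thresh_n,\ubrv]$, $(\profset(\thresh)-\profset_n(\thresh))\bigl(\frac{1}{\pr}+\frac{1}{n}\sum_i\rv_i\one(\rv_i\geq\thresh)\bigr)\leq\confprofset_n$. The point is that $(\profset(\thresh)-\profset_n(\thresh))\den_n(\thresh)=\profset(\thresh)\,(\den_n(\thresh)-\den(\thresh))+(\nume(\thresh)-\nume_n(\thresh))$, so no product of two error terms appears and the $\kappa_6/n$ residual of \cref{lem:profset_estimate} disappears; each application then yields exactly $\confprofset_n$, and summing the two outer brackets gives exactly the $2\confprofset_n$ required by \cref{eq:thresh-set}. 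To repair your proof you would need to prove this empirical-denominator variant (a one-line modification of the proof of \cref{lem:profset_estimate}) rather than post-process the population-denominator statement.
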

 
 \begin{proof}
The proof is by induction on $n\geq 1$. For $n=1$, $\thresh_1 = 0 \leq \threshopt$. We prove in a similar manner to \cref{lem:profset_estimate}, that the events
\begin{equation}\label{eq:proofconfprofset1}
(\profset(\thresh) - \profset_n(\thresh))\left(\frac{1}{\pr}+\frac{1}{n}\sum_{i=1}^{n} \rv_i\one(\rv_i\geq\thresh)\right) \leq \confprofset_n \quad \forall \thresh \in [\thresh_n,\ubrv]
\end{equation}
simultaneously hold for all stages $n\in\{1,\dots,\numstage\}$, with probability at least $1-\delta$. The only difference with \cref{lem:profset_estimate} is that the term $\vert \nume_n(\thresh) - \nume(\thresh) \vert$ in \cref{eq:profset_estimate1} does not appear here, removing the $\frac{\kappa_6}{n}$ term in \cref{lem:profset_estimate}.
In particular, this implies the events
\begin{gather*}
(\profset(\threshopt) - \profset_1(\threshopt))\left(\frac{1}{\pr}+\rv_i\one(\rv_1\geq\thresh)\right) \leq \confprofset_1 \\ \text{ and } (\profset_1(\threshopt_1) - \profset(\threshopt_1))\left(\frac{1}{\pr}+ \rv_1\one(\rv_1\geq\thresh)\right)\leq \confprofset_1
\end{gather*}
Since $\profset(\threshopt_1)\leq \profset(\threshopt)$, one has that
\begin{align*}
(\profset_1(\threshopt_1)-\profset_1(\threshopt))\left(\frac{1}{\pr}+ \rv_i\one(\rv_1\geq\thresh)\right)\leq 2\confprofset_1
\end{align*}
holds when \cref{eq:proofconfprofset1} holds.

Now for any $n> 1$, the induction assumption implies that $\threshopt\in \threshset_n$, hence $\threshopt\geq \thresh_n$. The rest of the induction follows the steps of the base case $n=1$. Finally, one has $0\leq \profset_n(\threshopt_n) - \profset_n(\threshopt)$ by definition of $\threshopt_n$.
\end{proof}
 
The following proposition allows to control the regret in stage $n$.
\begin{proposition}\label{prop:regret-monotone}
For some constant $\kappa_7$ independent from $n$, with probability at least $1-\delta$, the events
\begin{align*}
(\profset(\threshopt)-\profset(\thresh_n))\E[\rv\one(\rv\geq\thresh_n)+\frac{1}{\pr}]
& \leq 4\confprofset_{n-1} + \frac{\kappa_7}{n}.
\end{align*}
hold simultaneously in all stages $n\in\{2,\dots,\numstage\}$.
\end{proposition}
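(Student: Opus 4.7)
I will work on the intersection of the high-probability events of \cref{lem:profset_estimate} and \cref{prop:lower-estimate-threshold}, which holds with probability at least $1-\delta$ after absorbing the union bound into $\delta$. On this event, $\thresh_m \leq \threshopt$ for every $m \leq \numstage$, and $\vert\profset_{n-1}(\thresh)-\profset(\thresh)\vert\,\tilde{D}(\thresh) \leq \confprofset_{n-1} + \kappa_6/(n-1)$ uniformly in $\thresh \in [\thresh_{n-1},\ubrv]$, where I write $\tilde{D}(\thresh) \coloneqq \E[\rv\,\one(\rv\geq\thresh)] + 1/\pr$ for compactness.

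\textbf{Step 1 (decomposition).} Since $\threshopt \in [\thresh_{n-1},\ubrv]$ by \cref{prop:lower-estimate-threshold} and $\threshopt_{n-1}$ maximizes $\profset_{n-1}$ on $[\thresh_{n-1},\ubrv]$, one has $\profset_{n-1}(\threshopt) \leq \profset_{n-1}(\threshopt_{n-1})$, hence
\begin{align*}
\profset(\threshopt)-\profset(\thresh_n) \leq\ & \underbrace{[\profset(\threshopt)-\profset_{n-1}(\threshopt)]}_{(\mathrm{I})} + \underbrace{[\profset_{n-1}(\threshopt_{n-1})-\profset_{n-1}(\thresh_n)]}_{(\mathrm{II})} + \underbrace{[\profset_{n-1}(\thresh_n)-\profset(\thresh_n)]}_{(\mathrm{III})}.
\end{align*}
Multiplying by $\tilde{D}(\thresh_n)$ reduces the proposition to bounding $(\mathrm{I})\,\tilde{D}(\thresh_n)$, $(\mathrm{II})\,\tilde{D}(\thresh_n)$ and $(\mathrm{III})\,\tilde{D}(\thresh_n)$ separately.

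\textbf{Step 2 (estimation terms).} Term $(\mathrm{III})\,\tilde{D}(\thresh_n)$ is directly bounded by $\confprofset_{n-1}+\kappa_6/(n-1)$ via \cref{lem:profset_estimate} applied at $\thresh = \thresh_n$. For $(\mathrm{I})$, the same lemma at $\thresh = \threshopt$ controls $\vert(\mathrm{I})\vert\,\tilde{D}(\threshopt)$; the passage to $\tilde{D}(\thresh_n)$ costs a multiplicative factor $\tilde{D}(\thresh_n)/\tilde{D}(\threshopt) \leq 1+\pr\ubrv$, which is absorbed into the $(\pr\ubrv+2)$ factor already sitting inside $\confprofset_{n-1}$. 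For $(\mathrm{II})$, the definition $\thresh_n = \min\threshset_n \in \threshset_n$ gives
\[
(\mathrm{II})\left(\frac{1}{\pr} + \frac{1}{n-1}\sum_{i=1}^{n-1}\rv_i\,\one(\rv_i \geq \thresh_n)\right) \leq 2\,\confprofset_{n-1}.
\]
I replace the empirical denominator by $\tilde{D}(\thresh_n)$ using the DKW-type bound already used in the proof of \cref{lem:profset_estimate}, of magnitude at most $\ubrv\sqrt{\ln(2\numstage/\delta)/(2(n-1))}$; since $\vert\profset_{n-1}\vert \leq \pr\ubrew$, the resulting correction is of order $\pr\ubrew\ubrv\sqrt{\ln(\numstage/\delta)/n}$, again comparable to $\confprofset_{n-1}$. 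Summing the three contributions yields a bound of the announced shape $4\,\confprofset_{n-1} + \kappa_7/n$ with $\kappa_7$ absorbing $\kappa_6$ and the universal constants; uniformity in $n$ follows from the fact that $\confprofset_n$ already contains a $\ln(2(\numstage+1)/\delta)$ term and that both prerequisite events are simultaneous in $n$.

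\textbf{Main obstacle.} The delicate point is aligning the three mismatched denominators---$\tilde{D}(\thresh_n)$, $\tilde{D}(\threshopt)$, and its empirical counterpart at $\thresh_n$---while keeping the leading constant independent of $n$. The monotonicity $\thresh_n \leq \threshopt$ bounds $\tilde{D}(\thresh_n)/\tilde{D}(\threshopt)$ by $1+\pr\ubrv$, and the DKW concentration supplied in the proof of \cref{lem:profset_estimate} controls the empirical discrepancy; one must then check that both corrections are swallowed by the $(\pr\ubrv+2)$ scaling baked into $\confprofset_{n-1}$, rather than producing an extra multiplicative blowup that would be incompatible with a uniform constant like $4$.
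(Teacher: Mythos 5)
Your decomposition into (I), (II), (III) and the ingredients you invoke (\cref{lem:profset_estimate} at $\thresh_n$ and at $\threshopt$, the membership $\thresh_n\in\threshset_n$, a DKW swap between empirical and true denominators, and $\thresh_{n-1}\le\thresh_n\le\threshopt$ from \cref{prop:lower-estimate-threshold}) are exactly those of the paper's proof, so the skeleton is right. The gap is in the step you yourself flag as the ``main obstacle'', and it is not closed. For (I) you pay the worst-case denominator ratio $\bigl(\E[\rv\one(\rv\geq\thresh_n)]+1/\pr\bigr)/\bigl(\E[\rv\one(\rv\geq\threshopt)]+1/\pr\bigr)\le 1+\pr\ubrv$ and then assert this factor is ``absorbed into the $(\pr\ubrv+2)$ factor already sitting inside $\confprofset_{n-1}$''. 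That is not legitimate: $\confprofset_{n-1}$ is a fixed, already-defined quantity and the statement to be proved carries the explicit constant $4$ in front of it; moreover $(1+\pr\ubrv)\confprofset_{n-1}$ is not bounded by a universal constant times $\confprofset_{n-1}$ (the $\sigma$-part of $\confprofset_{n-1}$ carries no $\pr\ubrv$ factor), and since this term decays like $n^{-1/2}$ it cannot be pushed into $\kappa_7/n$ either. As written, your argument only yields a bound of the form $C(1+\pr\ubrv)\confprofset_{n-1}+\dots$, which is strictly weaker than the proposition. The paper never multiplies the lemma bound by a denominator ratio for this term: using $\profset_{n-1}(\threshopt_{n-1})\ge\profset_{n-1}(\threshopt)$ and the monotonicity of the thresholds it compares with the denominator at $\thresh_{n-1}$ and invokes \cref{lem:profset_estimate} once, so (I) contributes a single $\confprofset_{n-1}+\kappa_6/n$.

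The handling of (II) has a similar problem. Your bound on the DKW correction uses ``$\vert\profset_{n-1}\vert\le\pr\ubrew$'' --- which is not even almost surely true, since $\profset_{n-1}$ is built from the noisy, unbounded $\noisedrew_i$ --- and produces a term of order $\pr\ubrew\ubrv\sqrt{\ln(\numstage/\delta)/n}$, i.e.\ of the \emph{same} order as $\confprofset_{n-1}$. ``Comparable to $\confprofset_{n-1}$'' therefore costs yet another constant multiple of $\confprofset_{n-1}$, and summed with the $2\confprofset_{n-1}$ from the definition of $\threshset_n$, the $\confprofset_{n-1}$ from (III) and your inflated (I), the total cannot be brought below $4\confprofset_{n-1}+\kappa_7/n$. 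The paper's trick is that the membership $\thresh_n\in\threshset_n$ bounds not only the product but also the difference $\profset_{n-1}(\threshopt_{n-1})-\profset_{n-1}(\thresh_n)$ itself by roughly $2\pr\confprofset_{n-1}$, so the cross-term with the DKW discrepancy is $\confprofset_{n-1}\cdot O\bigl(\ubrv\sqrt{\ln(\numstage/\delta)/n}\bigr)$, a second-order quantity that is absorbed into the $\kappa_7/n$ slack rather than eating into the $4\confprofset_{n-1}$ budget. Repairing your write-up amounts to replacing your two crude controls by these two observations, which is precisely the paper's proof.
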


\begin{proof}
With probability at least $1-\delta$, the following inequality hold uniformly for all stages $n\geq 2$ by \cref{lem:profset_estimate}:
\begin{align*}
(\profset(\threshopt)-\profset(\thresh_n))\E[\rv\one(\rv\geq\thresh_{n})+\frac{1}{\pr}]
&\leq (\profset(\threshopt) - \profset_{n-1}(\thresh_n))\E[\rv\one(\rv\geq\thresh_n)+\frac{1}{\pr}] + \confprofset_{n-1} + \frac{\kappa_6}{n}
\end{align*}

Now remark, using the notation of \cref{lem:profset_estimate} that, thanks to the DKW inequality and the fact that $\thresh_n \in \threshset_{n}$, under the same probability event,
\begin{align*}
(\profset_{n-1}(\threshopt_{n-1})-\profset_{n-1}(\thresh_{n})) \frac{\den(\thresh_n)}{\pr} & = (\profset_{n-1}(\threshopt_{n-1})-\profset_{n-1}(\thresh_{n})) \frac{\den_n(\thresh_n)}{\pr} + (\profset_{n-1}(\threshopt_{n-1})-\profset_{n-1}(\thresh_{n})) \frac{\den(\thresh_n)-\den_n(\thresh_n)}{\pr} \\
& \leq 2 \confprofset_{n-1}+ 2 \confprofset_{n-1}\ubrv \sqrt{\frac{\ln(\frac{4\numstage}{\delta})}{2(n-1)}}.
\end{align*}
The definition of $\threshset_n$ directly bounds the first term. For the second term, note that the definition of  $\threshset_n$ also bounds $\profset_{n-1}(\threshopt_{n-1})-\profset_{n-1}(\thresh_{n})$. Combined with the concentration on $\den_n-\den$, this bounds the second term. It then follows
\begin{align*}
(\profset(\threshopt)-\profset(\thresh_n))\frac{\den(\thresh_n)}{\pr} & \leq (\profset(\threshopt) - \profset_{n-1}(\threshopt_{n-1})) \E[\rv\one(\rv\geq\thresh_{n})+\frac{1}{\pr}] + 3\confprofset_{n-1} + \frac{\kappa_8}{n} \\
&\leq (\profset(\threshopt) - \profset_{n-1}(\threshopt))\E[\rv\one(\rv\geq\thresh_{n-1})+\frac{1}{\pr}] + 3\confprofset_{n-1} + \frac{\kappa_8}{n}  \\
&\leq 4\confprofset_{n-1} + \frac{\kappa_8+\kappa_6}{n} \text{ by \cref{lem:profset_estimate} and since }\threshopt\geq \thresh_{n-1} \text{ by \cref{prop:lower-estimate-threshold}}.
\end{align*}
\end{proof}

\begin{lemma}\label{lem:poisson}
For $\numstage = 2\pr\timeT+1$, we have
\begin{equation*}
\E\left[(\totdur - \numstage)_+\right] \leq 4.
\end{equation*}
\end{lemma}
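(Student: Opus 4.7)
The plan is to reduce the claim to a concentration bound for a Poisson random variable via the same stochastic-domination argument already used in the proof of \cref{th:regret-fullinfo}. Specifically, since each idling time $\exprv_i$ is positive, we have $\mathcal{T}_n \geq \sum_{i=1}^n \exprv_i$ independently of the acceptance decisions. Hence, exactly as in \cref{eq:ub-totdur}, $\totdur - 1 \leq \sup\{n\geq 1 : \sum_{i=1}^n \exprv_i \leq \timeT\}$, and the right-hand side follows a Poisson distribution of parameter $\pr \timeT$. Calling this random variable $N$ and substituting $\numstage = 2\pr \timeT + 1$ yields the pointwise inequality $(\totdur - \numstage)_+ \leq (N - 2\pr\timeT)_+$.

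It therefore suffices to show that $\E[(N - 2\mu)_+] \leq 4$ for $N \sim \text{Poisson}(\mu)$, uniformly in $\mu = \pr\timeT \geq 0$. The clean way to do this is through Cauchy--Schwarz followed by Chebyshev. Writing $(N - 2\mu)_+ \leq (N-\mu)\one(N \geq 2\mu)$, Cauchy--Schwarz gives
\[
\E[(N-2\mu)_+] \leq \sqrt{\E[(N-\mu)^2]}\,\sqrt{\prob(N \geq 2\mu)} = \sqrt{\mu\, \prob(N \geq 2\mu)},
\]
and Chebyshev's inequality bounds $\prob(N \geq 2\mu) \leq \prob(|N - \mu| \geq \mu) \leq \mu / \mu^2 = 1/\mu$, so this is at most $1$. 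For the degenerate case $\mu \leq 1$ (where Chebyshev is vacuous) one simply uses $\E[(N-2\mu)_+] \leq \E[N] = \mu \leq 1$. Combining the two regimes gives $\E[(N-2\mu)_+] \leq 1 \leq 4$.

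There is no serious obstacle here: the domination step is an immediate reuse of \cref{eq:ub-totdur}, and the concentration inequality is a one-line application of classical moment bounds. The constant $4$ in the statement is comfortably loose — even a cruder Markov-type argument would suffice — which explains why the lemma is stated without any attempt at optimization, the only purpose being to treat the tail contribution of $\totdur$ beyond $\numstage$ in subsequent regret computations of \cref{alg:monotone}.
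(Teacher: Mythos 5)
Your proof is correct, and the first half is identical to the paper's: both reduce the claim, via $\mathcal{T}_n \geq \sum_{i=1}^n \exprv_i$ and \cref{eq:ub-totdur}, to bounding $\E[(N-2\mu)_+]$ for $N\sim\mathrm{Poisson}(\mu)$ with $\mu=\pr\timeT$. Where you diverge is in that second step. The paper invokes an exponential tail bound for the Poisson distribution (cited from Canonne), deduces $\prob(N-2\mu\geq t)\leq e^{-(t+\mu)/4}$, and integrates the tail to obtain the constant $4$. You instead use only second-moment information: the pointwise bound $(N-2\mu)_+\leq (N-\mu)\one(N\geq 2\mu)$, Cauchy--Schwarz, and Chebyshev give $\E[(N-2\mu)_+]\leq\sqrt{\mu\,\prob(N\geq 2\mu)}\leq 1$ for $\mu\geq 1$, with the case $\mu\leq 1$ handled by $\E[(N-2\mu)_+]\leq\E[N]=\mu$. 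Your route is more elementary (no Chernoff-type inequality needed, only $\var(N)=\mu$), yields the sharper constant $1$, and generalizes to any nonnegative arrival count with variance at most its mean; the paper's tail-integration is the natural choice when exponential concentration is already on the table and would be the one to reuse if a high-probability (rather than in-expectation) control of $\totdur$ were needed elsewhere. One small caveat: your closing aside that ``even a cruder Markov-type argument would suffice'' is not accurate --- Markov or a single application of Cauchy--Schwarz without the indicator only gives a bound of order $\mu$ or $\sqrt{\mu}$, not a constant --- but this remark is not part of the argument and does not affect its validity.
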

\begin{proof}
Similarly to the proof of \cref{th:regret-fullinfo}, note that $\totdur-1$ is dominated by a random variable following a Poisson distribution of parameter $\pr T$. It now just remains to show that for any random variable $Z$ following a Poisson distribution of parameter $\pr$, it holds that $\E\left[(Z-2\pr)_+ \right] \leq 4$.

Thanks to \cite{canonne2017}, the cdf of $Z$ can be bounded as follows for any positive $x$:
\begin{align*}
\proba(Z \geq \pr + x) & \leq e^{-\frac{x^2}{2(\pr+x)}},
\end{align*}
which implies for $x=\pr + t$ with $t>0$:
\begin{align*}
\proba(Z-2\pr \geq t) & \leq e^{-\frac{(t+\pr)^2}{2(2\pr+t)}} \\
& \leq e^{-\frac{t+\pr}{4}}.
\end{align*}
From there, the expectation of $(Z-2\pr)_+$ can be directly bounded:
\begin{align*}
\E[(Z-2\pr)_+] & \leq \int_{0}^\infty e^{-\frac{t+\pr}{4}} \mathrm{d}t \\
& \leq 4.
\end{align*}
This concludes the proof.
\end{proof}

\begin{proof}[Proof of \cref{th:regret-profitability-function}]
Here we have
\begin{align*}
\totdur = \min\{n\in\N\,\vert\, \sum_{i=1}^n \exprv_i + \rv_i\one\left(\rv_i\geq \thresh_i\right)> \timeT\}.
\end{align*}
\cref{alg:monotone} yields the regret
\begin{align*}
\reg(\timeT) 
&= \profopt \timeT - \E\left[\sum_{i=1}^{\totdur} \rew(\rv_i)\one(\rv_i\geq \thresh_i)\right]\\
&= \profopt\timeT - \E\left[\sum_{i=1}^{\totdur} \E\left[\rew(\rv_i)\one(\rv_i\geq \thresh_i)\right]\right] \text{ by Wald's equation}\\
&= \profopt\timeT - \E\left[\sum_{i=1}^{\totdur} \profset(\thresh_i)\E\left[\rv_i\one(\rv_i\geq \thresh_i) + \exprv_i\right]\right]\\
&\leq \E\left[\sum_{i=1}^{\totdur} (\profopt-\profset(\thresh_i))\E\left[\rv_i\one(\rv_i\geq \thresh_i)+ \exprv_i\right]\right].
\end{align*}
Note that conversely to the previous sections, when using Wald's equation the expectation here is taken conditionally to threshold $\thresh_i$. 
Bounding separately the first two terms and the whole sum for small probability events, \cref{prop:regret-monotone} yields
\begin{align*}
\reg(\timeT) 
&\leq \sum_{n=3}^{\numstage} \left( 4\confprofset_{n-1} + \frac{\kappa_7}{n} \right) + 2(\ubrew-\lbrew) + (\ubrew-\lbrew)(\numstage\delta + \E\left[(\totdur - S)_+\right]) \\
&\leq 8\left(\sqrt{\sigma^2 + \frac{(\ubrew - \lbrew)^2}{4}} + \frac{\ubrew-\lbrew}{\sqrt{2}}(\pr\ubrv+2)\right)\sqrt{S\ln\left(\frac{2(\numstage+1)}{\delta}\right)} \\
& \phantom{\leq} + (4\pr(\ubrew-\lbrew) + \kappa_7) \ln(e\numstage) +(\ubrew-\lbrew)(2+\numstage\delta + \E\left[(\totdur - S)_+\right])
\end{align*}
Using the given values for $S, \delta$ and \cref{lem:poisson} finally yields \cref{th:regret-profitability-function}.
\end{proof}

\subsection{Proofs of \cref{subsec:lower}}\label{app:lowerproof}

\begin{proof}[Proof of \cref{th:lower}] \hfill \\
1) We first prove for all algorithms in \cref{subsec:bandit,,sec:fast}, when the reward function is unknown and noisy observations are observed.

Following the same lines of the proof of \cref{th:regret-bandit}, we first write the regret in a bandit form:
\begin{align} \label{eq:lower1}
\reg(\timeT) & \geq \E\left[\sum_{n=1}^\totdur \E\Big[\big(\rew(\rv_n)-\profopt \rv_n\big)\big(\one(\rew(\rv_n)\geq\profopt \rv_n)-\one(\refuse(n))\big)\Big]\right] - \profopt \ubrv.
\end{align}
Now consider the one arm contextual bandit where, given a context $X$, the normalized reward is $\profopt X$ and the arm returns a noised reward of mean $\rew(X)$. The sum in \cref{eq:lower1} exactly is the regret incurred by the strategy $A$ in this one armed contextual bandit problem, with horizon $\totdur$.

Although $\totdur$ is random and depends on the strategy of the agent, it is larger than $\frac{\pr \timeT}{1+\ubrv}$ with probability at least $\alpha>0$, constant in $T$. Moreover, the one armed contextual bandit problem is easier than the agent problem, as $\profopt$ is known only in the former. Thanks to these two points, $\reg(\timeT)$ is larger than $$\reg(\timeT) \geq \alpha\tilde{\reg}\left(\frac{\pr \timeT}{1+\ubrv}\right)- \profopt \ubrv$$ where $\tilde{\reg}$ is the regret in this one armed contextual bandit problem. 
\cref{th:lower} then follows from classical results in contextual bandits \citep[see, \eg][]{AudTsy:AS2007,rigollet2010nonparametric}.

Note here that we only consider a subclass of all the one armed contextual bandit
problems. Indeed, we fixed the normalized reward to $\profopt \rv$ and the arm reward must satisfy $\profopt = \pr \E[(\rew(\rv)-\profopt\rv)_+]$.
Yet this subclass is large enough and the existing proofs \citep{AudTsy:AS2007,rigollet2010nonparametric} can be easily adapted to this setup.

\medskip

2) It now remains to prove a $\Omega(\sqrt{T})$ bound when the reward function $\rew$ is known. Consider the following setting
\begin{gather*}
\rv = \begin{cases} 1 \text{ with proba } \frac{1}{2} + \eps \\ 
2 \text{ with proba } \frac{1}{2} - \eps\end{cases} \\
\text{and} \quad \begin{cases} r(1)=\frac{1}{2} \\ r(2) = 2.\end{cases}
\end{gather*}
Now consider two worlds where $\eps = \Delta$ in the first one, and $\eps = -\Delta$ in the second one for some $\Delta > 0$. Basic calculations then give the following
\begin{gather*}
\profopt_1 = \frac{1}{2} - \frac{2 \Delta}{5} + o(\Delta) \quad \text{in the first world},\\
\profopt_2 = \frac{1}{2} + \frac{\Delta}{2} + o(\Delta) \quad \text{in the second world}.
\end{gather*}
The optimal strategy then accepts all tasks in the first world, while it only accepts tasks of duration $2$ in the second one. Classical lower bound techniques \citep[see, \eg][Theorem 14.2]{lattimore2020} then show that for $\Delta = \frac{1}{\sqrt{\pr \timeT}}$, with some constant probability and positive constant $\alpha$ both independent from $\timeT$, any strategy
\begin{itemize}
\item either rejects $\alpha \pr \timeT$ tasks $X_t = 1$ in the first world,
\item or accepts $\alpha \pr \timeT$ tasks $X_t = 1$ in the second world after receiving $\pr \timeT$ task propositions.
\end{itemize}

In any case, this means that any strategy has a cumulative regret of order $\sqrt{\timeT}$ in at least one world.\end{proof}
\end{document}